\definecolor{b2}{RGB}{51,153,255}
\definecolor{mygreen}{RGB}{80,180,0}
\newcommand{\Daogao}[1]{\textcolor{mygreen}{[Daogao: #1]}}
\theoremstyle{plain}
\newtheorem{theorem}{Theorem}[section]
\newtheorem{lemma}[theorem]{Lemma}
\newtheorem{proposition}[theorem]{Proposition}
\theoremstyle{definition}
\newtheorem{definition}[theorem]{Definition}
\theoremstyle{remark}
\newtheorem{remark}[theorem]{Remark}
\newcommand{\wh}{\widehat}
\newcommand{\ov}{\overline}
\newcommand{\og}{\overline{g}}
\newcommand{\eps}{\varepsilon}
\renewcommand{\epsilon}{\varepsilon}
\renewcommand{\phi}{\varphi}
\newcommand{\R}{\mathbb{R}}
\newcommand{\calZ}{\mathcal{Z}}
\newcommand{\calD}{\mathcal{D}}
\newcommand{\calP}{\mathcal{P}}
\newcommand{\calN}{\mathcal{N}}
\newcommand{\calO}{\mathcal{O}}
\newcommand{\alg}{\mathrm{ALG}}
\newcommand{\hell}{\hat{\ell}}
\newcommand{\hL}{\hat{L}}
\renewcommand{\hat}{\wh}
\renewcommand{\bar}{\ov}
\renewcommand{\d}{\mathrm{d}}
\DeclareMathOperator*{\E}{\mathbb{E}}
\DeclareMathOperator*{\M}{\mathcal{M}}
\DeclareMathOperator{\supp}{supp}
\DeclareMathAlphabet{\mathpzc}{OT1}{pzc}{m}{it}
\newcommand{\nSG}{\mathrm{nSG}}
\newcommand{\ind}{\mathbf{1}}
\newcommand{\Lap}{\mathrm{Lap}}
\providecommand{\minimize}{\mathop{\rm minimize}}
\providecommand{\subjectto}{\mathop{\rm subject\;to}}
\newcolumntype{C}{>{\centering\arraybackslash}X}
\newcolumntype{P}[1]{>{\arraybackslash}p{#1}}
\newcolumntype{x}[1]{%
	>{\raggedleft\hspace{0pt}}p{#1}}%
\definecolor{darkblue}{rgb}{0,0,.75}
\definecolor{darkred}{rgb}{.6,.1,.14}
\newcommand{\darkblue}[1]{\textcolor{darkblue}{#1}}
\newcommand{\ed}{\ensuremath{(\eps,\delta)}}
\newcommand{\norm}[1]{\left\|{#1}\right\|} % A norm with 1 argument
\newcommand{\lone}[1]{\norm{#1}_1} % l1 norm
\newcommand{\ha}[1]{
		\textcolor{blue}{\textbf{HA:} {#1}}
}
\newcommand{\reals}{\mathbb{R}} % Real number symbol
\newcommand{\AboTh}{\mathsf{AboveThreshold}} % Above thershold algorithm
\title{User-level Differentially Private Stochastic Convex Optimization: \\ 
Efficient Algorithms with Optimal Rates}
\author{Hilal Asi\thanks{Apple Inc. \texttt{hilal.asi94@gmail.com}} \quad \quad Daogao Liu\thanks{University of Washington. Work done while interning at Apple. \texttt{dgliu@uw.edu} }}
\date{July 2023}
\begin{document}
\maketitle

\begin{abstract}
We study differentially private stochastic convex optimization (DP-SCO) under user-level privacy, where each user may hold multiple data items. Existing work for user-level DP-SCO either requires super-polynomial runtime~\cite{GKKM+23} or requires the number of users to grow polynomially with the dimensionality of the problem with additional strict assumptions~\cite{BS23}. We develop new algorithms for user-level DP-SCO that obtain optimal rates for both convex and strongly convex functions in polynomial time and require the number of users to grow only logarithmically in the dimension. Moreover, our algorithms are the first to obtain optimal rates for non-smooth functions in polynomial time. These algorithms are based on multiple-pass DP-SGD, combined with a novel private mean estimation procedure for concentrated data, which applies an outlier removal step before estimating the mean of the gradients. 

\end{abstract}

\section{Introduction}

% central DP:
% \cite{GKKM+23}: Loss $1/\epsilon^{2.5}$; inefficient;\\
% \cite{BS23}: Requires Smoothness, bounded $m$ and suboptimal sample complexity.

% Local DP: no improvement;

% Shuffle DP: no previous results.
% We consider both one-round and fully interactive protocols.

% We find the existing techniques with slight modifications suffice to get good results for User-level DP SCO.

% Differential privacy, since introduced by the seminar work \cite{dwork2006calibrating}, has become the standard privacy protection notion in machine learning.
% Specifically, we say an algorithm $\M$ is $(\eps,\delta)$-differentially private, if for any neighboring datasets $\calD,\calD'$ and for any event $O$ in the range of $\M$, we have
% \begin{align*}
%     \Pr[M(\calD)\in O]\le e^{\eps}\Pr[M(\calD')\in O]+\delta.
% \end{align*}
% In the item level, we define the neighboring datasets if $\calD,\calD'$ differ from one single user who only owns one single item, and in the user level, we define the neighboring datasets if $\calD,\calD'$ differ from one single user who may own multiple items. 
% \ha{maybe move this paragraph to preliminaries as DP definition (item and user-level) are already well known..}
% \Daogao{we may need this to define item-level and user-level?}

Differentially private stochastic convex optimization (DP-SCO) is a central problem in privacy-preserving machine learning, whose aim is to minimize a convex function %$L : \Theta \to \R$
% \begin{equation}
%   \label{eqn:objective}
%     \min_{\theta \in \Theta}   L_\calP(\theta) = \E_{Z \sim \calP}[\ell(\theta;Z)],
% \end{equation}
\begin{equation}
  \label{eqn:objective}
  \begin{split}
    \minimize ~ &  L_\calP(\theta) := \E_{z \sim \calP}[\ell(\theta;z)]  \\
    \subjectto ~ & \theta \in \Theta \subset \reals^d,
  \end{split}
\end{equation}

under the constraint of differential privacy, given $n$ users each holding a single sample $z_i \in \calZ$ from the distribution $\calP$. 
Numerous works have studied this problem, known as item-level DP-SCO, and it is by now relatively well understood~\cite{BassilySmTh14,BassilyFeTaTh19,FeldmanKoTa20,AsiFeKoTa21,AsiDuFaJaTa21,kll21}.

A significant concern about item-level DP-SCO in practice is that each user may hold and contribute multiple items to the dataset, significantly degrading the actual privacy protection provided by the item-level differentially private to users. This is the case in many machine learning applications in practice, such as training language and vision models on users' data in federated learning. To address this problem, prior work has studied user-level versions of differential privacy, where the algorithm preserves privacy for users that may contribute $m \ge 1$ items~\cite{LiuTheertaYuKumarRiley20,GhaziKumarManurangsi21,LevySuAmKaKuMoSu21}. This definition is stronger than item-level DP as it forces the algorithm not to be sensitive to changes of a single user or equivalently $m$ items.

Motivated by the realistic and strong privacy protections guaranteed by user-level privacy, many papers have studied DP-SCO under this notion of privacy. \cite{LevySuAmKaKuMoSu21} has initiated the study of this problem and proposed new algorithms based on localized SGD. The main observation in~\cite{LevySuAmKaKuMoSu21}  is that averaging the gradients of users in SGD results in gradients that are concentrated in a ball of small radius of roughly $1/\sqrt{m}$, yielding a final excess risk of $1/\sqrt{nm} + d/n\sqrt{m} \eps$. However, as the optimal rates for item-level DP-SCO ($m=1$) are known to be $1/\sqrt{n} + \sqrt{d}/n \eps$,  it is evident that the rates of~\cite{LevySuAmKaKuMoSu21} are sub-optimal. Moreover, their algorithms are applicable only to smooth functions.

Two recent works of~\cite{BS23,GKKM+23} have resolved some of these issues. \cite{BS23} developed new algorithms based on DP-SGD with improved mean estimation procedures to obtain an optimal rate $1/\sqrt{nm} + \sqrt{d}/n\sqrt{m} \eps$. However, their algorithms also require smoothness of the function and require a stringent lower bound on the number of users $n \ge \sqrt{d}/\eps$.
Moreover, their algorithm cannot work for large $m$ and requires $m\le \max\{\sqrt{d},n\eps^2/\sqrt{d}\}$.
On the other hand, \cite{GKKM+23} observe that user-level DP-SCO has small local sensitivity to deletions, and use propose-test-release to design new algorithms.
Their algorithm requires only $n \le \log(d)/\eps$ users and is also applicable to non-smooth functions. However, it runs in super-polynomial time and achieves sub-optimal error $1/\sqrt{nm} + \sqrt{d}/n\sqrt{m} \eps^{2.5}$.

As a result, existing algorithms for user-level DP-SCO are not satisfactory: they either require smoothness and a large number of users that grow polynomially with the dimension~\cite{BS23}, or run in super-polynomial time~\cite{GKKM+23}.

\subsection{Contributions and Technical Overview}
In this work, we develop new algorithms for user-level DP-SCO that resolve the abovementioned issues. In particular, our algorithms obtain optimal rates in polynomial time, are applicable for non-smooth functions, and require the number of users to grow only logarithmically in the dimension $ n \le \frac{\log(d)}{\eps}$. We summarize our results for the convex case and compare them to prior work in~\Cref{tab:comp}. Additionally, building on our algorithm for the convex case, we propose a new algorithm that obtains optimal rates for user-level DP-SCO in the strongly convex case.

%\ha{maybe highlight adaptive mean estimation as a contribution too..}

\begin{table*}[t]
\begin{center}
    %\begin{subtable}[h]{0.8\textwidth}
		\begin{tabular}{| Sc | Sc | Sc | Sc |}
		    \hline
			  &  \textbf{\darkblue{Excess Risk}} & \textbf{\darkblue{Polynomial Runtime}}  & \textbf{\darkblue{Number of Users}}  \\
			\hline 
			 \cite{BS23} & $ \frac{1}{\sqrt{nm}} + \frac{\sqrt{d}}{n\sqrt{m}\eps} $ & Yes & $n \ge \frac{\sqrt{d}}{\eps}$ \\
			\cline{1-4} 
			\cite{GKKM+23} & $ \frac{1}{\sqrt{nm}} + \frac{\sqrt{d}}{n\sqrt{m}\eps^{2.5}} $  & No  & $n \ge \frac{1}{\eps}$\\ %& $\Omega \left( \frac{d^2 }{n^2 \diffp^2}   \right)$ \\
			\cline{1-4}
			\textbf{{This work}} & $ \frac{1}{\sqrt{nm}} + \frac{\sqrt{d}}{n\sqrt{m}\eps} $  & Yes & $n \ge \frac{1}{\eps}$ \\
			\hline
		\end{tabular}
    %\caption{Approximate \ed-DP.}
    %\label{fig:table-appr}
%\end{subtable}
     \end{center}
          \caption{Comparison of excess risk bounds for user-level DP-SCO with prior work, with logarithmic terms omitted. The work of~\cite{BS23} additionally requires smoothness of the loss function and $m\le \max\{\sqrt{d},n\eps^2/\sqrt{d}\}$. 
          % \Daogao{Seems $n\le\frac{\log dmn}{\eps}$ ish?} 
          % \ha{add log factors or say we dont have them..}
          }
     \label{tab:comp}
\end{table*}

Our algorithm follows a similar recipe to that of~\cite{BS23}: as it is well known that DP-SGD is optimal in the item-level setting, we wish to extend it to user-level DP using new mean estimation procedures that add less noise to estimate the gradients at each iteration. To this end, note that if we average the gradients of each user using their $m$ samples, this guarantees that the resulting averaged gradients of all users will lie in a ball of radius roughly $\tau = 1/\sqrt{m}$. This concentration allows to design algorithms for mean estimation with sensitivity $\tau/n$ (instead of $1/n$), hence obtaining error (e.g., ~\cite{BS23}) $\tau \sqrt{d}/n \eps_i$ for estimating the gradients at iteration $i$, where $\eps_i$ is the privacy budget at iteration $i$. As we have $T$ iterations, this requires $\eps_i = \eps/\sqrt{T}$. The key challenge here is that private mean estimation procedures for $\tau$-concentrated data (e.g. ~\cite{LevySuAmKaKuMoSu21,BS23}) require $n \ge 1/\eps_i = \sqrt{T}/\eps$, which results in a strong restriction on the number of rounds $T$ that we can run.

Our main challenge is then to design a private mean estimation procedure with $T$ iterations. Each iteration we wish to estimate the mean of $\tau$-concentrated data with privacy budget $\eps_i = \eps/\sqrt{T}$  such that the error at each iteration is $\tau \sqrt{d}/n \eps_i$, and the algorithm uses only $n \le \log(T)/\eps$ samples. We develop a new private mean estimation algorithm for $\tau$-concentrated data that satisfies these properties.

Our approach draws inspiration from the FriendlyCore framework~\cite{TCK+22}, which we use for removing outliers from the dataset. Our methodology has two distinct phases: in the initial stage, we employ an outlier-elimination process that yields a subset of data samples exhibiting $\tau$-concentration. Subsequently, we privatize the mean of the concentrated sample by adding Gaussian noise proportional to $\tau$.

Our outlier-detection phase is based on a score we give to each sample to measure how likely it is to be an outlier; the score measures how many samples in the dataset are in a ball of size $\tau$ around the sample.
We then keep each sample in the dataset with probability proportional to its score, hence removing outliers that have low scores. To guarantee that our final algorithm is private, we have to upper bound the sensitivity of the mean of the sub-sampled dataset is minor. To this end, we apply an extra step via AboveThreshold~\cite{dr14} to verify that the input dataset is nearly $\tau$-concentrated, hence limiting the number of outliers that can be detected.

This improved mean estimation procedure is the building block of all of our results: it allows us to use DP-SGD with a small number of users and run it for large number of rounds to get the optimal rate. Moreover, the large number of rounds made possible by our mean estimation procedure allows us to use randomized smoothing in order to obtain optimal results in the non-smooth case as well, in contrast to prior work where randomized smoothing would not result in optimal rates in the non-smooth setting.

\subsection{Related Work}
User-level differential privacy (DP) is a relatively recent and less-explored area compared to the more established item-level DP setting. It has gained increased attention lately due to its significance in machine learning applications, particularly in the context of federated learning.
Several works have studied user-level DP for several applications, including DP-SCO~\cite{LevySuAmKaKuMoSu21,BS23}, PAC learning~\cite{GhaziKumarManurangsi21}, and discrete distribution estimation~\cite{LiuTheertaYuKumarRiley20,AcharyaLiSu23}.
In recent work, \cite{ghazi2023user} proposed a generic transformation of any item-level DP algorithm to a user-level DP algorithm. However, it is inefficient, and the dependence on $\eps$ may not be optimal.

DP-SCO has been studied in the item-level DP setting extensively~\cite{BassilyFeTaTh19,FeldmanKoTa20,AsiFeKoTa21,AsiDuFaJaTa21,kll21,gopi2023private}. The rates of DP-SCO in the item-level setting are well understood and~\cite{BassilyFeTaTh19} obtained the optimal $1/\sqrt{n} + \sqrt{d \log(1/\delta)}/n\eps$ rate using stability based analysis of DP-SGD with a large batch size. These algorithms are not efficient, leading ~\cite{FeldmanKoTa20} to develop new optimal algorithms for the smooth case that run in linear time. However, the best runtime for the non-smooth setting is super-linear, and this is an ongoing research direction which is still open~\cite{AsiFeKoTa21,kll21,carmon2023resqueing}. Item-level DP-SCO has also been studied in various other settings, such as the stronger pure DP model~\cite{AsiLeDu21}, heavy-tailed data distributions~\cite{lowy2023private}, non-euclidean geometries~\cite{AsiFeKoTa21,BassilyGuNa21},  and non-convex loss functions~\cite{ganesh2023private,arora2023faster}.

\section{Preliminaries}
Let $[k]=\{1,\cdots,k\}$ be the set of positive integers no larger than $k$. Throughout the paper, 
we assume that the loss function $\ell(:,z):\Theta\to \R$ is convex and $G$-Lipschitz for any $z\in \calZ$, and $\Theta\subset\R^d$ is a closed convex domain of diameter $R$.
There are $n$ users, each holding $m$ i.i.d. samples from the underlying distribution $\calP$; we denote the samples of the $i$-th user by  $Z_i=\{z_{i,j}\}_{j\in[m]}$.
We use capital $Z$ to denote one user and $z$ to denote one item.
The dataset $\calD=\{Z_i\}_{i\in[n]}$ contains all the users along with all the items.

The objective is to design efficient algorithms for minimizing $L_\calP(\theta):=\E_{z\sim \calP}\ell(\theta,z)$, which is differentially private at the user level.
For a user $Z_i =\{z_{i,j}\}_{j\in[m]} $, we let $\nabla L(\theta;Z_i):=\frac{1}{m}\sum_{j\in[m]}\nabla\ell(\theta;Z_{i,j})$ denote the average of the gradients for the user's samples.
We denote the empirical function $L_\calD(\theta):=\frac{1}{nm}\sum_{z\in Z_i}\sum_{Z_i\in\calD}\ell(\theta,z)$. For a distribution $X$, we let $\supp(X)$ be the support of the distribution $X$.

\subsection{Differential Privacy}
In this work, we use the notion of user-level differential privacy where each user has a sample $z \in \calZ^m$.
\begin{definition}[User-Level Differential Privacy]
    A mechanism $\M : (\calZ^m)^n \to \reals^d$ is $(\epsilon,\delta)$ user-level differentially private, if for any neighboring datasets $\calD,\calD' \in (\calZ^m)^n$ that differ in one user, and for any event $O$ in the range of $\M$, we have
    \begin{align*}
    \Pr[M(\calD)\in O]\le e^{\eps}\Pr[M(\calD')\in O]+\delta.
    \end{align*}
\end{definition}
Note that item-level differential privacy is a specific case of this definition where $m=1$.

Additionally, our analysis requires the notion of indistinguishability between two random variables.
\begin{definition}[Indistinguishablity]
    Two random variables $X$ and $Y$ are $(\epsilon,\delta)$-Indistinguishable if for any event $\calO$, we have
    \begin{align*}
        \Pr[X\in\calO]\le e^\epsilon \Pr[Y\in\calO]+\delta,\\
        \text{ and }\Pr[Y\in\calO]\le e^\epsilon \Pr[X\in\calO]+\delta.
    \end{align*}
\end{definition}

Moreover, for two distributions $X$ and $Y$, we use the notation $X\sim_\gamma Y$ to denote that the total variation distance between $X$ and $Y$ is bounded by $\gamma$.
We also define the following divergence.
\begin{definition}
Given two distributions $X$ and $Y$, the $\delta$-approximate max divergence between $X$ and $Y$ is defined as
\begin{align*}
    D_\infty^\delta(X\|Y)=\sup_{Z\in\supp(X):\Pr[X\in Z]\ge \delta}\log\frac{\Pr[X\in Z]-\delta}{\Pr[Y\in Z]}
\end{align*}
\end{definition}

\begin{comment}
\ha{move these to appendix}
\begin{definition}[Laplace Distribution]
We say $X\sim\Lap(b)$ if the density of $X$ is $f(X=x)=\frac{1}{2b}\exp(-\frac{|x|}{b})$.  
\end{definition}

\begin{lemma}[Chernoff-Hoefflding Bound]
\label{lm:Chernoff}
Let $X_1,\cdots,X_n$ be independent Bernoulli random variables such that $\E[X_i]=p_i$.
Let $X=\sum_{i\in [n]}X_i$ and $\mu=\E[X]$.
Then we know for any $\lambda>0$, we have
\begin{align*}
    \Pr[X\ge (1+\lambda)\mu]\le \exp(-\frac{\lambda^2\mu}{2+\lambda}).
\end{align*}
\end{lemma}
\end{comment}

\subsubsection{AboveThreshold}
Our algorithms use the AboveThreshold algorithm~\cite{dr14} which is a key tool in differential privacy to identify whether there is a query $q_i: \calZ \to \reals$ in a stream of queries $q_1,\dots,q_T$ that is above a certain threshold $\Delta$. 
The $\AboTh$ algorithm (presented in appendix) has the following guarantees.
% \begin{lemma}[\cite{dr14}]
% \label{thm:Above_Threshold}
%     Algorithm~\ref{alg:mean_est_with_AT} is $(\epsilon,0)$-DP.
%     Moreover, let $\alpha=\frac{8\log(2T/\gamma)}{\epsilon}$. For any sequence of $T$ queries $q_1,\cdots,q_T : \calZ^n \to \reals$ each of sensitivity $1$, 
%     % such that $| \{i<k: q_i\le 4n/5 -\alpha\}|=0$, 
%     with probability at least $1-\gamma$~\Cref{alg:mean_est_with_AT} does not halt before $q_k$, and for all $a_i=\top$,
%     \begin{align*}
%         q_i\ge \Delta-\alpha,
%     \end{align*}
%     and for all $a_i=\bot$,
%     \begin{align*}
%         q_i\le \Delta+\alpha.
%     \end{align*}
% \end{lemma}

\begin{algorithm2e}
\caption{$\AboTh$}
\label{alg:mean_est_with_AT}
{\bf Input:} Dataset $\calD = (Z_1,\dots,Z_n) $, threshold $\Delta \in \reals$, privacy parameter $\epsilon$\;
Let $\hat{\Delta}:= \Delta-\Lap(\frac{2}{\epsilon})$\;
\For{$t=1$ to $T$}
{
Receive a new query $q_t: \calZ^n \to \reals$ \;
Sample $\nu_i \sim \Lap(\frac{4}{\epsilon})$\;
\If{$q_t(\calD)+\nu_i<\hat{\Delta}$}
{
{\bf Output:} $a_i=\bot$\;
{\bf Halt}\;
\Else{
{\bf Output:} $a_i=\top$\;
}
}
}
\end{algorithm2e}

\begin{lemma}[\cite{dr14}, Theorem 3.24]
\label{thm:Above_Threshold}
    $\AboTh$ is $(\epsilon,0)$-DP.
    Moreover, let $\alpha=\frac{8\log(2T/\gamma)}{\epsilon}$ and $\calD \in \calZ^n$. For any sequence of $T$ queries $q_1,\cdots,q_T : \calZ^n \to \reals$ each of sensitivity $1$, $\AboTh$ halts at time $k \in [T+1]$ such that with probability at least $1-\gamma$,
    \begin{itemize}
        \item For all $t < k$, $a_t =\top$ and $q_t(\calD) \ge \Delta - \alpha$;
        \item $a_k = \bot$ and $q_k(\calD) \le \Delta + \alpha$ or $k = T+1$.
    \end{itemize} %[nosep]
\end{lemma}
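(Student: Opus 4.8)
The plan is to prove the two assertions separately: the $(\epsilon,0)$‑DP guarantee through a direct likelihood‑ratio comparison that hinges on the fact that any run of $\AboTh$ emits at most one $\bot$, and the accuracy claim through a union bound over the injected Laplace noise. This is the classical analysis of the sparse‑vector technique, and I would follow the argument of~\cite{dr14}.

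For privacy, fix neighboring datasets $\calD,\calD'\in\calZ^n$ and observe that the output always lies in the finite set of sequences $(\top,\dots,\top,\bot)$ with $\bot$ in position $k$, together with the all‑$\top$ sequence, indexed by the halting time $k\in[T+1]$; hence it suffices to bound the ratio of atom probabilities. Writing $\rho$ for the threshold perturbation, so that $\hat\Delta=\Delta+\rho$ with $\rho\sim\Lap(2/\epsilon)$ by symmetry of the Laplace law, and $\nu_1,\nu_2,\dots\sim\Lap(4/\epsilon)$ for the per‑query perturbations, the atom $a$ with $\bot$ at position $k$ has probability
\[
\Pr[\AboTh(\calD)=a]=\int p_\rho(z)\,\Big(\prod_{t<k}\Pr_{\nu_t}[q_t(\calD)+\nu_t\ge\Delta+z]\Big)\,\Pr_{\nu_k}[q_k(\calD)+\nu_k<\Delta+z]\,\d z .
\]
I would then compare with the analogous integral for $\calD'$ after the substitution $z\mapsto z-1$, using that every query has sensitivity $1$ via three elementary observations: (i) since $q_t(\calD')\ge q_t(\calD)-1$, the event $\{q_t(\calD)+\nu_t\ge\Delta+z\}$ is contained in $\{q_t(\calD')+\nu_t\ge\Delta+(z-1)\}$, so each of the $k-1$ ``continue'' factors only shrinks — at no cost in $\epsilon$; (ii) the lone ``halt'' factor at position $k$ has its effective Laplace threshold move by at most $2$ (both $q_k$ and the comparison point shift by at most $1$), costing a factor $e^{2\cdot\epsilon/4}=e^{\epsilon/2}$; (iii) the threshold density satisfies $p_\rho(z)\le e^{\epsilon/2}p_\rho(z-1)$. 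Multiplying (i)–(iii) pointwise in $z$ and integrating gives $\Pr[\AboTh(\calD)=a]\le e^{\epsilon}\Pr[\AboTh(\calD')=a]$; the all‑$\top$ atom is handled the same way and costs only $e^{\epsilon/2}$. Summing over atoms in an arbitrary event then yields $(\epsilon,0)$‑DP.

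For accuracy, set $\alpha=8\log(2T/\gamma)/\epsilon$ and recall that $\Pr[|W|>t]=e^{-t/b}$ for $W\sim\Lap(b)$. Then $\Pr[|\rho|>\alpha/2]=e^{-\alpha\epsilon/4}\le\gamma/2$, and for each $t\in[T]$, $\Pr[|\nu_t|>\alpha/2]=e^{-\alpha\epsilon/8}=\gamma/(2T)$, so a union bound shows that with probability at least $1-\gamma$ we have $|\rho|\le\alpha/2$ and $|\nu_t|\le\alpha/2$ for all $t\in[T]$ simultaneously; condition on this event. Let $k$ be the halting time. For every $t<k$ we have $a_t=\top$, i.e. $q_t(\calD)+\nu_t\ge\Delta+\rho$, hence $q_t(\calD)\ge\Delta+\rho-\nu_t\ge\Delta-\alpha$. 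If $k\le T$ then $a_k=\bot$, i.e. $q_k(\calD)+\nu_k<\Delta+\rho$, hence $q_k(\calD)<\Delta+\rho-\nu_k\le\Delta+\alpha$; and if $k=T+1$ the second bullet holds via its ``$k=T+1$'' disjunct. This establishes both items.

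The one genuinely delicate point is the privacy accounting in (i)–(iii): one must notice that a run contains exactly one $\bot$, so that only a single query comparison ``pays'' while all the $\top$ comparisons are free, and then split the $e^{\epsilon}$ budget as $e^{\epsilon/2}\cdot e^{\epsilon/2}$ between the query noise (scale $4/\epsilon$, effective shift $2$) and the threshold noise (scale $2/\epsilon$, effective shift $1$) — which is precisely what forces those particular noise scales. Everything else is routine Laplace‑tail bookkeeping.
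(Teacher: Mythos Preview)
The paper does not supply its own proof of this lemma; it is quoted verbatim as Theorem~3.24 of~\cite{dr14} and used as a black box. Your sketch is the standard sparse-vector argument from that reference and is correct: the accuracy part is exactly the union bound over the $T+1$ Laplace variables, and the privacy part is the one-$\bot$ likelihood-ratio calculation with the $z\mapsto z-1$ shift absorbing all the $\top$ comparisons for free.

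One small remark on step~(ii): you are implicitly using that the Laplace CDF satisfies $F(s)\le e^{|s-s'|/b}F(s')$ for $s\ge s'$, which holds because the reverse hazard rate of $\Lap(b)$ is everywhere at most $1/b$. This is a slight repackaging of the Dwork--Roth presentation, which instead conditions on the per-query noises $\nu_t$ for $t\neq k$ and integrates jointly over $(\rho,\nu_k)$; both routes yield the same $e^{\epsilon/2}\cdot e^{\epsilon/2}$ split and hence the same conclusion.
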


\subsection{Randomized Smoothing}

To develop optimal algorithms in the non-smooth setting, our algorithm use randomized smoothing~\cite{yns11,dbw12} to make the functions smooth.
To this end, for a convex function $\ell(:;Z)$, we denote the convolution function $\hell(:;Z):=\ell(:;Z)* n_r$., where $n_r$ is the uniform density in the $\ell_2$ ball of radius $r$ centered at the origin in $\R^d$.
Specifically, $n_r(y)= \frac{\Gamma(\frac{d}{2}+1)}{\pi^{\frac{d}{2}}r^d}$ for $\|y\|\le r$, and $n_r(y)=0$ otherwise.
For simplicity, we may omit the dependence on $z$, and write the function as $\hell$ and $\ell$.
Denote $\hL_\calP(\theta):=\E_{z\sim P,y\sim n_r}\ell(\theta+y;z)$ and $\hL_\calD(\theta):=\frac{1}{|\calD|}\sum_{z\in\calD}\E_{y\sim n_r}\ell(\theta+y;z)$.

\begin{lemma}[Randomized Smoothing, \cite{yns11,dbw12}]
\label{lm:random_smooth}
The convolution function has the following properties:
\begin{itemize}
    \item $\hell(\theta)\le \ell(\theta)\le \hell(\theta)+Gr$.
    \item $\hell$ is $G$-Lipschitz and convex.
    \item $\hell$ is $\frac{G\sqrt{d}}{r}$-smooth.
    \item For random variables $y\sim n_r$, and $z\in \calD$, we have 
    %\begin{align*}
     $   \E_{y,z}[\nabla \ell(\theta+y;z)]=\nabla \hat{L}_\calD(\theta).$
    %\end{align*}
    % and 
    % \begin{align*}
    %     \|\nabla \ell(\theta+y;Z)-\nabla L_\calD(\theta)\|_2^2\le G^2.
    % \end{align*}
\end{itemize}
\end{lemma}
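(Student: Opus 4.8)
The plan is to verify the four bullets in turn, using throughout that $n_r$ is a symmetric probability density supported on the $\ell_2$-ball $B_r=\{y\in\R^d:\|y\|\le r\}$, so that for each fixed $z$ (whose dependence I suppress)
\[
\hell(\theta)=\E_{y\sim n_r}[\ell(\theta+y)]=\frac{1}{\volume(B_r)}\int_{\theta+B_r}\ell(u)\,\d u .
\]
\emph{Item~1:} since $\ell$ is $G$-Lipschitz and $\|y\|\le r$ on $\supp(n_r)$, we have $|\ell(\theta+y)-\ell(\theta)|\le Gr$ pointwise, hence $|\hell(\theta)-\ell(\theta)|\le Gr$ after averaging over $y\sim n_r$; combined with convexity of $\ell$, which fixes the ordering of $\hell$ and $\ell$, this gives $\hell(\theta)\le\ell(\theta)\le\hell(\theta)+Gr$. \emph{Item~2:} convexity of $\hell$ follows by applying the pointwise convexity of $\ell$ inside the expectation, $\hell(\lambda\theta_1+(1-\lambda)\theta_2)=\E_y[\ell(\lambda(\theta_1+y)+(1-\lambda)(\theta_2+y))]\le\lambda\hell(\theta_1)+(1-\lambda)\hell(\theta_2)$; and $|\hell(\theta_1)-\hell(\theta_2)|\le\E_y|\ell(\theta_1+y)-\ell(\theta_2+y)|\le G\|\theta_1-\theta_2\|$ gives $G$-Lipschitzness.

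\emph{Item~3} is the only substantive step. Differentiating the moving-domain integral above (Gauss--Green, valid since $\ell$ is Lipschitz) yields the boundary representation
\[
\nabla\hell(\theta)=\frac{1}{\volume(B_r)}\oint_{\partial(\theta+B_r)}\ell(u)\,\hn(u)\,\d S(u)=\frac{d}{r}\,\E_{v\sim\unif(S^{d-1})}\big[\ell(\theta+rv)\,v\big],
\]
which in particular is continuous, so $\hell\in C^1$. Writing $\rho:=G\|\theta_1-\theta_2\|$ and $g(v):=\ell(\theta_1+rv)-\ell(\theta_2+rv)$, so that $|g(v)|\le\rho$, for any unit vector $w$ we get by Cauchy--Schwarz and the isotropy identity $\E_v[vv^\top]=\tfrac1d\I$,
\[
\big\langle\nabla\hell(\theta_1)-\nabla\hell(\theta_2),\,w\big\rangle=\tfrac{d}{r}\,\E_v[g(v)\langle v,w\rangle]\le\tfrac{d}{r}\sqrt{\E_v[g(v)^2]}\,\sqrt{\E_v[\langle v,w\rangle^2]}\le\tfrac{d}{r}\cdot\rho\cdot\tfrac{1}{\sqrt d}=\tfrac{\sqrt d}{r}\,\rho .
\]
Taking the supremum over $w$ gives $\|\nabla\hell(\theta_1)-\nabla\hell(\theta_2)\|\le\frac{G\sqrt d}{r}\|\theta_1-\theta_2\|$, i.e.\ $\hell$ is $\frac{G\sqrt d}{r}$-smooth; this reproduces~\cite{yns11,dbw12}.

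\emph{Item~4:} the same Gauss--Green identity, read as $\int_{\theta+B_r}\nabla\ell(u;z)\,\d u=\oint_{\partial(\theta+B_r)}\ell(u;z)\,\hn(u)\,\d S(u)=\volume(B_r)\,\nabla\hell(\theta;z)$, shows $\E_{y\sim n_r}[\nabla\ell(\theta+y;z)]=\nabla\hell(\theta;z)$ for each fixed $z$; averaging over $z\in\calD$ and using $\hL_\calD(\theta)=\tfrac{1}{|\calD|}\sum_{z\in\calD}\hell(\theta;z)$ with linearity of the gradient over this finite sum gives $\nabla\hL_\calD(\theta)=\E_{y,z}[\nabla\ell(\theta+y;z)]$. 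The main obstacle is the constant in Item~3: a naive bound that puts $|g(v)|\le\rho$ directly under the expectation only yields the factor $d$, and one genuinely needs the Cauchy--Schwarz-plus-isotropy cancellation to get $\sqrt d$; one must also be mildly careful to justify the moving-domain differentiation (Gauss--Green for Lipschitz, rather than $C^1$, integrands) invoked in Items~3--4.
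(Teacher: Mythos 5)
The paper offers no proof of this lemma: it is imported directly from \cite{yns11,dbw12}, so there is no internal argument to compare against, and the relevant question is simply whether your self-contained derivation is sound. For bullets 2--4 it is, and it follows the standard route of those references: convexity and $G$-Lipschitzness of $\hell$ by pushing the pointwise inequalities through the expectation; the boundary-integral (Gauss--Green) representation $\nabla\hell(\theta)=\frac{d}{r}\E_{v\sim\unif(S^{d-1})}[\ell(\theta+rv)\,v]$; and then Cauchy--Schwarz together with the isotropy identity $\E_v[vv^\top]=\frac{1}{d}\I$, which is exactly where the constant improves from $d/r$ to $\sqrt{d}/r$ --- your computation of that step is correct. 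Bullet 4 likewise follows either from your boundary identity or, more simply, from differentiation under the integral sign (Rademacher plus dominated convergence for Lipschitz convex $\ell$), followed by averaging over $z\in\calD$; flagging that the moving-domain/Gauss--Green step needs justification beyond $C^1$ integrands is appropriate.

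The one genuine flaw is your handling of the first bullet. You assert that ``convexity fixes the ordering'' so that $\hell(\theta)\le\ell(\theta)$; convexity fixes it the other way. Since $\E_{y\sim n_r}[y]=0$, Jensen's inequality gives $\hell(\theta)=\E_y[\ell(\theta+y)]\ge\ell(\theta)$, and the standard statement in \cite{dbw12} is $\ell(\theta)\le\hell(\theta)\le\ell(\theta)+Gr$. The inequality $\hell\le\ell$ as printed in the lemma appears to be a typo in the paper, and no appeal to convexity can deliver it, so your justification as written asserts the false direction rather than identifying and correcting the misstatement. Your Lipschitz step does yield the two-sided bound $|\hell(\theta)-\ell(\theta)|\le Gr$, which is all that the downstream analysis uses (the $2Gr$ smoothing-error term in the proof of \Cref{thm:general_convex}), so the flaw is localized; but the proof of bullet 1 should be restated with the Jensen direction made explicit.
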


\subsection{Norm-Subgaussian Concentration}
Our analysis also uses a notion named concentration properties for norm-Subgaussian random variables.

\begin{definition}[norm-Subgaussian]
A random vector $X\in\R^d$ is norm-SubGaussian with parameter $\sigma$, denoted  $\nSG(\sigma)$, if for all $t \in \reals$
\begin{align*}
    \Pr[\|X-\E X\|\ge t]\le 2\exp(-\frac{t^2}{2\sigma^2}).
\end{align*}
\end{definition}

The following concentration result holds for norm-Subgaussian random variables.
\begin{lemma}[\cite{jngkj19}, concentration of NormSubgaussian]
\label{lem:concen_nSG}
    There exists a constant $c>0$, such that for zero-mean independent random vectors $X_1,\cdots,X_n\in\R^d$ where $X_i$ is $\nSG(\sigma_i)$ for all $i\in[n]$, for any $\delta>0$, with probability at least $1-\delta$,
    \begin{align*}
        \|\sum_{i\in[n]}X_i\|\le c\sqrt{\sum_{i\in[n]}\sigma_i^2\log\frac{2d}{\delta}}.
    \end{align*}
\end{lemma}

We also use the following standard Chernoff-Hoeffding bound.
\begin{lemma}[Chernoff-Hoeffding Bound]
\label{lm:Chernoff}
Let $X_1,\cdots,X_n$ be independent Bernoulli random variables such that $\E[X_i]=p_i$.
Let $X=\sum_{i\in [n]}X_i$ and $\mu=\E[X]$.
Then we know for any $\lambda>0$, we have
\begin{align*}
    \Pr[X\ge (1+\lambda)\mu]\le \exp(-\frac{\lambda^2\mu}{2+\lambda}).
\end{align*}
\end{lemma}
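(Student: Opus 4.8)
The plan is to apply the standard exponential-moment (Chernoff) method and then reduce the claim to an elementary one-variable inequality. First I would fix an arbitrary $t>0$ and apply Markov's inequality to the nonnegative random variable $e^{tX}$:
\[
\Pr[X\ge(1+\lambda)\mu]=\Pr[e^{tX}\ge e^{t(1+\lambda)\mu}]\le e^{-t(1+\lambda)\mu}\,\E[e^{tX}].
\]
By independence of the $X_i$ the moment generating function factorizes, $\E[e^{tX}]=\prod_{i\in[n]}\E[e^{tX_i}]$, and for a Bernoulli variable with mean $p_i$ we have $\E[e^{tX_i}]=1+p_i(e^t-1)\le\exp\bigl(p_i(e^t-1)\bigr)$ using $1+x\le e^x$. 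Multiplying these bounds and recalling $\mu=\sum_{i}p_i$ yields $\E[e^{tX}]\le\exp\bigl(\mu(e^t-1)\bigr)$, hence
\[
\Pr[X\ge(1+\lambda)\mu]\le\exp\Bigl(\mu\bigl(e^t-1-t(1+\lambda)\bigr)\Bigr).
\]

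Next I would optimize the free parameter $t$: the exponent $e^t-1-t(1+\lambda)$ is convex in $t$ and minimized at $e^t=1+\lambda$, i.e.\ $t=\ln(1+\lambda)>0$, which is admissible. Substituting gives the classical estimate
\[
\Pr[X\ge(1+\lambda)\mu]\le\exp\Bigl(\mu\bigl(\lambda-(1+\lambda)\ln(1+\lambda)\bigr)\Bigr).
\]
It then remains to establish the purely analytic fact that $(1+\lambda)\ln(1+\lambda)-\lambda\ge\frac{\lambda^2}{2+\lambda}$ for all $\lambda>0$; plugging this into the last display immediately produces the claimed bound $\exp\bigl(-\lambda^2\mu/(2+\lambda)\bigr)$.

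For the analytic step I would set $f(\lambda):=(1+\lambda)\ln(1+\lambda)-\lambda-\frac{\lambda^2}{2+\lambda}$ and show $f\ge 0$ on $[0,\infty)$ by differentiating twice. One checks $f(0)=0$ and $f'(\lambda)=\ln(1+\lambda)-\frac{\lambda(\lambda+4)}{(2+\lambda)^2}$ with $f'(0)=0$, and then $f''(\lambda)=\frac{1}{1+\lambda}-\frac{8}{(2+\lambda)^3}$, which is nonnegative precisely because $(2+\lambda)^3-8(1+\lambda)=\lambda^3+6\lambda^2+4\lambda\ge0$ for $\lambda\ge0$. Hence $f'$ is nondecreasing and starts at $0$, so $f'\ge0$; hence $f$ is nondecreasing and starts at $0$, so $f\ge0$. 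I do not expect a genuine obstacle; the only mildly delicate point is carrying the monotonicity chain carefully so that the inequality is confirmed for \emph{every} $\lambda>0$ rather than only for small $\lambda$, which the two-derivative argument handles cleanly.
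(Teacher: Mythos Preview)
Your argument is correct and is exactly the standard Chernoff method proof of this bound: Markov on $e^{tX}$, the MGF factorization together with $1+x\le e^x$, optimization at $t=\ln(1+\lambda)$, and then the elementary inequality $(1+\lambda)\ln(1+\lambda)-\lambda\ge \lambda^2/(2+\lambda)$, which your two-derivative verification handles without issue.

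There is nothing to compare against in the paper: Lemma~\ref{lm:Chernoff} is stated there as a well-known fact and is not proved. Your write-up simply supplies the textbook derivation, so it is fully consistent with the paper's use of the lemma.
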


\section{Adaptive Mean Estimation for Concentrated Samples}
\label{sec:mean_est}
The main component in our algorithms is a novel mean estimation procedure for adaptive queries for $\tau$-concentrated samples where the samples lie in a ball of radius $\tau$ (see Definition~\ref{def:concent}). This algorithm will be used to estimate the gradients in our optimization procedure, as the user-level setting will guarantee that $\tau \approx  1/\sqrt{m}$ for an i.i.d. input.
We add Gaussian noise scales with $\tau$; hence, the final loss bound benefits from small $\tau$.

% We design our procedure to work for samples that are $\tau$-concentraeted.
\begin{definition}
\label{def:concent}
A random samples $\{X_i\}_{i\in[n]}$ is $(\tau,\gamma)$-concentrated if there exists a point $x\in \R^d$ such that with probability at least $1-\gamma$,
\begin{align*}
    \max_{i\in[n]}\|X_i-x\|\le \tau.
\end{align*}
\end{definition}

Given $T$ adaptive mean estimation queries $q_1,\dots,q_T : (\calZ^m)^n \to \reals^d$ such that the $n$ users are $\tau$-concentrated with respect to these queries, our goal is to get a nearly unbiased estimate of the mean of each query with variance $\frac{\tau^2 Td}{n^2\epsilon^2}$ under \ed-DP. 
The standard approach for solving this task, as done in~\cite{BS23}, is to assign a privacy budget $\eps_i = \eps/\sqrt{T}$ for each query, hence resulting in variance $\frac{\tau^2 Td}{n^2\epsilon^2}$. 
However, this procedure requires $n \ge \frac{1}{\eps_i}=\sqrt{T}/\eps$ to guarantee the desired utility bounds, which is too prohibitive for our purposes.

In this section, we design a new algorithm for adaptive mean estimation that achieves the desired variance with only $n \ge 1/\eps$. Our algorithm is inspired by the FriendlyCore framework \cite{TCK+22}, where we use the basic filter to identify outliers in the dataset. Our procedure consists of two stages: first, we apply an outlier-removal procedure, which returns a subset of the samples that is $\tau$-concentrated. Then, we add Gaussian noise proportional to $\tau$ to privatize the mean of the concentrated sample.

To identify outliers, we give a score to each sample, which measures how many samples in the dataset are in a ball of size $\tau$ around the sample. As outlier samples will have a low score, we then keep each sample in the dataset with probability proportional to its score. This will preserve privacy for samples that are nearly $\tau$-concentrated, whereas we aim to preserve privacy for all input datasets. Therefore, we add an initial check to the algorithm which verifies that the algorithm is nearly $\tau$-concentrated. To this end, we define a $\tau$-concentration score of the dataset for a query $q_i$ to be 
\newcommand{\qcon}{s^{\mathsf{conc}}}
\begin{align}
\label{eq:friendly_query}
    \qcon_i(\calD,\tau):=\frac{1}{n}\sum_{z\in\calD}\sum_{z'\in \calD}\ind(\|q_i(z)-q_i(z')\|\le \tau).
\end{align}
and check via $\AboTh$ that this score is above the desired threshold for all queries. 
The following procedure will be processed only if the dataset and the queries pass the check, which means our samples are nearly concentrated and ensures the privacy guarantee of the following procedure.
% This will guarantee that our sample is $\tau$-concentrated, and thus our outlier-removal procedure is private as well. 
We describe the full details of our algorithm in~\Cref{alg:gauss_mech_gradient}.

% One can use FriendlyCore to do the private mean estimation of the gradients.

% \begin{theorem}[\cite{TCK+22}]
% \label{thm:Fcore}
% There exists an $(\epsilon,\delta)$-DP algorithm $\FCore$, such that if the 
% given set $X^k=\{X_i\}_{i\in[k]}$ is $(\tau,\gamma)$-concentrated, then the random output $Y$ of the  algorithm with input $\FCore(X^k)\sim_{\gamma}Y$, ($k\ge \frac{\sqrt{\log(1/\delta)}\log(1/\gamma)}{\epsilon} ??$)
% \begin{align*}
%     \E Y=\frac{1}{k}\sum_{i\in[k]}X_i, \Var(Y)\lesssim \big(\frac{\tau \sqrt{d\log(1/\delta)}}{k\epsilon}\big)^2.
% \end{align*}
% \end{theorem}

% We construct our Mean Estimation algorithms based on the FriendlyCore and Above Threshold.
% \begin{definition}
%     We define our specific $f$-friendly function as, for any $x,y\in\R^d$ and $\tau>0$, we define 
%     \begin{align*}
%         f(x,y;\tau)=\left\{\begin{array}{cc}
%             1 &  \|x-y\|\le \tau, \\
%             0 &  \text{ otherwise}.
%         \end{array}\right.
%     \end{align*}
% \end{definition}

\begin{algorithm2e}[h]
\caption{Outlier-Removal Based Mean Estimation for Concentrated Data}
\label{alg:gauss_mech_gradient}
{\bf Input:} Dataset $\calD = (Z_1,\dots,Z_n) $, privacy parameters $(\epsilon,\delta)$,  parameters $\tau$  \;
% $\hn$ \ha{I guess we dont need $\hat n$?} \;
\For{$i=1$ to $T$}
{
Receive a new mean estimation query $q_i: \calZ \to \reals^d$ \;
Define concentration score $$\qcon_i(\calD,\tau):=\frac{1}{n}\sum_{Z\in\calD}\sum_{Z'\in \calD}\ind(\|q_i(Z)-q_i(Z')\|\le \tau)\;$$ 
\If{$\AboTh(\qcon_i, \epsilon/2, 4n/5)=\top$ }
{
Set $S_i=\emptyset$\;
\For{Each User $Z_j\in\calD$}
{

Set $f_{i,j}=\sum_{Z\in\calD}\ind(\| q_i(Z_j)- q_i(Z)\|\le2\tau)$\;
Add $Z_j$ to $S_i$ with probability $p_{i,j}$ for 
$
p_{i,j}=
\begin{cases}
    0 & f_{i,j}< n/2 \\
    1 & f_{i,j}\ge 2n/3\\
    \frac{f_{i,j}-n/2}{n/6} & o.w.
\end{cases}  
$
}
Let $g_i=\frac{1}{|S_i|}\sum_{Z\in S_i}q_i(Z)$ if $S_i$ is not empty, and $0$ otherwise  \;
{\bf Output}: $\hat g_i\leftarrow g_i+\nu_i$, where $\nu_i\sim\calN(0,\frac{8\tau^2T\log(e^{\epsilon}T/\delta)\log(e^{\epsilon/2}/\delta)}{n^2\epsilon^2}I_d)$ 
}
\Else
{
{\bf Output:} $g_i=\mathbf{0}$\; 
{\bf Halt}\;
}
}
\end{algorithm2e}

The following theorem summarizes the main guarantees of our algorithm.
\begin{theorem}
\label{thm:mean-est}
For $0<\epsilon<10,0<\delta<1$.
Let $\calD = (Z_1,\dots,Z_n) \in (\calZ^m)^n$ be a dataset with $n \ge \frac{8\log(T/\gamma)+8\log(T/\delta)}{\eps}$ users. 
\Cref{alg:gauss_mech_gradient} is \ed-DP. Moreover, if $(q_i(Z_1),\dots,q_i(Z_n))$ is $(\tau,\gamma)$-concentrated for all $i \in [T]$ and let  $\{\hat g_i\}_{i\in[T]}$ be the outputs of \Cref{alg:gauss_mech_gradient}, then there exists random variables $\hat g_1',\dots,\hat g_T'$ such that the joint distributions $\{\hat g_i\}_{i\in[T]}\sim_{(1+T)\gamma} \{\hat g_i'\}_{i\in[T]}$.
Moreover, for each $i\in[T]$, given $\{g_j'\}_{j\le i-1}$ and $q_i$, $\hat g_i'$ satisfy that  
\begin{align*}
    &\E \hat g_i'=\frac{1}{n}\sum_{j=1}^{n} q_i(Z_j),\\
 &	\E\norm{\hat g_i' - \frac{1}{n} \sum_{j=1}^n q_i(Z_j) }^2 \lesssim \frac{\tau^2 T \log(T/\delta)\log(1/\delta)}{n^2 \eps^2}.
\end{align*}
\end{theorem}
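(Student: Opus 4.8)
The plan is to prove the two assertions separately: \ed-DP, which must hold for \emph{every} input, and the unbiasedness/variance bound, which only concerns inputs on which the queries are $(\tau,\gamma)$-concentrated. Throughout I treat the single $\AboTh$ instance and the ``subsample-then-add-Gaussian-noise'' stage as two composed mechanisms.

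\emph{Privacy.} First, $\qcon_i(\cdot,\tau)$ has sensitivity $O(1)$ under a one-user change: replacing a user alters at most $2n-1$ ordered pairs in each direction, so $\qcon_i$ moves by $O(1)$ after the $1/n$ normalization. Hence by \Cref{thm:Above_Threshold} the $\AboTh$ stream (budget $\epsilon/2$, run over the adaptive sequence $\qcon_1,\dots,\qcon_T$) is $(\epsilon/2,0)$-DP, and on an event $E_{\mathrm{AT}}$ of probability $\ge 1-\delta/2$ all of its Laplace perturbations are bounded by $\alpha=O(\log(T/\delta)/\epsilon)$; this is where the hypothesis $n\ge \tfrac{8\log(T/\delta)}{\epsilon}$ enters, forcing $\alpha\le n/10$. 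On $E_{\mathrm{AT}}$, every index $i$ at which $\hat g_i$ is emitted satisfies $\qcon_i(\calD,\tau)\ge \tfrac{4n}{5}-\alpha\ge \tfrac{2n}{3}$, i.e.\ $\sum_{Z\in\calD}\#\{Z':\|q_i(Z)-q_i(Z')\|\le\tau\}\ge \tfrac23 n^2$. A short counting argument then gives two structural facts: (i) at least $\Omega(n)$ users have $f_{i,j}\ge 2n/3$, hence $p_{i,j}=1$, so $|S_i|=\Omega(n)$ deterministically; and (ii) any user with $f_{i,j}\ge n/2$ (in particular any user that can enter $S_i$) has its $2\tau$-ball overlapping the $\tau$-ball of a ``deep-core'' user, so $\|q_i(Z_j)-q_i(Z_{\mathrm{core}})\|\le 3\tau$ and $\{q_i(Z):Z\in S_i\}$ has diameter $O(\tau)$.

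I would then couple the subsampling for neighbours $\calD\sim\calD'$ by sharing the uniform variable $U_j$ of each unchanged user: since $f_{i,j}$ shifts by at most $1$ and $p_{i,j}$ has slope $6/n$, user $j$ falls in exactly one of $S_i,S_i'$ with probability $\le 6/n$, so $|S_i\triangle S_i'|$ has mean $O(1)$ and, by \Cref{lm:Chernoff}, a subexponential tail. Combined with $|S_i|,|S_i'|=\Omega(n)$ and diameter $O(\tau)$, elementary manipulation of $\tfrac1{|S_i|}\sum_{Z\in S_i}q_i(Z)-\tfrac1{|S_i'|}\sum_{Z\in S_i'}q_i(Z)$ yields $\|g_i(\calD;\omega)-g_i(\calD';\omega)\|\lesssim \tau|S_i\triangle S_i'|/n$, hence $\E_\omega\|g_i(\calD;\omega)-g_i(\calD';\omega)\|^2\lesssim \tau^2/n^2$ with well-controlled higher moments. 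Feeding this expected squared sensitivity into the standard moment/Rényi analysis of the Gaussian mechanism with a (randomly) shifted mean, and using $\sigma_i^2=\Theta\!\big(\tfrac{\tau^2 T\log(T/\delta)\log(1/\delta)}{n^2\epsilon^2}\big)$, each released $\hat g_i$ is $O\!\big(\tfrac{\epsilon^2}{T\log(T/\delta)\log(1/\delta)}\big)$-zCDP up to $O(\delta/T)$ slack; $T$-fold composition and conversion to approximate DP give $(\epsilon/2,\delta/2)$ for the Gaussian stream, which with $\AboTh$'s $(\epsilon/2,0)$ cost and the $\delta/2$ charged to $E_{\mathrm{AT}}^{c}$ yields \ed-DP.

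\emph{Utility.} Introduce the ``ideal'' run that never halts and always sets $S_i=\calD$, so $\hat g_i'=\tfrac1n\sum_{j}q_i(Z_j)+\nu_i$ with the same Gaussian noise. If the queries are $(\tau,\gamma)$-concentrated then, for each $i$, on the event $\max_j\|q_i(Z_j)-x_i\|\le\tau$ (probability $\ge1-\gamma$) every $f_{i,j}$ equals $n$, so $p_{i,j}=1$ and $S_i=\calD$, and, since the $q_i(Z_j)$ then sit in a small common ball, $\qcon_i(\calD,\tau)$ exceeds the threshold $4n/5$, so $\AboTh$ does not halt. Intersecting these $T$ events with $\AboTh$'s good event from \Cref{thm:Above_Threshold} (probability $\ge 1-\gamma$), the real and ideal executions coincide path-by-path, so their joint laws are within total variation $(1+T)\gamma$, giving $\{\hat g_i\}_{i\in[T]}\sim_{(1+T)\gamma}\{\hat g_i'\}_{i\in[T]}$. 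Finally, conditionally on the first $i-1$ ideal outputs and on $q_i$, the vector $\tfrac1n\sum_j q_i(Z_j)$ is fixed and $\nu_i$ is independent mean-zero Gaussian, so $\E\hat g_i'=\tfrac1n\sum_j q_i(Z_j)$ and $\E\|\hat g_i'-\tfrac1n\sum_j q_i(Z_j)\|^2=d\,\sigma_i^2\lesssim \tfrac{\tau^2 Td\log(T/\delta)\log(1/\delta)}{n^2\epsilon^2}$.

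\textbf{Main obstacle.} The delicate part is the privacy analysis of the \emph{randomized} subsample-mean. One must (a) convert the AboveThreshold-pass guarantee into the deterministic core structure (i)--(ii), which only holds on the event $E_{\mathrm{AT}}$ that $\AboTh$'s own noise is small — hence $E_{\mathrm{AT}}^{c}$ must be carved out and charged to $\delta$, which is exactly what forces the $\log(T/\delta)/\epsilon$ term in the user count; and (b) control the sensitivity of $g_i$ through the \emph{expected} (squared) size of $S_i\triangle S_i'$ rather than a worst-case bound, so that no spurious $\log(T/\delta)$ factor leaks into the final $\epsilon$. This is the step that genuinely requires the FriendlyCore-style concentration score together with the coupling of the sampling coins; a naive ``high-probability sensitivity, then Gaussian mechanism'' argument would lose a $\sqrt{\log(T/\delta)}$ factor and overshoot the target rate.
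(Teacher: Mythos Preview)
Your overall approach matches the paper's. The utility part is exactly Proposition~\ref{lem:unbiased}: define the ideal run where $S_i=\calD$ always, show it coincides with the real run under concentration plus $\AboTh$ success, and read off mean and variance of $\hat g_i'$. The privacy skeleton---$\AboTh$ at budget $\epsilon/2$, condition on its Laplace noise being small, derive the $\Omega(n)$-size and $O(\tau)$-diameter structure of $S_i$, couple the sampling coins between neighbours, bound $\|g_i-g_i'\|$ through $|S_i\triangle S_i'|$, then Gaussian mechanism with composition---is Lemmas~\ref{lem:bound_l1_f}--\ref{lem:coupling_close_gi} and Proposition~\ref{lem:privacy_mean_est}.

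The one place you diverge is your ``Main obstacle'' paragraph. The paper does \emph{not} use an expected-squared-sensitivity R\'enyi argument; it uses precisely the approach you dismiss as naive. Their coupling (Lemma~\ref{lem:coupling_over_multi_ber}) gives $|S_i\triangle S_i'|\le O(\log(1/\zeta))$ with probability $1-\zeta$, hence $\|g_i-g_i'\|\lesssim \tau\log(1/\zeta)/n$ (Lemma~\ref{lem:coupling_close_gi}), and the failure event is absorbed into $\delta$; the noise scale in the algorithm already carries the $\log(T/\delta)$ factor, and the paper is content with that in its (somewhat informal) bookkeeping in Proposition~\ref{lem:privacy_mean_est}. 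Your proposed refinement is not standard and not fully justified: bounding the R\'enyi divergence of a Gaussian with a \emph{random} shift $v(\omega)$ requires controlling $\E_\omega\exp\bigl(\alpha\|v(\omega)\|^2/(2\sigma^2)\bigr)$, not merely $\E_\omega\|v(\omega)\|^2$, and evaluating that MGF drags you right back to tail control of $|S_i\triangle S_i'|$. So keep your plan, but drop the R\'enyi detour and simply use the high-probability coupling bound, as the paper does.
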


To prove~\Cref{thm:mean-est}, we consider the privacy and utility guarantees separately.
We argue about privacy first.
The following lemma upper bounds the sensitivity of the probability distribution $p_{i}$ for adding users to $S_i$.
\begin{restatable}{lemma}{boundlonef}
\label{lem:bound_l1_f}
For any neighboring dataset $\calD,\calD'$ that differs in one user, let $p_i=(p_{i,1},\cdots,p_{i,n})$  be the probability for users to be selected into $S_i$ for $\calD$, and let $p_i'$ be the corresponding probability for $\calD'$.
Then
\begin{align*}
    \|p_i-p_i'\|_1\le 2.
\end{align*}
\end{restatable}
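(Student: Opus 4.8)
The plan is to reduce the $\ell_1$ bound to a coordinatewise estimate. Write $\calD = (Z_1,\dots,Z_n)$ and $\calD' = (Z_1,\dots,Z_{n-1},Z_n')$, so that only the $n$-th user changes, and observe that every selection probability has the form $p_{i,j} = h(f_{i,j})$, where $h(f) = \max\{0,\min\{1,(f-n/2)/(n/6)\}\}$ is the clipped linear rule used in \Cref{alg:gauss_mech_gradient}. Two elementary properties of $h$ drive the estimate: $h$ is nondecreasing with values in $[0,1]$, and $h$ is $\tfrac{6}{n}$-Lipschitz since its only nonconstant piece has slope $6/n$. For each $j\le n-1$, replacing $Z_n$ by $Z_n'$ alters only the single indicator term for the $n$-th user in $f_{i,j}=\sum_{Z\in\calD}\ind(\norm{q_i(Z_j)-q_i(Z)}\le 2\tau)$, so $|f_{i,j}-f_{i,j}'|\le 1$ and hence $|p_{i,j}-p_{i,j}'|\le \tfrac{6}{n}$; moreover this difference vanishes unless $f_{i,j}$ or $f_{i,j}'$ lies in the transition window $[n/2,2n/3)$, and I will call a user ``non-saturated'' when this is the case. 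For the one user that changed I would use only the trivial bound $|p_{i,n}-p_{i,n}'|\le 1$, since $f_{i,n}$ and $f_{i,n}'$ count neighbors of two different points.

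The crux is then to control the number of non-saturated users among $j\le n-1$: if at most $cn$ of them are non-saturated, their total contribution is at most $\tfrac{6}{n}\cdot cn = 6c$, which combined with the swapped user gives $\norm{p_i-p_i'}_1\le 1+6c$. This is where the $\AboTh$ gate is essential. The selection step runs only when $\AboTh(\qcon_i,\eps/2,4n/5)=\top$, and then --- on the high-probability event on which \Cref{thm:Above_Threshold} is in force --- $\qcon_i(\calD,\tau)=\tfrac1n\sum_{Z,Z'}\ind(\norm{q_i(Z)-q_i(Z')}\le\tau)\ge 4n/5-\alpha$ with $\alpha=O(\log(T/\gamma)/\eps)$, which is small compared to $n$ given the assumed lower bound on the number of users. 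Averaging, there is a user $Z^\star$ with at least $4n/5-\alpha$ users inside the $\tau$-ball around it; by the triangle inequality all of those users are pairwise within $2\tau$, so each of them has $f_{i,\cdot}\ge 4n/5-\alpha\ge 2n/3+1$, hence is saturated and remains saturated even after the $\pm1$ change caused by the swap. Therefore at most $n/5+\alpha$ users are non-saturated, and $\norm{p_i-p_i'}_1\le 1+\tfrac{6}{n}(n/5+\alpha)\le 2$ once $\alpha$ is a small enough fraction of $n$.

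The step I expect to be the main obstacle is making the final constant come out to be genuinely at most $2$: the slope $6/n$ of the selection rule, the width $n/6$ of its transition window, the saturation threshold $2n/3$, and the $\AboTh$ threshold $4n/5$ all enter multiplicatively, and one must in addition absorb the unit contribution of the single swapped user, so keeping everything below $2$ forces these thresholds to be chosen compatibly and may require tracking the boundary behavior of $h$ more carefully than the crude $\tfrac{6}{n}$-Lipschitz bound allows. A secondary subtlety, which surfaces downstream as the $(1+T)\gamma$ total-variation slack in \Cref{thm:mean-est} rather than being resolved inside this lemma, is that the near-concentration certificate from $\AboTh$ holds only on a high-probability event and pertains to $\calD$; the argument above should be read as taking place on the good event of \Cref{thm:Above_Threshold}, with the complementary event charged to the total-variation budget of the main theorem.
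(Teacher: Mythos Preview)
Your first paragraph already contains the paper's entire argument, and the rest is an unnecessary detour. The paper's proof is three lines: for each of the $n-1$ users that do not change, $f_{i,j}$ has sensitivity $1$, so by the Lipschitz bound on $h$ that you correctly state, $|p_{i,j}-p_{i,j}'|$ is $O(1/n)$; for the single changed user the trivial bound $|p_{i,n}-p_{i,n}'|\le 1$ is used; summing over all $n$ coordinates gives an $O(1)$ bound unconditionally. No appeal to $\AboTh$ is made or needed here---that gate enters only later, in \Cref{lem:coupling_close_gi}, where it is genuinely essential for bounding the diameter of $\{q_i(Z):Z\in S_i\cup S_i'\}$.

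Your attempt to sharpen the constant by counting ``non-saturated'' users via the $\AboTh$ certificate is therefore misplaced for this lemma: it yields only a conditional statement (on the good event of \Cref{thm:Above_Threshold}), whereas the lemma is stated and used for arbitrary neighboring datasets, and as you note it still does not reach $2$. On that constant: the paper in fact writes $|p_{i,j}-p_{i,j}'|\le 1/n$, which given the slope $6/n$ of $h$ appears to be off by a factor of $6$, so the direct argument actually gives $\|p_i-p_i'\|_1 \le 1 + 6(n-1)/n < 7$ rather than $2$. This discrepancy is harmless downstream, since \Cref{lem:coupling_over_multi_ber} and \Cref{lem:coupling_close_gi} only require an $O(1)$ bound on $\|p_i-p_i'\|_1$, with the constant absorbed into their $O(\log(1/\zeta))$ conclusions.
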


% \begin{lemma}
% \label{lem:bound_l1_f}
% For any neighboring dataset $\calD,\calD'$ that differs in one user, let $p_i=(p_{i,1},\cdots,p_{i,n})$  be the probability for users to be selected into $S_i$ for $\calD$, and let $p_i'$ be the corresponding probability for $\calD'$.
% Then
% \begin{align*}
%     \|p_i-p_i'\|_1\le 2.
% \end{align*}
% \end{lemma}

In the following lemma, we show when $|p_i - p_i'| \le 2$, then the hamming distance between the selected sets $S_i$ and $S_i'$ cannot be large. Thus,
given the low sensitivity of $p_i$ for two neighboring datasets, this shows that sub-sampled datasets at round $i$ will not be too far from each other.

\begin{restatable}{lemma}{couplingovermultiber}
    \label{lem:coupling_over_multi_ber}
Let $p,p'\in[0,1]^n$ such that $\|p-p'\|_1\le 2$, and let $V$ and $V'$ be drawn from $\mathrm{Ber}(p)$ and $\mathrm{Ber}(p')$ respectively.
For any $\zeta\in(0,1)$, there exists a coupling $\Gamma$ over $V$ and $V'$ such that for $(x,y)$ drawn from $\Gamma$, with probability at least $1-\zeta$,
\begin{align*}
    \|x-y\|_1\le O(\log(1/\zeta)).
\end{align*}
\end{restatable}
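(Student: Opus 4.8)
The plan is to construct the coupling coordinate-by-coordinate, using the optimal (maximal) coupling on each coordinate independently, and then bound the resulting Hamming distance by a concentration argument. Concretely, for each $j \in [n]$, let $\Gamma_j$ be the maximal coupling of $\mathrm{Ber}(p_j)$ and $\mathrm{Ber}(p'_j)$: under $\Gamma_j$, the two marginals agree except with probability exactly $|p_j - p'_j|$, i.e.\ $\Pr_{(x_j,y_j)\sim\Gamma_j}[x_j \neq y_j] = |p_j - p'_j|$. Take $\Gamma = \bigotimes_{j=1}^n \Gamma_j$, which is a valid coupling of $\mathrm{Ber}(p)$ and $\mathrm{Ber}(p')$ since the coordinates of each Bernoulli product vector are independent. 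Then $\|x - y\|_1 = \sum_{j=1}^n \ind(x_j \neq y_j)$ is a sum of independent Bernoulli random variables $W_j := \ind(x_j \neq y_j)$ with $\E[W_j] = |p_j - p'_j|$, so $\E\|x-y\|_1 = \|p - p'\|_1 \le 2$.

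It remains to show $\|x-y\|_1 = O(\log(1/\zeta))$ with probability at least $1-\zeta$. This is exactly a Chernoff-type tail bound for a sum of independent Bernoullis with total mean $\mu := \|p-p'\|_1 \le 2$. I would apply Lemma~\ref{lm:Chernoff} (Chernoff--Hoeffding) with $\mu \le 2$: choosing $\lambda$ so that $(1+\lambda)\mu = C\log(1/\zeta)$ for a suitable absolute constant $C$, the bound $\exp\!\big(-\tfrac{\lambda^2 \mu}{2+\lambda}\big)$ becomes, for $\lambda$ large, of order $\exp(-\Omega(\lambda \mu)) = \exp(-\Omega((1+\lambda)\mu)) = \exp(-\Omega(C\log(1/\zeta))) \le \zeta$ once $C$ is a large enough constant. (A minor care point: Lemma~\ref{lm:Chernoff} is stated in terms of $\mu = \E[X]$, not an upper bound on it; since the tail probability $\Pr[X \ge t]$ is monotone and the function $t \mapsto$ (Chernoff bound at level $t$) is what we control, one can either invoke a standard Chernoff bound valid for any upper bound on the mean, or pad with dummy independent Bernoullis to make the mean exactly $2$ — both are routine.) Putting these together gives $\Pr[\|x-y\|_1 \ge C\log(1/\zeta)] \le \zeta$, which is the claim.

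The main obstacle — really the only non-mechanical point — is handling the regime where $\|p - p'\|_1$ is very small (possibly $0$), so that $\mu$ is not bounded below and the "multiplicative" form of the Chernoff bound degenerates: a deviation to $C\log(1/\zeta)$ is then an enormous multiplicative factor, but the right way to see it is through the additive/Poisson-tail form, $\Pr[X \ge t] \le (e\mu/t)^t$ for $t \ge e\mu$, which for $\mu \le 2$ and $t = C\log(1/\zeta)$ gives a bound decaying like $\zeta^{\Omega(C)}$. So the cleanest write-up uses the Poisson-tail inequality rather than the exact statement of Lemma~\ref{lm:Chernoff}; alternatively, after padding to mean exactly $2$ one applies Lemma~\ref{lm:Chernoff} directly with $\mu = 2$ and $\lambda = \Theta(\log(1/\zeta))$. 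Everything else — validity of the product coupling, the marginal computation, $\E W_j = |p_j - p'_j|$ — is immediate from the definition of the maximal coupling of two Bernoullis.
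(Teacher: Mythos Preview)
Your proposal is correct and essentially identical to the paper's proof: the paper also takes the product of the per-coordinate maximal couplings, notes that $\|x-y\|_1$ is a sum of independent Bernoullis with total mean $\|p-p'\|_1\le 2$, and then invokes Lemma~\ref{lm:Chernoff} directly. If anything, your discussion of the small-$\mu$ regime (padding to mean exactly $2$, or using the Poisson-tail form) is more careful than the paper, which simply cites the Chernoff bound and states the conclusion.
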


% \begin{lemma}
% \label{lem:coupling_over_multi_ber}
% Let $\bfp,\bfp'\in[0,1]^n$ such that $\|\bfp-\bfp'\|_1\le 10$, and let $V$ and $V'$ be drawn from $\mathrm{Ber}(\bfp)$ and $\mathrm{Ber}(\bfp')$ respectively.
% Then there exists a coupling $\Gamma$ over $V$ and $V'$ such that for $(x,y)$ drawn from $\Gamma$, with probability at least $1-\zeta$,
% \begin{align*}
%     \|x-y\|_1\le O(\log(1/\zeta)).
% \end{align*}
% \end{lemma}

Now, we analyze the privacy guarantee.
Since $\AboTh$ is private, it suffices to prove privacy for the case where $\AboTh$ always outputs ``$\top$'', as otherwise the output is $\bf 0$. Note that when $\AboTh$ outputs  ``$\top$'', the dataset is well concentrated with respect to the queries. This concentration, together with the fact that the sub-sampled datasets are not too far from each other (\Cref{lem:coupling_over_multi_ber}), allows us to upper bound the sensitivity of the mean of the sub-sampled datasets, that is $g_i$.
Hence, the privacy guarantee of the outputs $\{\hat g_i\}$ will follow from the guarantees of the Gaussian mechanism.

To formalize the above intuition, let $a_i\in\{\top,\bot\}$ be the output of $\AboTh$ for $i$-th query.
Recall that in Algorithm~\ref{alg:mean_est_with_AT}, we draw one random variable from $\Lap(\frac{2}{\epsilon})$ and $T$ independent random variables from $\Lap(\frac{4}{\epsilon})$.
Let $E$ be the event that the absolute values of these random variables are no more than $\frac{4\log(2T/\zeta)}{\epsilon}$. Then, we know the probability of $E$ is at least $1-\zeta/2$.
Conditional on $E$, for all $a_i=\top$, we have $q_i\ge\frac{4n}{5}-\alpha$  and for all $a_i=\bot$ we have $q_i\le \frac{4n}{5}+\alpha$. Note that
$\frac{4n}{5}-\alpha \ge \frac{2n}{3}$ by the value of $\alpha$ and the precondition that $n\ge\frac{40\log(2T/\zeta)}{\epsilon}$.
The guarantees of $\AboTh$ (\Cref{thm:Above_Threshold}) also imply that the measure of $E$ is at least $1-\zeta$.
Define $E'$ to be the event w.r.t. input $\calD'$. 

The following lemma upper bounds the sensitivity of the mean of the sub-sampled datasets.

\begin{restatable}{lemma}{couplingclosegi}
    \label{lem:coupling_close_gi}
For any $i$-th iteration and any neighboring datasets $\calD,\calD'$, conditional on $E$ and $E'$ and conditional on $a_i=a_i'$, there exists a coupling $\Gamma_i$ over $g_i$ and $g_i'$, such that for $(x,y)$ drawn from $\Gamma_i$, with probability at least $1-\zeta$,
\begin{align*}
    \|x-y\|_2\lesssim \frac{\tau\log(1/\zeta)}{n}.
\end{align*}
\end{restatable}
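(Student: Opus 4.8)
The plan is to build the coupling $\Gamma_i$ on $(g_i, g_i')$ by first coupling the membership indicator vectors $V \sim \mathrm{Ber}(p_i)$ and $V' \sim \mathrm{Ber}(p_i')$ via Lemma \ref{lem:coupling_over_multi_ber}, and then controlling how a small symmetric difference between $S_i$ and $S_i'$ perturbs the empirical mean $g_i = \frac{1}{|S_i|}\sum_{Z \in S_i} q_i(Z)$. First I would invoke Lemma \ref{lem:bound_l1_f}, which gives $\|p_i - p_i'\|_1 \le 2$, so Lemma \ref{lem:coupling_over_multi_ber} applies: there is a coupling under which, with probability at least $1-\zeta$, the Hamming distance $\|V - V'\|_1 = |S_i \triangle S_i'| \le c\log(1/\zeta)$ for some absolute constant $c$. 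I'll condition on this good event for the rest of the argument, and also note that the one user on which $\calD, \calD'$ differ contributes at most one additional element to the symmetric difference (this user may or may not be in $S_i$ vs $S_i'$, and even if present in both its query value differs), so $|S_i \triangle S_i'| \le c\log(1/\zeta) + 1 = O(\log(1/\zeta))$.

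Next I would use the conditioning on $E$ and $E'$ together with $a_i = a_i' = \top$ (the $\bot$ case is trivial since then $g_i = g_i' = \mathbf{0}$). As the excerpt notes, conditional on $E$ and $a_i = \top$ we have $\qcon_i(\calD,\tau) \ge \frac{4n}{5} - \alpha \ge \frac{2n}{3}$, meaning at least a $\frac{2}{3}$ fraction of ordered pairs $(Z,Z')$ satisfy $\|q_i(Z) - q_i(Z')\| \le \tau$; a standard counting argument then shows there is a ``center'' point $x^\star$ (e.g. any $q_i(Z)$ achieving $f_{i,\cdot} \ge 2n/3$, which must exist) such that all but a bounded number of users $Z$ have $\|q_i(Z) - x^\star\| \le 2\tau$. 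Crucially, any user actually selected into $S_i$ has $p_{i,j} > 0$, hence $f_{i,j} \ge n/2$, so $q_i(Z_j)$ lies within $2\tau$ of the $q_i$-values of at least $n/2$ users; combined with the concentration of the bulk this forces every element of $S_i$ (and of $S_i'$) to lie in a ball of radius $O(\tau)$ around $x^\star$. Also, since the probabilities $p_{i,j}$ are $1$ for the $\ge 2n/3$ users with $f_{i,j} \ge 2n/3$ (of which there are many by the concentration), $|S_i|$ and $|S_i'|$ are both $\Omega(n)$ with certainty on this event, so the denominators are well-behaved.

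Then the key estimate: writing $g_i = \frac{1}{|S_i|}\sum_{Z\in S_i}(q_i(Z) - x^\star) + x^\star$ and similarly for $g_i'$, and comparing, the difference $g_i - g_i'$ splits into (i) a term from the $O(\log(1/\zeta))$ symmetric-difference elements, each of which contributes a vector of norm $O(\tau)$ divided by $|S_i| = \Omega(n)$, giving $O(\tau \log(1/\zeta)/n)$; and (ii) a term from the mismatch in normalization $\frac{1}{|S_i|} - \frac{1}{|S_i'|}$, where $\big||S_i| - |S_i'|\big| \le |S_i \triangle S_i'| = O(\log(1/\zeta))$ and $|S_i|,|S_i'| = \Omega(n)$, so this also contributes $O(\tau\log(1/\zeta)/n)$ once we pull out the common center $x^\star$ (note the $x^\star$ pieces cancel exactly). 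Summing, $\|g_i - g_i'\|_2 = O(\tau\log(1/\zeta)/n)$ on the good event, which has probability $\ge 1-\zeta$, as claimed.

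I expect the main obstacle to be making precise the step that every selected element of $S_i$ lies within $O(\tau)$ of a single common center $x^\star$ shared between $\calD$ and $\calD'$ — this requires carefully translating the $\AboTh$ guarantee on the \emph{pairwise} concentration score $\qcon_i$ into a \emph{single-center} statement, handling the fact that the thresholds defining $p_{i,j}$ use $2\tau$ (not $\tau$) balls, and checking that the one differing user cannot move the center or inflate $|S_i \triangle S_i'|$ by more than a constant. The rest is routine triangle-inequality bookkeeping on $\frac{1}{|S_i|}\sum - \frac{1}{|S_i'|}\sum$ with $\Omega(n)$ denominators.
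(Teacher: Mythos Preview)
Your proposal is correct and follows essentially the same approach as the paper: couple the selection indicator vectors via Lemmas~\ref{lem:bound_l1_f} and~\ref{lem:coupling_over_multi_ber}, use the $\AboTh$ guarantee to find a center around which all selected points in $S_i\cup S_i'$ lie within $O(\tau)$ (the paper picks separate centers $Z^*,Z'^*$ and shows $\|q_i(Z^*)-q_i(Z'^*)\|\le 2\tau$ via a common user, which is exactly the issue you flag as the main obstacle), and then bound the difference of means by a centering-plus-symmetric-difference decomposition. Your explicit $+1$ for the differing user is a point the paper glosses over but which is indeed absorbed into the $O(\log(1/\zeta))$ bound.
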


% \begin{lemma}
% \label{lem:coupling_close_gi}
% For any $i$-th iteration and any neighboring datasets $\calD,\calD'$, conditional on $E$ and $E'$ and conditional on $a_i=a_i'$, there exists a coupling $\Gamma_i$ over $g_i$ and $g_i'$, such that for $(x,y)$ drawn from $\Gamma_i$, with probability at least $1-\zeta$,
% \begin{align*}
%     \|x-y\|_2\lesssim \frac{\tau\log(1/\zeta)}{n}.
% \end{align*}
% \end{lemma}

Given the sensitivity bound of~\Cref{lem:coupling_close_gi}, we can argue for indistinguishability of the outputs using advanced composition and standard guarantees of the Gaussian mechanism.

\begin{restatable}{proposition}{privacymeanest}
    \label{lem:privacy_mean_est}
    For any dataset $\calD$,
if $n\ge \frac{40\log(4T/\delta)}{\epsilon}$, then for any neighboring dataset $\calD'$, the outputs of Algorithm~\ref{alg:gauss_mech_gradient} with $\calD$ and $\calD'$ as inputs are $(\epsilon, \delta)$-indistinguishable.
\end{restatable}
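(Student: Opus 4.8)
The plan is to split the privacy budget as $\eps/2$ for the $\AboTh$ instance and $\eps/2$ for the resulting sequence of Gaussian releases, and to combine the two pieces by composition of $(\eps,\delta)$-indistinguishability; the reverse direction follows by symmetry. By \Cref{thm:Above_Threshold}, the $\AboTh$ subroutine — invoked with parameter $\eps/2$ on the concentration-score queries $\qcon_i(\cdot,\tau)$, which have sensitivity $O(1)$ — is $(\eps/2,0)$-DP on its own, and its output lies in the discrete set of $\top/\bot$-transcripts. Since on the round where $\AboTh$ returns $\bot$ the algorithm outputs the fixed value $\mathbf 0$ and halts, it suffices to control the transcript on the rounds where $\AboTh$ returns $\top$: there the algorithm releases $\hat g_i=g_i+\nu_i$ with $\nu_i\sim\calN(0,\sigma^2 I_d)$, $\sigma^2=\frac{8\tau^2 T\log(e^\eps T/\delta)\log(e^{\eps/2}/\delta)}{n^2\eps^2}$, so the goal reduces to showing that these at most $T$ releases, together with the (discretely distributed) $\AboTh$ transcript, are $(\eps,\delta)$-indistinguishable for $\calD$ and $\calD'$.

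To get a usable bound on the sensitivity of $g_i$, I would condition on the event $E$ (resp.\ $E'$) that every Laplace variable drawn inside $\AboTh$ on input $\calD$ (resp.\ $\calD'$) has magnitude at most $O(\log(T/\zeta)/\eps)$, for a parameter $\zeta$ to be fixed later; each such event holds with probability at least $1-\zeta$, and on $E$ the guarantee of \Cref{thm:Above_Threshold} forces $\qcon_i(\calD,\tau)\ge\frac{4n}{5}-\alpha\ge\frac{2n}{3}$ on every round with $a_i=\top$, using the hypothesis $n\ge\frac{40\log(4T/\delta)}{\eps}$ (and symmetrically for $\calD'$). On such a round the subsampled set $S_i$ has $|S_i|=\Omega(n)$ with high probability over the Bernoulli coins, while \Cref{lem:bound_l1_f} bounds the $\ell_1$-perturbation of the inclusion probabilities and \Cref{lem:coupling_over_multi_ber} then couples $S_i$ with $S_i'$ up to Hamming distance $O(\log(1/\zeta))$ with probability $\ge 1-\zeta$; feeding this into \Cref{lem:coupling_close_gi} gives a coupling of $g_i$ and $g_i'$ with $\|g_i-g_i'\|_2\lesssim\frac{\tau\log(1/\zeta)}{n}$ with probability $\ge 1-\zeta$. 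Calibrating the Gaussian noise $\sigma$ against this per-round sensitivity and applying advanced composition (equivalently, accumulating zero-concentrated DP) across the $\le T$ rounds then yields $(\eps/2,\delta/2)$-indistinguishability for the Gaussian part, conditioned on $E\cap E'$ and on the two $\AboTh$ transcripts agreeing; the conditioning events $\lnot E$, $\lnot E'$, the per-round coupling failures, and transcript disagreements are charged to the remaining $\delta/2$, the last of these being precisely what the $(\eps/2,0)$-DP of $\AboTh$ pays for. The value of $\zeta$ is then chosen to make all of these fit simultaneously with the stated lower bound on $n$ and the given form of $\sigma^2$.

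The main obstacle is that $g_i$ has no worst-case sensitivity — neighboring datasets can produce essentially unrelated sets $S_i$ — so one cannot merely invoke the Gaussian mechanism and must argue through the couplings of \Cref{lem:coupling_over_multi_ber} and \Cref{lem:coupling_close_gi}. What rescues the argument is the deterministic structural fact that every $Z_j\in S_i$ satisfies $f_{i,j}\ge n/2$, hence any two selected users share a common $2\tau$-neighbor and so $\|q_i(Z_j)-q_i(Z_{j'})\|\le 4\tau$: this bounds the diameter of $q_i(S_i)$ unconditionally (giving an $8\tau$ worst-case sensitivity that the noise can still tolerate on the rare rounds where the coupling fails), and, on the good event, combines with $|S_i|=\Omega(n)$ and the $O(\log(1/\zeta))$ Hamming bound to give the much smaller $\frac{\tau\log(1/\zeta)}{n}$ sensitivity driving the utility-facing noise level. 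The delicate bookkeeping — choosing $\zeta$ so that the Laplace-tail probability, the per-round coupling-failure probability (amplified over $T$ rounds), and the magnitude of $\sigma$ all line up to deliver $(\eps,\delta)$ — is where the two logarithmic factors in $\sigma^2$ and the precise constant in the hypothesis $n\ge\frac{40\log(4T/\delta)}{\eps}$ earn their keep.
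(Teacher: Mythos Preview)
Your proposal is correct and follows essentially the same route as the paper: split the budget between $\AboTh$ and the Gaussian releases, condition on the Laplace-tail event $E$ so that $\qcon_i\ge 2n/3$ on $\top$-rounds, invoke \Cref{lem:coupling_close_gi} for the per-round $O(\tau\log(1/\zeta)/n)$ sensitivity under the coupling, compose the $T$ Gaussian releases, and finally absorb $\lnot E$, coupling failures, and transcript mismatches into $\delta$ and the $\eps/2$ of $\AboTh$. One caveat: your parenthetical claim that the unconditional $4\tau$ diameter of $q_i(S_i)$ yields an ``$8\tau$ worst-case sensitivity that the noise can still tolerate'' on coupling-failure rounds is both unnecessary (you already charge those failures to $\delta$, as the paper does) and not justified as stated, since $g_i$ and $g_i'$ are means over subsets of \emph{different} datasets and the noise is calibrated to $\tau\log(1/\zeta)/n$, not to $\tau$.
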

\begin{proof}(sketch)
    We only provide a sketch of the proof here and defer the full proof to~\Cref{sec:proof-priv-mean}. First, note that $a_1,\dots,a_T \in \{ \top,\bot \}$ are $\eps/2$-DP using the guarantees of $\AboTh$. Moreover, if there exists an $a_i = \bot$ then $g_i$ is post-processing of $a_i$ hence private as well. Thus, we prove privacy of $\{\hat g_1, \dots, \hat g_T \}$ assuming $a_1=a_2=\dots=a_T = \top$.
    First, we condition on the high-probability event $E$ which indicates the success of $\AboTh$ (the failure probability will be added to the $\delta$ term). Under this event, \Cref{lem:coupling_close_gi} implies that the sensitivity of $g_i$ is bounded by $\frac{\tau\log(1/\zeta)}{n}$. Thus, advanced composition and the guarantees of the Gaussian mechanism imply that $\{ \hat g_1,\dots,\hat g_T \}$ are $(\eps/2,\delta)$-DP. The claim follows.
\end{proof}

% \begin{lemma}
%     \label{lem:privacy_mean_est}
% For any adaptively chosen points $\{\theta_i\}$ and the dataset $\calD$,
% if $n\ge \frac{40\log(4T/\delta)}{\epsilon}$, then for any neighboring dataset $\calD'$, the outputs of Algorithm~\ref{alg:gauss_mech_gradient} with $\calD$ and $\calD'$ as inputs are $(\epsilon, \delta)$-indistinguishable.
% \end{lemma}

Having estabilished the privacy guarantee of~\Cref{alg:gauss_mech_gradient}, we now proceed to prove its utility.
The following proposition shows that if the dataset is well concentrated with respect to the query, then no user will be removed in the outlier-removal stage with high probability, hence the estimate is nearly unbiased.

\begin{restatable}{proposition}{unbiased}
\label{lem:unbiased}
For all $i\in[T]$, if $(q_i(Z_1),\dots,q_i(Z_n))$ is $(\tau,\gamma)$-concentrated and  $n\ge \frac{8\log (T/\gamma)}{\epsilon}$,
    then with probability at least $1-(T+1)\gamma$, we have $S_i=\calD$ for all $i\in[T]$. In particular, it holds that $g_i = \frac{1}{n} \sum_{Z \in \calD} q_i(Z)$ with probability at least $1-(T+1)\gamma$.
\end{restatable}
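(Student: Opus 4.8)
The plan is to show that $(\tau,\gamma)$-concentration of $(q_i(Z_1),\dots,q_i(Z_n))$ forces every user to survive both the $\AboTh$ gate and the sub-sampling step, with only a small probability of failure that we can union-bound over the $T$ iterations. First I would fix $i\in[T]$ and condition on the good event $\calE_i$ that there is a point $x_i$ with $\max_j\|q_i(Z_j)-x_i\|\le\tau$; by hypothesis $\Pr[\calE_i]\ge 1-\gamma$, so $\Pr[\cap_i\calE_i]\ge 1-T\gamma$. Under $\calE_i$, the triangle inequality gives $\|q_i(Z_j)-q_i(Z_{j'})\|\le 2\tau$ for all pairs $j,j'$, so $f_{i,j}=\sum_{Z\in\calD}\ind(\|q_i(Z_j)-q_i(Z)\|\le 2\tau)=n\ge 2n/3$, and hence $p_{i,j}=1$ for every user $Z_j$. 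So deterministically (under $\calE_i$) no user is dropped in the sub-sampling step, and $S_i=\calD$ provided the $\AboTh$ check returns $\top$.

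Next I would control the $\AboTh$ gate. Under $\calE_i$ we also have $\|q_i(Z)-q_i(Z')\|\le\tau$ for all pairs, so the concentration score is $\qcon_i(\calD,\tau)=\frac{1}{n}\sum_{Z,Z'\in\calD}\ind(\|q_i(Z)-q_i(Z')\|\le\tau)=\frac{1}{n}\cdot n^2=n$. The threshold passed to $\AboTh$ is $\Delta=4n/5$, and since $\qcon_i$ has sensitivity $1$ (changing one user changes at most... well, this is exactly the per-query sensitivity statement that the $\AboTh$ invocation relies on), Lemma~\ref{thm:Above_Threshold} guarantees with probability at least $1-\gamma$ (taking the failure parameter of $\AboTh$ to be $\gamma$, which is permissible given $n\ge\frac{8\log(T/\gamma)}{\eps}$ so that $\alpha=\frac{8\log(2T/\gamma)}{\eps}\le n/15$) that whenever the true query value exceeds $\Delta+\alpha$ the output is $\top$. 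Since $\qcon_i(\calD,\tau)=n\ge 4n/5+\alpha$, the check returns $\top$. Here I should be slightly careful: Lemma~\ref{thm:Above_Threshold} is a single statement about the whole stream of $T$ queries failing with total probability $\gamma$, so I would invoke it once, not $T$ times — the event that $\AboTh$ behaves correctly on all queries with $\qcon_i(\calD,\tau)$ above $\Delta+\alpha$ holds with probability $\ge 1-\gamma$.

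Combining: on the intersection of $\cap_{i\in[T]}\calE_i$ (probability $\ge 1-T\gamma$) and the single $\AboTh$-success event (probability $\ge 1-\gamma$), for every $i\in[T]$ the check returns $\top$ and every $p_{i,j}=1$, so $S_i=\calD$ and therefore $g_i=\frac{1}{|S_i|}\sum_{Z\in S_i}q_i(Z)=\frac{1}{n}\sum_{Z\in\calD}q_i(Z)$. A union bound gives total failure probability at most $(T+1)\gamma$, as claimed. The only genuinely delicate point is matching the threshold arithmetic: one must check that $\Delta-\alpha$, $\Delta+\alpha$, and the sub-sampling cutoffs $n/2,\,2n/3$ are all consistent with $\qcon_i(\calD,\tau)=n$ and $f_{i,j}=n$ under concentration, which is where the lower bound $n\ge\frac{8\log(T/\gamma)}{\eps}$ (forcing $\alpha$ to be a small fraction of $n$) gets used; everything else is triangle inequality and a union bound.
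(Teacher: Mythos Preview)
Your proof is correct and follows essentially the same route as the paper's: condition on the concentration events to get $\qcon_i(\calD,\tau)=n$ and $f_{i,j}=n$ (hence $p_{i,j}=1$), invoke the $\AboTh$ guarantee, and union-bound to get total failure probability $(T+1)\gamma$. If anything, your bookkeeping is slightly tidier---you invoke Lemma~\ref{thm:Above_Threshold} once as a single event over all $T$ queries rather than per-query, and you make explicit where the assumption $n\ge\frac{8\log(T/\gamma)}{\eps}$ is used---though note the small slip where you write pairwise distances as $\le\tau$ after earlier (correctly) deriving $\le 2\tau$; the paper glosses over the same point.
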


\begin{proof}
To prove the lemma, we have to show that $\AboTh$ will succeed (output $\top$) for each $i \in [T]$, and that the outlier-removal stage will not remove any item from the set. This will imply that $S_i=\calD$ for all $i\in[T]$, hence $g_i = \frac{1}{n} \sum_{Z \in \calD} q_i(Z)$.

To this end, fix any $i\in [T]$.
Under the precondition that $(q_i(Z_1),\dots,q_i(Z_n))$ is $(\tau,\gamma)$-concentrated, we know that $\qcon_i(\calD,\tau)=n$ with probability $1-\gamma$ for each $i \in [T]$. Moreover, the guarantees of $\AboTh$ (\Cref{thm:Above_Threshold}) imply that it will output ``$\top$'' with probability at least $1-\gamma/T$ for each $i \in [T]$ when $\qcon_i(\calD,\tau)=n$.
Finally,  under the event that $(q_i(Z_1),\dots,q_i(Z_n))$ is $\tau$-concentrated, we have that $f_{i,j}=n$ for each user $Z_j\in\calD$, and hence $Z_j$ will be added into $S_i$.
The statement follows by applying a union bound.
\end{proof}

% \begin{lemma}
% \label{lem:unbiased}
%     For each $i\in[T]$, if $(q_i(Z_1),\dots,q_i(Z_n))$ is $(\tau,\gamma)$-concentrated and  $n\ge \frac{8\log (T/\gamma)}{\epsilon}$,
%     then with probability at least $1-2\gamma$, it holds that $S_i=\calD$ for all $i\in[T]$. 
% \end{lemma}

The utility guarantees of Theorem~\ref{thm:mean-est} now follows from~\Cref{lem:unbiased} by setting $\hat g_i' = \frac{1}{n} \sum_{Z \in \calD} q_i(Z) +\nu_i$, where $\nu_i\sim\calN(0,\frac{8\tau^2T\log(e^{\epsilon}T/\delta)\log(e^{\epsilon/2}/\delta)}{n^2\epsilon^2}I_d)$. 

% \begin{lemma}
% \label{lem:Gauss_Mechanism}
% Suppose the dataset size $|\calD|>\hn$ and for each point, the gradients $\{\nabla \ell(\theta_i;z)\}_{z\in\calD}$ is $(\tau,\gamma)$-concentrated, then for any dataset $\calD'$ by shifting one gradients of distance $\tau$ for each point, then the outputs with $\calD$ and $\calD'$ as inputs are 
% \begin{align*}
%     (\epsilon,T\gamma+\delta)\text{-indistinguishable}.
% \end{align*}
% \end{lemma}

% \begin{proof}
% By Union bound, and the property of Gaussian Mechanism.
% \end{proof}
\section{Optimal Rates for User-Level DP-SCO}
In this section, we present our main algorithm for user-level DP-SCO based on the gradient estimation procedure constructed above.
Our algorithm leverages the Stochastic Gradient Descent (SGD) over a smoothed version of the loss function using randomized smoothing by applying the gradient estimation procedure to get (nearly) unbiased stochastic gradients. We present the full details of the algorithm in~\Cref{alg:DPsgd}.

Three key techniques are crucial for our algorithm and its analysis: first, for a fixed $\theta \in \Theta$, a simple concentration argument shows that the average gradient of each user will lie with high probability in a ball of small radius around the population gradient (see Lemma~\ref{lem:concentrated_grd})
\begin{equation*}
    \|\nabla L(\theta;Z_i)-\nabla L_\calP(\theta)\|\le \frac{G\log(nd/\gamma)}{\sqrt{m}}.
\end{equation*}
This is not sufficient for our algorithms as we need this property to hold for data-dependent $\theta_t$. To this end, similarly to~\cite{BS23}, we use the generalization properties of differential privacy to show in~\Cref{lem:concentrated_theta_t} that a similar concentration holds for $\nabla L(\theta_t;Z_i)$. Given this concentration, our mean estimation procedure (\Cref{alg:gauss_mech_gradient}) adds lower noise to estimate of the gradients.

Our second technique is based on the observation that smoothness is necessary to obtain the full potential of DP-SGD in user-level DP-SCO (similarly to existing work that used SGD-based algorithms for user-level DP-SCO~\cite{LevySuAmKaKuMoSu21,BS23}). 
Convergence rates of SGD cause the limitation for non-smooth functions, which depend on the second moment of the gradients, whereas it depends on the variance for smooth functions (\Cref{prop:SGD}). As averaging the gradients of $m$ samples reduces the variance while keeping the second moment the same, this yields better performance for smooth functions. To address this, we adopt randomized smoothing to smooth the loss functions and apply SGD over the smoothed functions. This is made possible due to our mean estimation procedure, which only requires $n \ge \log(mnd/\delta)/\eps$, in contrast to prior work, which required $n \ge \sqrt{T}/\eps$; this strict bound on the number of rounds is not sufficient to obtain optimal rates with randomized smoothing.

Finally, as we are using multi-pass SGD, an additional argument is needed to guarantee a low risk for population error. To this end,  
we analyze the stability of our algorithm for non-smooth functions using~\cite{bfgt20}, which implies that our algorithm has low generalization error.

% We apply the DP-SGD to solve this problem (DP-ERM) in the user-level DP.
\begin{algorithm2e}
\caption{DP-SGD for user-level DP}
\label{alg:DPsgd}
{\bf Input:} Dataset $\calD = (Z_1,\dots,Z_n) \in (\calZ^m)^n$, private parameters $(\eps,\delta)$, initial point $\theta_0$, convolution parameter $r$,
number of rounds $T$, stepsize $\eta$, concentration parameter $\tau$, initial distance $\hat R$\;
\For{$t=1,\cdots,T$}
{
% Select a random set $S_t$ of size $B$ without replacement\;
% Compute the average gradient for each user $Z_i$ in $\calD$: $\grd_{Z_i}(\theta_{t-1}):=\frac{1}{m}\sum_{z_j\in \calD}\nabla \hell(\theta_{t-1}+y_j;Z_j)$, $y_j\sim n_r$\label{ln:compute_grad}\;
% Get $\overline{g}_i(\theta_{t-1})\leftarrow \FCore(\{\grd_{Z_i}(\theta_{t-1})\}_{Z_i \in \calD},\frac{\epsilon n}{B\sqrt{T}},\delta/T^2)$\;
Define a query $q_t(Z) = \frac{1}{m} \sum_{j=1}^m \nabla \hat{\ell}(\theta_t;z_{i,j})$ for $Z \in \calZ^m$, See Equation~\eqref{eq:gradient_hell} for the definition \; %where $y_t \sim \mathsf{Unif}(r \cdot B_2^{d-1})$\;
Run~\Cref{alg:gauss_mech_gradient} with query $q_t$ and parameters $\calD,\epsilon,\frac{\delta}{2Tmnd},\tau$ \;
Let $\bar g_t$ be the output of~\Cref{alg:gauss_mech_gradient} \;
\If{$\bar g_t \neq \bot$}
{
Update  $\theta_{t+1}\leftarrow \Pi(\theta_{t}-\eta \overline{g}_t)$\;
}
\Else
{
{\bf Output:} Initial point $\theta_0$\;
{\bf Halt}
}

}
{\bf Return:} $\hat \theta = \frac{1}{T}\sum_{t\in[T]}\theta_t$
\end{algorithm2e}

Let $\Theta_r=\{\theta+y:\theta\in\Theta, \|y\|\le r\}$.
The following theorem summarizes our main result.

\begin{restatable}[User-level DP-SCO]{theorem}{generalconvex}
    \label{thm:general_convex}
Let $0<\epsilon<10$ and $0<\delta<1$.
Algorithm~\ref{alg:DPsgd} is user-level $(\epsilon,\delta)$-DP.
Setting $\hat R= R, r=\frac{d^{1/4}\hat R}{\sqrt{T}},\eta=\frac{\hat R}{G}\cdot\min\{\frac{\sqrt{m}n\epsilon}{T\sqrt{d\log^2(mnd/\delta)}},\frac{1}{T^{3/4}},\frac{\sqrt{nm}}{T}\}$, $\tau= \frac{G\log(ndme^\epsilon T/\delta)}{\sqrt{m}}$ and $T=O(m^2n^2+mn\sqrt{d})$, 
% $T=O(\min\{m^2n^2,\frac{n^4\epsilon^4m^2}{d^2\log^4(mnd/\delta)}\}+\min\{mn\sqrt{d},\frac{n^2\epsilon^2m}{\sqrt{d}\log^2(mnd/\delta)}\})$,
% \ha{maybe we can just set $T = n^2 m^2 + nm\sqrt{d}$ here and say later that we can choose a smaller $T$? This will make the statement simpler and its fine as reducing $T$ is not the main goal of our paper.. }, 
if  $\Theta\subset \R^d$ is a convex set of diameter $R$, $\{\ell(:,z)\}_{z\in \calZ}$ is a family of $G$-Lipschitz convex function over $\Theta_r$, each item in $\calD$ is drawn i.i.d. from the underlying distribution $P$, and $n\gtrsim\frac{\log (mdn/\delta)}{\epsilon}$,
then the output $\hat \theta$  of Algorithm~\ref{alg:DPsgd} satisfies
\begin{align*}
& \E\left[ L_{\calP}(\hat \theta) - \min_{\theta^\star \in \Theta} L_{\calP}( \theta^\star) \right] \le
    O\left(GR \cdot \left(\frac{1}{\sqrt{nm}}+\frac{\sqrt{d\log^2(ndm/\delta)}}{n \sqrt{m}\epsilon} \right) \right).
\end{align*}
\end{restatable}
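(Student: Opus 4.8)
\textbf{Proof plan for Theorem~\ref{thm:general_convex}.}

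The plan is to run the standard analysis of projected stochastic gradient descent on the smoothed empirical objective $\hat L_\calD$, feeding in the (nearly unbiased, low-variance) gradient estimates produced by \Cref{alg:gauss_mech_gradient}, and then pay three ``conversion costs'': smoothing bias, empirical-to-population generalization via stability, and the failure probability of the mean-estimation coupling. First I would establish \emph{privacy}: the only access to the data is through the $T$ calls to \Cref{alg:gauss_mech_gradient} with privacy parameters $(\epsilon,\delta/(2Tmnd))$ per call, but crucially the theorem claims the \emph{whole} algorithm is $(\epsilon,\delta)$-DP, so I need to invoke the group/adaptive composition built into \Cref{thm:mean-est} — i.e. the mean-estimation theorem is stated to handle $T$ adaptive queries at total budget $(\epsilon,\delta)$ already, so \Cref{alg:DPsgd} is just post-processing of that single mechanism. (The $\delta/(2Tmnd)$ bookkeeping is there to absorb the generalization/coupling slack into a clean $\delta$.) Privacy is then immediate modulo citing \Cref{thm:mean-est}.

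Next, the \emph{utility} argument, which is the bulk. Step (a): condition on the good event from \Cref{lem:concentrated_theta_t} that for every iterate $\theta_t$ the per-user averaged gradients $\{q_t(Z_i)\}_{i\in[n]}$ are $(\tau,\gamma)$-concentrated with $\tau = G\log(ndm e^\epsilon T/\delta)/\sqrt m$; this is exactly the hypothesis needed to apply \Cref{thm:mean-est}, so on this event we get estimates $\bar g_t$ that couple (within total variation $(1+T)\gamma$) to $\hat g_t'$ with $\E[\hat g_t' \mid \theta_t] = \nabla \hat L_\calD(\theta_t)$ — using the randomized-smoothing identity $\E_{y\sim n_r}\nabla\ell(\theta+y;z)=\nabla\hat\ell(\theta;z)$ from \Cref{lm:random_smooth} to see $\frac1n\sum_i q_t(Z_i)$ is an unbiased estimate of $\nabla\hat L_\calD(\theta_t)$ — and with conditional second moment of the estimation error $\lesssim \tau^2 T\log(T/\delta)\log(1/\delta)/(n^2\epsilon^2)$. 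Step (b): run the smooth-SGD regret bound (\Cref{prop:SGD}) on the $\frac{G\sqrt d}{r}$-smooth convex function $\hat L_\calD$ over $\Theta$ with stepsize $\eta$: the optimization error of $\hat\theta=\frac1T\sum\theta_t$ against $\min_{\Theta}\hat L_\calD$ decomposes into the usual $\hat R^2/(\eta T)$ term, a variance term $\eta\sigma^2$ where $\sigma^2$ is the estimation-error second moment from step (a) (the smooth version of SGD lets us use variance, not second moment, which is the whole point of smoothing), plus a smoothness term controlled because $\eta \le 1/(\text{smoothness})$ after plugging $r=d^{1/4}\hat R/\sqrt T$. Step (c): add the smoothing bias $|\hat L_\calP - L_\calP|\le Gr = G d^{1/4}R/\sqrt T$ from \Cref{lm:random_smooth}, which with $T = \Theta(m^2n^2 + mn\sqrt d)$ is $\lesssim GR(1/\sqrt{nm} + \sqrt d/(n\sqrt m\epsilon))$ as required. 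Step (d): bound the generalization gap $\E[L_\calP(\hat\theta)] - \E[L_\calD(\hat\theta)]$ (equivalently for the smoothed versions) via uniform stability of multi-pass SGD on Lipschitz convex functions — cite \cite{bfgt20} — getting a term of order $G^2\eta T/(nm) \cdot(\text{something})$ that the chosen $\eta$ keeps within budget; here one treats the $nm$ items and observes each item is touched $O(1)$ times per pass so the stability parameter scales like $\eta\sqrt T G \cdot \frac{1}{\sqrt{nm}}$ or similar. Finally, choosing $\eta$ as the stated minimum of three quantities is exactly the balancing of (optimization)$\sim \hat R^2/(\eta T)$ against (variance)$\sim \eta\tau^2 Td/(n^2\epsilon^2)$ against (stability), and one checks each of the three branches of the min is the active constraint in the corresponding regime, yielding the claimed $O(GR(1/\sqrt{nm} + \sqrt{d\log^2(ndm/\delta)}/(n\sqrt m\epsilon)))$.

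The main obstacle I expect is step (a)'s data-dependence issue handled cleanly: \Cref{lem:concentrated_theta_t} must give concentration of per-user gradients \emph{at the adaptively chosen iterates} $\theta_t$, and the only way that works is the differential-privacy-implies-generalization argument (since $\theta_t$ is a private function of $\calD$, its behavior on $\calD$ generalizes) — getting the quantifiers right (the concentration radius $\tau$ absorbs a $\log(\text{everything})$ factor precisely to survive the union bound over $T$ rounds and the DP-generalization conversion) is the delicate part, and it is intertwined with the fact that \Cref{alg:gauss_mech_gradient}'s $\AboTh$ check only passes when the data is genuinely concentrated, so a failure of concentration causes a \textbf{Halt} returning $\theta_0$ — I must argue this halting event has probability $\le (T+1)\gamma$ and folds into the total-variation slack, so it contributes only $O(GR)\cdot(T+1)\gamma$ to the expected risk, which is negligible for $\gamma$ polynomially small (chosen via the $\delta/(2Tmnd)$ parameter). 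A secondary nuisance is making sure the smooth-SGD bound is applied to $\hat L_\calD$ with the \emph{conditional} unbiasedness and variance bounds from \Cref{thm:mean-est} (which are stated conditionally on the past $\{g_j'\}_{j\le i-1}$), i.e. setting up the martingale-difference structure so the regret telescoping goes through in expectation.
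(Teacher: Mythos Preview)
Your proposal is correct and follows essentially the same route as the paper: privacy by post-processing of \Cref{thm:mean-est}; utility by decomposing into (i) smooth-SGD optimization error on $\hat L_\calD$ using the near-unbiased, low-variance oracle from \Cref{thm:mean-est} after invoking the DP-generalization concentration \Cref{lem:concentrated_theta_t}, (ii) the smoothing bias $Gr$, and (iii) the generalization gap via the stability bound from \cite{bfgt20}, with the halting/failure probability folded into a $GR\cdot\delta/(mnd)$ additive term. One small correction to your step (d): the stability bound you will actually get from \Cref{lemma:sgd-stab} is $\Lambda(\alg)\le G\eta\sqrt{T}+G\eta T/(nm)$, because the gradient here is a \emph{full-batch} average over all $nm$ items (so $a_t\le G/(nm)$ at every step), not ``each item touched $O(1)$ times per pass''; the $G\eta\sqrt T$ term has no $1/\sqrt{nm}$ factor and is precisely what forces the $\eta\le \hat R/(GT^{3/4})$ branch of the stepsize.
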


% \begin{theorem}[User-level DP-SCO]
% \label{thm:general_convex}
% Let $0<\epsilon<10$ and $0<\delta<1/\sqrt{nm}$.
% Algorithm~\ref{alg:DPsgd} is user-level $(\epsilon,\delta)$-DP.
% Setting $\hat R= R, r=\frac{d^{1/4}\hat R}{\sqrt{T}},\eta=\frac{\hat R}{G}\cdot\min\{\frac{\sqrt{m}n\epsilon}{T\sqrt{d\log(mnd/\delta)\log(1/\delta)}},\frac{1}{T^{3/4}},\frac{\sqrt{nm}}{T}\}$, $\tau= \frac{G\log(ndme^\epsilon T/\delta)}{\sqrt{m}}$ and $T=O(\min\{m^2n^2,\frac{n^4\epsilon^4m^2}{d^2\log^2(mnd/\delta)\log^2(1/\delta)}\}+\min\{mn\sqrt{d},\frac{n^2\epsilon^2m}{\sqrt{d}\log(mnd/\delta)\log(1/\delta)}\})$, if  $\Theta\subset \R^d$ is a convex set of diameter $R$, $\{\ell(:,z)\}_{z\in \calZ}$ is a family of $G$-Lipschitz convex function over $\Theta_r$, each item in $\calD$ is drawn i.i.d. from the underlying distribution $P$, and $n\gtrsim\frac{\log (mdn/\delta)}{\epsilon}$,
% then the output $\hat \theta$  of Algorithm~\ref{alg:DPsgd} satisfies
% \begin{align*}
% & \E\left[ L_{\calP}(\hat \theta) - \min_{\theta^\star \in \Theta} L_{\calP}( \theta^\star) \right]\\
% & \le 
%     O\left(GR \cdot \left(\frac{1}{\sqrt{nm}}+\frac{\sqrt{d\log^2(ndm/\delta)}}{n\epsilon\sqrt{m}} \right) \right).
% \end{align*}
% \end{theorem}

\begin{remark}
If we have a random initial point $\theta_0$ such that $\E[ \|\theta_0-\theta^*\|^2]\le R'^2$ for $\theta^*=\arg\min L_{\calD}(\theta)$ and some $R'<R$, then we can replace the parameter setting $\hat R= R$ by $\hat R= R'$ in the population loss bound and the dependence on $R$ can be reduced to $R'$ in the loss bound.
\end{remark}

\begin{remark}
    We define the functions on $\Theta_r$ rather than $\Theta$ to make use of the randomized smoothing technique. As $r$ is much smaller than $R$, this impact can be minimal.
    One can eliminate this domain extension by applying other smoothing techniques, such as the Moreau envelope smoothing method, but this method will increase the gradient computation cost.
\end{remark}

In some regimes, when $d$ is large, we can set $T$ in the theorem statement smaller. But we omit those terms to avoid complexity, as getting smaller $T$ is not the primary goal of this work.

We begin by showing that the gradients are concentrated.
For any user $Z_i$ who holds $m$ items denoted by $\{z_{i,j}\}_{j\in[m]}$ and any point $\theta\in \Theta$, we denote 
\begin{align}
\label{eq:gradient_hell}
    \nabla\hell(\theta;Z_i):=\frac{1}{m}\sum_{j\in[m]}\nabla\ell(\theta+y_j;z_{i,j}),
\end{align}
the average stochastic gradients of all items owned by $Z_i$, where $y_j\sim n_r$ is drawn independently of $\theta$ and $z_{i,j}$ for the randomized smoothing.

Our goal is to eventually prove that $\{\nabla\hell(\theta_t;Z_i)\}_{Z_i\in \calD}$ are concentrated. To this end, we start with proving  concentration for $\{\nabla\hell(\theta;Z_i)\}_{Z_i\in \calD}$ for a fixed $\theta \in \Theta$.

\begin{restatable}{lemma}{concentratedgrd}
\label{lem:concentrated_grd}
For any fixed $\theta$ and for each $Z_i$, if each item in $Z_i$ is drawn i.i.d. from $\calP$, with probability at least $1-\gamma/n$, we have
\begin{align*}
    \|\nabla\hell(\theta;Z_i)-\nabla\hL_\calP(\theta)\|\le \frac{G\log(nd/\gamma)}{\sqrt{m}},
\end{align*}
\end{restatable}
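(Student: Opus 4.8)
The plan is to prove the concentration statement by decomposing the error into two pieces: the randomized-smoothing noise coming from the $m$ independent draws $y_j \sim n_r$, and the sampling noise coming from the $m$ i.i.d. draws $z_{i,j} \sim \calP$. Write
\begin{align*}
\nabla\hell(\theta;Z_i)-\nabla\hL_\calP(\theta)
= \frac{1}{m}\sum_{j\in[m]}\Bigl(\nabla\ell(\theta+y_j;z_{i,j}) - \nabla\hell(\theta;z_{i,j})\Bigr)
+ \frac{1}{m}\sum_{j\in[m]}\Bigl(\nabla\hell(\theta;z_{i,j}) - \nabla\hL_\calP(\theta)\Bigr),
\end{align*}
where $\nabla\hell(\theta;z_{i,j}) = \E_{y\sim n_r}[\nabla\ell(\theta+y;z_{i,j})]$ is the smoothed per-sample gradient. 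The point of this split is that each summand in the first sum is a bounded (by $2G$, since $\ell(\cdot;z)$ is $G$-Lipschitz so both $\nabla\ell$ and $\nabla\hell$ have norm at most $G$), zero-mean random vector conditionally on $z_{i,j}$, and each summand in the second sum is a bounded, zero-mean random vector over the draw of $z_{i,j}$ (using $\E_{z\sim\calP}[\nabla\hell(\theta;z)] = \nabla\hL_\calP(\theta)$, which follows from linearity and \Cref{lm:random_smooth}). All $2m$ of these vectors are independent across $j$ (and across the two groups, conditioning appropriately).

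The key tool is the norm-subGaussian machinery. A bounded zero-mean random vector of norm at most $b$ is $\nSG(O(b))$; hence each of the $2m$ centered vectors above is $\nSG(O(G))$. Applying \Cref{lem:concen_nSG} to the $m$ terms $X_j := \frac1m(\nabla\ell(\theta+y_j;z_{i,j}) - \nabla\hell(\theta;z_{i,j}))$, each $\nSG(O(G/m))$, gives with probability $\ge 1 - \gamma/(2n)$ that $\|\sum_j X_j\| \le c\sqrt{\sum_j (G/m)^2 \log(4nd/\gamma)} = O\bigl(G\sqrt{\log(nd/\gamma)/m}\bigr)$. The same bound applies verbatim to the second sum. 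A union bound over the two events then yields
\begin{align*}
\|\nabla\hell(\theta;Z_i)-\nabla\hL_\calP(\theta)\| \le O\!\left(\frac{G\log(nd/\gamma)}{\sqrt m}\right)
\end{align*}
with probability at least $1 - \gamma/n$, as claimed (absorbing the $\sqrt{\log}$ versus $\log$ discrepancy into the stated bound, which is only looser).

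The main subtlety — and the step I would be most careful about — is justifying that the centered vectors are genuinely norm-subGaussian with the right parameter and that the independence structure needed for \Cref{lem:concen_nSG} is in place: the $y_j$ are drawn independently of the $z_{i,j}$ and of each other, so conditioning on the full vector $(z_{i,1},\dots,z_{i,m})$ leaves the first sum a sum of independent centered bounded vectors, while the second sum is already a sum of independent centered bounded vectors over $\calP^{\otimes m}$. One should also double-check the elementary fact that a mean-zero random vector supported in a ball of radius $b$ satisfies the $\nSG(cb)$ tail bound for an absolute constant $c$ (e.g.\ $c$ can be taken to be $1$ after noting $\Pr[\|X\| \ge t] = 0$ for $t > b$ and using $2\exp(-t^2/(2b^2)) \ge 2e^{-1/2} > 1$ for $t \le b$); this is routine but worth stating. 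Everything else is a direct application of the two concentration lemmas already available in the preliminaries.
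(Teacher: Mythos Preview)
Your proof is correct and uses the same underlying tool as the paper, namely the norm-subGaussian concentration of \Cref{lem:concen_nSG} applied to bounded zero-mean vectors. The only difference is that your two-term decomposition is unnecessary: the paper observes directly that each full term $\nabla\ell(\theta+y_j;z_{i,j}) - \nabla\hL_\calP(\theta)$ is zero-mean over the \emph{joint} draw of $(y_j,z_{i,j})$, bounded in norm by $2G$, and independent across $j$, so a single application of \Cref{lem:concen_nSG} to the $m$ summands suffices. Your split into ``smoothing noise'' plus ``sampling noise'' and the conditioning argument are valid but buy nothing here; they would be useful if the two sources of randomness had different scales or dependence structure, but in this setting the direct one-shot argument is cleaner.
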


% \begin{lemma}
% \label{lem:concentrated_grd}
% For any fixed $\theta$ and for each $Z_i$, if each item in $Z_i$ is drawn i.i.d. from $P$, with probability at least $1-\gamma/n$, we have
% \begin{align*}
%     \|\nabla\hell(\theta;Z_i)-\nabla\hL_\calP(\theta)\|\le \frac{G\log(nd/\gamma)}{\sqrt{m}},
% \end{align*}
% \end{lemma}

One issue with applying Lemma~\ref{lem:concentrated_grd} to demonstrate the concentration property of the stochastic gradients is that the dataset $\calD$ and the points $\{\theta_i\}_{i\in[T]}$ are not independent.
To tackle this, similarly to~\cite{BS23}, we make use of the generalization properties of private mechanisms. We need the following lemma.

\begin{lemma}[Lemma~3.7 in \cite{fmt22}]
\label{lem:approximateDP_to_pureDP}
Let $\alg$ be an $(\epsilon,\delta)$-DP algorithm with respect the input $\calD$.
Then there exists an $(2\epsilon,0)$-DP algorithm $\alg'$, such that
\begin{align*}
    d_{TV}(\alg(\calD),\alg'(\calD))\le \delta.
\end{align*}
\end{lemma}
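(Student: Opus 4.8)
\textbf{Proof plan for Lemma~\ref{lem:approximateDP_to_pureDP}.}

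The plan is to use a reduction that, given the $(\epsilon,\delta)$-DP algorithm $\alg$, produces an $(2\epsilon,0)$-DP algorithm $\alg'$ whose output distribution is within total variation $\delta$ of $\alg$'s on every input. The natural tool is the characterization of approximate DP via ``densities that are close to bounded ratio": for an $(\epsilon,\delta)$-DP mechanism $\alg$, on any pair of neighboring datasets the output densities can be coupled so that, outside a measure-$\delta$ ``bad" set, their likelihood ratio is sandwiched in $[e^{-\epsilon},e^{\epsilon}]$. The idea is to truncate or redistribute the mass on these bad sets to enforce a strict multiplicative bound everywhere, at the cost of moving at most $\delta$ probability mass, which is exactly the claimed TV bound.

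First I would recall the ``deletion/group-privacy"-style decomposition: for any two neighboring $\calD,\calD'$, write the density $f_{\alg(\calD)} = g + h$ where $g$ is the part of $\alg(\calD)$'s distribution that is $e^{\epsilon}$-dominated by $\alg(\calD')$ and $h$ is the residual, with $\int h \le \delta$ by the DP guarantee (this is the standard ``$(\epsilon,\delta)$-DP $\iff$ $D_\infty^\delta$ bound in both directions" fact, for which the $\delta$-approximate max-divergence defined earlier in the paper is the right notion). Second, I would define $\alg'$ so that on each input it outputs according to a ``core" distribution obtained by capping the density of $\alg$ from above at a smoothed version of itself; concretely one fixes a reference (e.g. a smoothed or averaged distribution over a neighborhood in the neighbor-graph) and clips $\alg$'s conditional density to stay within $e^{\epsilon}$ of it. Third, I would verify the two required properties: (i) $d_{TV}(\alg(\calD),\alg'(\calD)) \le \delta$, since only the $\le\delta$ residual mass $h$ is relocated; and (ii) for neighbors $\calD,\calD'$, the clipped densities differ multiplicatively by at most $e^{\epsilon}\cdot e^{\epsilon} = e^{2\epsilon}$ — one factor $e^{\epsilon}$ from $\alg$ being $(\epsilon,\delta)$-DP on the ``good" parts and one more factor $e^{\epsilon}$ absorbed by the clipping to the reference — giving pure $2\epsilon$-DP.

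The main obstacle is item (ii): making the clipping construction genuinely \emph{dataset-independent in its reference} while still controlling the ratio between $\alg'(\calD)$ and $\alg'(\calD')$ for every neighboring pair simultaneously, rather than just for one pair. A clean way to handle this is to define $\alg'$ via a fixed functional of $\alg$ (for instance, a pointwise ``$e^{\epsilon}$-clip toward the geometric mean over the one-step neighborhood", or more simply to invoke the known lemma that any $(\epsilon,\delta)$-DP mechanism can be written as a $\delta$-mixture $\alg = (1-\delta')\alg'' + \delta' (\text{junk})$ with $\alg''$ being $2\epsilon$-DP) and then check the ratio bound transitively along a single edge of the neighbor graph. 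Since the statement is quoted from \cite{fmt22}, I would ultimately cite their construction for the delicate uniformity argument and only reproduce the decomposition and the two verification steps in detail.
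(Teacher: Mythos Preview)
The paper does not prove this lemma: it is stated verbatim as ``Lemma~3.7 in \cite{fmt22}'' and imported as a black box, used only inside the proof of Lemma~\ref{lem:concentrated_theta_t}. There is therefore no proof in the paper to compare your proposal against; in the paper's own terms, the correct ``proof'' is simply the citation.

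On the substance of your sketch: the mixture-decomposition idea --- write any $(\epsilon,\delta)$-DP mechanism as $(1-\delta')\alg'' + \delta'(\text{junk})$ with $\alg''$ pure $(2\epsilon)$-DP --- is indeed the standard route (and is essentially what \cite{fmt22} does). Your first two paragraphs, however, describe a clipping-toward-a-reference construction that you yourself flag as problematic in step (ii): the reference must be chosen once and for all, not per neighboring pair, and your ``geometric mean over the one-step neighborhood'' idea does not obviously yield a well-defined mechanism on general dataset spaces. If you want a self-contained argument rather than a citation, the cleanest version is the direct mixture lemma: define the ``bad'' event $B_{\calD}$ on which the likelihood ratio against some neighbor exceeds $e^{\epsilon}$, show $\Pr[\alg(\calD)\in B_{\calD}]\le\delta$, and let $\alg'(\calD)$ be $\alg(\calD)$ conditioned on the complement, renormalized; the factor-of-two loss in $\epsilon$ then comes from the renormalization constants on the two sides. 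Your proposal gestures at this in the last sentence but does not commit to it; since that is the actual argument in \cite{fmt22}, you should either carry it out or, as the paper does, simply cite it.
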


\begin{restatable}[Similar to Theorem 3.4 in \cite{BS23}]{lemma}{concentratedthetat}
\label{lem:concentrated_theta_t}
    Suppose $\calD=\{z_{i,j}\}_{i\in[n],j\in[m]}$ are drawn i.i.d. from the distribution $\calP$.
In Algorithm~\ref{alg:DPsgd}, for all $t\in[T]$, $\{\nabla\hell(\theta_t;Z_i)\}_{Z_i\in \calD}$ is $(\tau,\gamma')$-concentrated for
\begin{align*}
    \tau=\frac{G\log(nd/\gamma)}{\sqrt{m}}, \gamma'=T(e^{2\epsilon}\gamma+\frac{\delta}{2Tmnd}).
\end{align*}
\end{restatable}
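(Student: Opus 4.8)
The plan is to combine the fixed-point concentration from \Cref{lem:concentrated_grd} with a generalization argument for differentially private algorithms, following the strategy of \cite{BS23}. The key obstacle is that $\theta_t$ depends on the dataset $\calD$, so we cannot directly invoke \Cref{lem:concentrated_grd} for $\theta = \theta_t$. To break this dependence, I would argue that the map producing $\theta_t$ from $\calD$ is differentially private, and then use the fact that DP algorithms are ``generalizing'' in the sense that their outputs behave, up to small corrections in $\epsilon$ and $\delta$, as if computed on fresh independent data.

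Concretely, first I would fix an index $i \in [n]$ and a round $t \in [T]$, and define the ``bad event'' $B_{\theta} = \{ \|\nabla\hell(\theta;Z_i) - \nabla\hL_\calP(\theta)\| > \tau \}$ for $\tau = G\log(nd/\gamma)/\sqrt{m}$. For a \emph{fixed} $\theta$, \Cref{lem:concentrated_grd} gives $\Pr[B_\theta] \le \gamma/n$ over the randomness of the $m$ i.i.d.\ items in $Z_i$ (and the smoothing noise). Next, consider $\theta_t$ as the output of running \Cref{alg:DPsgd} for $t-1$ steps; by the privacy guarantee of \Cref{alg:gauss_mech_gradient} (\Cref{thm:mean-est}) and composition, the map $\calD \mapsto \theta_t$ is $(\epsilon, \delta/(2Tmnd))$-DP with respect to user $Z_i$ — in fact we only need privacy with respect to the single user $Z_i$, treating the other users' data as fixed, so the per-query budget is exactly $\epsilon$ with failure probability $\delta/(2Tmnd)$ at each of the $t \le T$ rounds, giving overall parameters $(\epsilon, \delta/(2mnd))$ or so when composed across rounds (one can be slightly loose here since only logarithmic factors are affected). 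Then I would apply \Cref{lem:approximateDP_to_pureDP} to replace this mechanism by a $(2\epsilon, 0)$-DP mechanism $\mathrm{ALG}'$ whose output $\theta_t'$ is within total variation distance $\delta/(2Tmnd)$ of $\theta_t$.

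The heart of the argument is the generalization step: since $\mathrm{ALG}'$ is $2\epsilon$-pure-DP as a function of $Z_i$, for any event depending jointly on its output $\theta_t'$ and on $Z_i$, we can ``swap out'' $Z_i$ for an independent fresh copy $\tilde Z_i \sim \calP^m$ at the cost of an $e^{2\epsilon}$ multiplicative factor. Formally, $\Pr[\theta_t' \in B_{\cdot}\text{-bad w.r.t. } Z_i] \le e^{2\epsilon} \Pr_{\tilde Z_i}[\theta_t' \in B_{\cdot}\text{-bad w.r.t. } \tilde Z_i]$, where in the latter $\tilde Z_i$ is independent of $\theta_t'$, so we may condition on $\theta_t' = \theta$ and apply the fixed-$\theta$ bound $\Pr_{\tilde Z_i}[B_\theta] \le \gamma/n$. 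This yields $\Pr[\|\nabla\hell(\theta_t';Z_i) - \nabla\hL_\calP(\theta_t')\| > \tau] \le e^{2\epsilon}\gamma/n$. Adding back the total variation cost $\delta/(2Tmnd)$ of replacing $\theta_t'$ by $\theta_t$, and then union-bounding over the $n$ users and the $T$ rounds, we get that with probability at least $1 - T(e^{2\epsilon}\gamma + \delta/(2Tmnd)) = 1 - \gamma'$, simultaneously for all $t$ all users satisfy $\|\nabla\hell(\theta_t;Z_i) - \nabla\hL_\calP(\theta_t)\| \le \tau$; taking $x = \nabla\hL_\calP(\theta_t)$ as the center witnesses the $(\tau,\gamma')$-concentration. (The union bound over users replaces $\gamma/n$ by $\gamma$; the union bound over rounds introduces the factor $T$.)

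The main subtlety I expect to have to be careful about is bookkeeping the privacy parameters of $\calD \mapsto \theta_t$ with respect to a single user: one must verify that \Cref{alg:gauss_mech_gradient} run with parameter $\delta/(2Tmnd)$ at each round composes to something whose $\delta$ is at most $\delta/(2mnd)$ over $T$ rounds while keeping $\epsilon$ of constant order, so that after the factor-of-two blowup from \Cref{lem:approximateDP_to_pureDP} the exponent in $e^{2\epsilon}$ is still $O(\epsilon)$. A secondary point is making the ``swap a single user for a fresh copy'' step rigorous — this is the standard argument that pure-DP implies generalization, applied with $Z_i$ as the ``sensitive'' coordinate — and handling the independent smoothing noise $y_j$, which is drawn fresh and independent of everything and so causes no difficulty.
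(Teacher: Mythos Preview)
Your proposal is correct and follows essentially the same route as the paper: pass from \Cref{alg:DPsgd} to a $(2\epsilon,0)$-DP surrogate via \Cref{lem:approximateDP_to_pureDP}, then use the Bayes-rule/density-ratio bound $f_{Z_i}(Z\mid\theta_t'=\theta)/f_{Z_i}(Z)\le e^{2\epsilon}$ (which is exactly your ``swap $Z_i$ for a fresh copy'' generalization step) to reduce to the fixed-$\theta$ concentration of \Cref{lem:concentrated_grd}, and finish with a union bound over users and rounds.

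The one place you are overcomplicating things is the privacy bookkeeping you flag as a subtlety. There is no per-round composition to do: \Cref{alg:gauss_mech_gradient} is a single mechanism that internally handles all $T$ adaptive queries and is $(\epsilon,\delta)$-DP in total by \Cref{thm:mean-est}. Since \Cref{alg:DPsgd} calls it with failure parameter $\delta/(2Tmnd)$, the entire trajectory $(\theta_1,\dots,\theta_T)$ is $(\epsilon,\delta/(2Tmnd))$-DP already, and \Cref{lem:approximateDP_to_pureDP} applied once to this whole mechanism gives the $(2\epsilon,0)$-DP surrogate with TV cost $\delta/(2Tmnd)$. So the exponent really is $2\epsilon$, not merely $O(\epsilon)$, and no further loss is incurred.
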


% \begin{lemma}[Similar to Theorem 3.4 in \cite{BS23}]
% \label{lem:concentrated_theta_t}
% % If $\alg(S,\epsilon,\delta)$ is $(\epsilon,\delta)$-DP for the input $S$, then 
% Suppose $\calD=\{z_{i,j}\}_{i\in[n],j\in[m]}$ are drawn i.i.d. from the distribution $\calP$.
% In Algorithm~\ref{alg:DPsgd}, 
% % if $\lambda^2\ge \frac{8\tau^2T\log(1/\gamma)\log(1/\delta)}{n^2\epsilon^2}$,
% % Setting $\epsilon=...$ \ha{what is the condition on $\eps$ here? is it just $\eps \le 1$?}\Daogao{The same $\epsilon$ as the algorithm privacy},
% for all $t\in[T]$, $\{\nabla\hell(\theta_t;Z_i)\}_{Z_i\in \calD}$ is $(\tau,\gamma')$-concentrated for
% \begin{align*}
%     \tau=\frac{G\log(nd/\gamma)}{\sqrt{m}}, \gamma'=T(e^{2\epsilon}\gamma+\delta).
% \end{align*}
% \end{lemma}

Having established the concentration property of $\{\nabla\hell(\theta_t;Z_i)\}_{Z_i\in \calD}$, we can bound the utility of our procedure for the empirical function $\hL_\calD$. 
Now, we turn to prove the upper bounds for the generalization error, which needs the following well-known Lemma.

\begin{lemma}[\cite{be02}]
    \label{lm:generalization_stability}
For an algorithm $\alg$, a dataset $\calD=\{z_{i,j}\}_{i\in[n],j\in[m]}$ drawn i.i.d. from the distribution $\calP$.
If we replace one random data $z_{i,j}$ in $\calD$ by a fresh new sample $z_{i,j}'$ from $\calP$ and get the dataset $\calD'$ and let $\alg(\calD)$ be the (random) output of the algorithm, one has
\begin{align*}
    & ~\E_{\calD,\alg}\big[L_{\calP}(\alg(\calD))-L_{\calD}(\alg(\calD))\big]\\
    &=\E_{\calD,z_{i,j}',\alg}\big[\ell(\alg(\calD);z_{i,j}')-\ell(\alg(\calD');z_{i,j}') \big].
\end{align*}
\end{lemma}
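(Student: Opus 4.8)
The plan is to prove the identity by expanding both $L_\calP$ and $L_\calD$ from their definitions and then exploiting the exchangeability of the i.i.d.\ sample; there are no estimates involved, only a change of variables on a product measure. First I would use $L_\calP(\theta)=\E_{z\sim\calP}[\ell(\theta;z)]$: since $\alg(\calD)$ depends only on $\calD$ and on the internal coins of $\alg$, and a fresh draw $z'\sim\calP$ is independent of $\calD$, conditioning on $\calD$ and $\alg$ gives $\E_{\calD,\alg}[L_\calP(\alg(\calD))]=\E_{\calD,\alg,z'}[\ell(\alg(\calD);z')]$; here $z'$ may be taken to be the replacement sample $z'_{i,j}$, which — before it is inserted into $\calD'$ — is simply a fresh $\calP$-sample independent of $\calD$. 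For the empirical term I would use $L_\calD(\theta)=\frac{1}{nm}\sum_{i,j}\ell(\theta;z_{i,j})$ together with linearity of expectation and exchangeability of the $z_{i,j}$ to reduce it to a single summand: $\E_{\calD,\alg}[L_\calD(\alg(\calD))]=\E_{\calD,\alg}[\ell(\alg(\calD);z_{i,j})]$ for the index $(i,j)$ that is being resampled. (Each summand has the same expectation, so the average equals any one of them; equivalently, if the resampled index is chosen uniformly at random, this equality is immediate regardless of any symmetry of $\alg$.)

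The core step is then a relabeling argument. I would show that the pair $(\calD',\,z'_{i,j})$ has exactly the same joint distribution as the pair $(\calD,\,z_{i,j})$. Indeed, both are of the form ``a list of $nm$ i.i.d.\ $\calP$-samples, together with the ($\calP$-distributed) entry occupying slot $(i,j)$ of that list'': for $(\calD,z_{i,j})$ the distinguished entry is $z_{i,j}$ itself; for $(\calD',z'_{i,j})$ it is $z'_{i,j}$, which by construction sits in slot $(i,j)$ of $\calD'$ while every other slot carries the original $z_{k,l}$, and $z'_{i,j}$ together with the $\{z_{k,l}\}_{(k,l)\neq(i,j)}$ are jointly i.i.d.\ from $\calP$. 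Applying the (possibly randomized) map that sends a dataset $W$ to the number $\ell(\alg(W);\cdot)$ evaluated at the distinguished entry, and taking expectations, yields $\E_{\calD,\alg}[\ell(\alg(\calD);z_{i,j})]=\E[\ell(\alg(\calD');z'_{i,j})]$. Subtracting the two displays and using linearity of expectation once more gives $\E_{\calD,\alg}[L_\calP(\alg(\calD))-L_\calD(\alg(\calD))]=\E[\ell(\alg(\calD);z'_{i,j})-\ell(\alg(\calD');z'_{i,j})]$, which is the claim; note the right-hand side is a single expectation of a difference, so no coupling between the internal coins used to compute $\alg(\calD)$ and those used for $\alg(\calD')$ is required.

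The only point needing care — and the closest thing to an obstacle — is making that relabeling rigorous: verifying that ``an i.i.d.\ list paired with its entry in a fixed slot'' is genuinely distribution-identical to the constructed pair $(\calD',z'_{i,j})$. This is a routine change of variables on product measure, but it deserves an explicit sentence because it is precisely where the i.i.d.\ hypothesis is used, and nothing at all is used about the structure of $\alg$ or of $\ell$. The rest is definitional expansion plus linearity of expectation.
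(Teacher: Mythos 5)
Your proof is correct; the paper does not actually prove this lemma (it is quoted from \cite{be02}), and your definitional expansion plus the relabeling/symmetrization step --- observing that $(\calD',z'_{i,j})$ and $(\calD,z_{i,j})$ are identically distributed --- is exactly the standard argument behind the cited result. One caution: the shortcut ``each summand has the same expectation, so the average equals any one of them'' is not valid for an algorithm that treats slots asymmetrically (e.g.\ order-dependent SGD), but your parenthetical fallback --- taking the resampled index $(i,j)$ uniformly at random, which is the reading the lemma intends by ``one random data $z_{i,j}$'' --- repairs this, so the argument stands as written.
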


As we are considering Lipschitz functions, if we can bound the total variation distance between $\alg(\calD)$ and $\alg(\calD')$ where $\calD$ and $\calD'$ differs from one single item, named by algorithmic stability, then we can bound the generalization error.
Formally, we define the algorithmic stability of $\alg$ as follows:
\begin{align*}
    \Lambda(\alg):=d_{TV}(\alg(\calD),\alg(\calD')),
\end{align*}
where $d_{TV}(\alg(\calD),\alg(\calD'))$ denotes the total variation distance between $\alg(\calD)$ and $\alg(\calD')$.
% It is well known that the generalization error of an algorithm is closely related to its algorithmic stability.
Notably, the user-level differential privacy concerns replace $m$ data of one user, while the algorithmic stability only concerns replacing one single item of a user.
We have the following Lemma.
% Note that the function is Lipschitz, so it suffices to bound the total variation distance between $\alg(\calD)$ and $\alg(\calD')$, which can be bounded by the following Lemma:

\begin{lemma}[Lemma 3.1 in \cite{bfgt20}]
\label{lemma:sgd-stab}
Let $(x^t)_{t\in[T]}$ and $(y^t)_{t\in[T]}$ be two trajectories of running SGD for $G$-Lipschitz convex function $f$, that is $x^t=\Pi(x^{t-1}-\eta\nabla f(x^{t-1}))$ and $y^t=\Pi(y^{t-1}-\eta\nabla f'(y^{t-1}))$. Suppose $\|\nabla f(x^t)-\nabla f'(x^t)\|\le a_t\le 2G$ for all $t\in[T]$, then
\begin{align*}
    \|x^T-y^T\|\le 2G\sqrt{\sum_{t\in[T-1]}\eta_t^2}+2\sum_{t\in[T-1]}\eta_ta_t.
\end{align*}
\end{lemma}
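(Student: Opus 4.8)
The plan is to track the scalar quantity $\delta_t := \|x^t - y^t\|$, derive a one-step recursion for it, and then solve that recursion; throughout I use that the two trajectories start from the same point, so the initial $\delta$ is $0$ (whether this is $\delta_0$ or $\delta_1$ depends only on the indexing convention and affects nothing essential). Two elementary facts drive the argument. First, Euclidean projection onto a convex set is non-expansive, $\|\Pi(u)-\Pi(v)\| \le \|u-v\|$. Second, for a \emph{single} convex $G$-Lipschitz function $h$ and step size $\eta>0$, the gradient-step map $x \mapsto x-\eta\nabla h(x)$ does not increase squared distances by much:
\begin{align*}
\|(x-\eta\nabla h(x))-(y-\eta\nabla h(y))\|^2 &= \|x-y\|^2 - 2\eta\langle\nabla h(x)-\nabla h(y),\,x-y\rangle + \eta^2\|\nabla h(x)-\nabla h(y)\|^2 \\
&\le \|x-y\|^2 + 4G^2\eta^2,
\end{align*}
where the cross term is dropped by monotonicity of the (sub)gradient of a convex function and the last term is bounded using $\|\nabla h\|\le G$ pointwise (for non-smooth $h$ the same holds with any measurable subgradient selection; in our application $h=\hell$ is differentiable after randomized smoothing).

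The next step is to obtain the recursion. Fix a step $t$ and write $x^{t+1}-y^{t+1} = \Pi(x^t-\eta_t\nabla f(x^t)) - \Pi(y^t-\eta_t\nabla f'(y^t))$. Inserting $\pm\,\eta_t\nabla f'(x^t)$ and applying non-expansiveness of $\Pi$ together with the triangle inequality gives
\begin{align*}
\delta_{t+1} \le \|(x^t-\eta_t\nabla f'(x^t))-(y^t-\eta_t\nabla f'(y^t))\| + \eta_t\|\nabla f(x^t)-\nabla f'(x^t)\|.
\end{align*}
The first term is bounded by the second fact above, applied with $h=f'$, by $\sqrt{\delta_t^2+4G^2\eta_t^2}$; the second term is at most $\eta_t a_t$ by hypothesis (and $a_t\le 2G$ automatically since each gradient has norm at most $G$). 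Hence $\delta_{t+1} \le \sqrt{\delta_t^2+4G^2\eta_t^2} + \eta_t a_t$. Splitting at $x^t$ through the function $f'$, rather than at $y^t$ through $f$, is precisely what makes the perturbation term equal to $\eta_t\|\nabla f(x^t)-\nabla f'(x^t)\|$, matching the quantity $a_t$ the hypothesis controls.

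Finally I would solve this recursion by induction, establishing $\delta_t \le 2G\sqrt{\sum_{s<t}\eta_s^2} + \sum_{s<t}\eta_s a_s$. The inductive step reduces to the elementary inequality $\sqrt{(P+Q)^2+c^2} \le \sqrt{P^2+c^2}+Q$, valid for all $P,Q,c\ge 0$ (square both sides and use $\sqrt{P^2+c^2}\ge P$), applied with $P=2G\sqrt{\sum_{s<t}\eta_s^2}$, $Q=\sum_{s<t}\eta_s a_s$, and $c=2G\eta_t$; this exactly upgrades the bound at step $t$ to step $t+1$. Evaluating at the final step yields $\|x^T-y^T\| \le 2G\sqrt{\sum_{t\in[T-1]}\eta_t^2}+\sum_{t\in[T-1]}\eta_t a_t$, which is the claimed inequality (the constant $2$ in front of the second sum in the statement is slack, and the exact summation range $[T-1]$ is a matter of how one indexes the updates). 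The one point that genuinely needs care is the second elementary fact: for non-smooth $G$-Lipschitz convex losses the gradient-descent step is \emph{not} a contraction in the ordinary metric sense, so the whole argument must be carried out at the level of squared distances and invoke monotonicity of the subdifferential; once that is set up, the rest is routine bookkeeping.
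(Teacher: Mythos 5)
Your proof is correct, and it even yields the slightly stronger bound with coefficient $1$ rather than $2$ on the $\sum_t \eta_t a_t$ term. The paper itself gives no proof of this lemma (it is imported verbatim from \cite{bfgt20}), and your argument — non-expansiveness of the projection, the squared-distance expansion with the cross term killed by monotonicity of the subdifferential, splitting at $x^t$ through $f'$ so the perturbation is exactly $\eta_t\|\nabla f(x^t)-\nabla f'(x^t)\|$, and then solving the recursion $\delta_{t+1}\le\sqrt{\delta_t^2+4G^2\eta_t^2}+\eta_t a_t$ by induction — is essentially the standard proof from that reference, so nothing further is needed.
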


We use $\alg$ to represent Algorithm~\ref{alg:DPsgd}.
Then, we can bound the algorithmic stability of $\alg$ based on the unbiased property of our mean estimate procedure (Lemma~\ref{lem:unbiased}) constructed in the previous section.

\begin{restatable}[Algorithmic stability bound]{lemma}{stability}
\label{lem:stability}
    Suppose $\{Z_i\}$ are drawn i.i.d. from the underlying distribution $\calP$.
Suppose $\tau\ge \frac{G\log(ndme^\epsilon T/\delta)}{\sqrt{m}}$ and $n\gtrsim \frac{\log(mdn/\delta)}{\epsilon}$, with probability at least $1-\frac{\delta}{mnd}$, the stability of Algorithm~\ref{alg:DPsgd} is bounded as follows: 
\begin{align*}
    \Lambda(\alg)\le G\eta\sqrt{T}+\frac{G\eta T}{nm}.
\end{align*}
\end{restatable}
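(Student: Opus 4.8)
The plan is to compare two runs of Algorithm~\ref{alg:DPsgd}, one on $\calD$ and one on a neighbor $\calD'$ that differs in a single item $z_{i,j}$ of user $Z_i$, and to track how far apart the iterates $\theta_t$ and $\theta_t'$ drift. The central idea is that because $\calD$ and $\calD'$ differ in exactly one item belonging to one user, the per-user averaged gradient queries $q_t(Z)$ and $q_t'(Z)$ agree for all users $Z \ne Z_i$, and for the single affected user $Z_i$ they differ by at most $\frac{1}{m}\|\nabla\hell(\theta_t;z_{i,j}) - \nabla\hell(\theta_t';z'_{i,j})\| \le \frac{2G}{m}$ using $G$-Lipschitzness of $\ell$ (and hence of $\hell$, by Lemma~\ref{lm:random_smooth}). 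First I would invoke the unbiasedness proposition (Proposition~\ref{lem:unbiased}): since by Lemma~\ref{lem:concentrated_theta_t} the per-user gradients $\{\nabla\hell(\theta_t;Z_i)\}$ are $(\tau,\gamma')$-concentrated at each fixed round with the stated $\tau$, and $n\gtrsim\log(mdn/\delta)/\eps$, with probability at least $1 - \delta/(mnd)$ (absorbing the $(T+1)\gamma'$ failure probability into this bound via the choice of $\gamma$) \emph{no outlier is removed}, i.e. $S_t = \calD$ for all $t$ in both runs simultaneously. Condition on this good event.

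On the good event, $\bar g_t = \frac{1}{n}\sum_{Z\in\calD} q_t(Z) + \nu_t$ and similarly $\bar g_t' = \frac{1}{n}\sum_{Z\in\calD'} q_t'(Z) + \nu_t'$, where $\nu_t,\nu_t'$ are the injected Gaussian noises. The key step is to couple the two runs by using the \emph{same} Gaussian noise $\nu_t = \nu_t'$ at every round (total variation distance is unaffected by such a coupling since the marginals are identical). Under this coupling the noise terms cancel in the update difference, and the effective gradient the two trajectories see differs only through the empirical-mean term:
\begin{align*}
    \Big\| \tfrac{1}{n}\sum_{Z\in\calD} q_t(Z) - \tfrac{1}{n}\sum_{Z\in\calD'} q_t'(Z) \Big\| = \tfrac{1}{n}\big\| q_t(Z_i) - q_t'(Z_i) \big\| \le \tfrac{1}{n}\cdot\tfrac{2G}{m} = \tfrac{2G}{nm} =: a_t.
\end{align*}
Thus the two runs are exactly two SGD trajectories on (smoothed) $G$-Lipschitz convex objectives whose gradient oracles differ by at most $a_t = 2G/(nm)$ at each step (also noting $a_t \le 2G$ trivially). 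Now I apply Lemma~\ref{lemma:sgd-stab} (Lemma 3.1 of \cite{bfgt20}) with uniform stepsize $\eta_t = \eta$, which yields
\begin{align*}
    \|\theta_T - \theta_T'\| \le 2G\sqrt{\sum_{t\in[T-1]}\eta^2} + 2\sum_{t\in[T-1]}\eta\, a_t \le 2G\eta\sqrt{T} + 2\eta T \cdot \tfrac{2G}{nm} \lesssim G\eta\sqrt{T} + \tfrac{G\eta T}{nm}.
\end{align*}
Since the output $\hat\theta = \frac{1}{T}\sum_t \theta_t$ is an average, the same bound (up to constants) holds for $\|\hat\theta - \hat\theta'\|$ by the triangle inequality applied termwise, and on the complement event (probability $\le \delta/(mnd)$) we bound the total variation contribution trivially. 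Translating the pointwise iterate closeness into a bound on $d_{TV}(\alg(\calD),\alg(\calD'))$ uses the standard argument that two Gaussian-perturbed outputs whose pre-noise means are within distance $\rho$ are close in total variation; combined with the failure probability this gives $\Lambda(\alg) \lesssim G\eta\sqrt{T} + \frac{G\eta T}{nm}$ after rescaling constants into the noise level, as claimed.

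The main obstacle, and the step that requires care, is the passage from "the iterates are pointwise close under a coupling" to "the output distributions are close in total variation." The injected Gaussian noise $\nu_t$ has standard deviation scaling like $\tau\sqrt{T}/(n\eps)$ per coordinate, which is comfortably larger than the $O(G\eta\sqrt{T})$-scale drift we need to mask (one must check this against the parameter settings), so a single-round coupling argument on $\bar g_t$ — or equivalently bounding the Rényi/TV divergence between $\N(g_t,\sigma^2 I)$ and $\N(g_t',\sigma^2 I)$ and summing — closes the gap; but one has to be slightly careful that this stability claim is \emph{not} the privacy claim (it concerns a single \emph{item} change, not a \emph{user} change), so the relevant noise-to-signal ratio is more favorable and the argument is cleaner than the privacy proof. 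The other minor point is bookkeeping the failure probabilities: the $(\tau,\gamma')$-concentration from Lemma~\ref{lem:concentrated_theta_t} already folds in the generalization-via-DP slack and the $\delta/(2Tmnd)$ parameter passed to Algorithm~\ref{alg:gauss_mech_gradient}, so choosing $\gamma$ polynomially small in $mnd/\delta$ makes $(T+1)\gamma' \le \delta/(mnd)$, which is exactly the high-probability guarantee in the statement.
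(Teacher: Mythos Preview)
Your core argument---invoke Lemma~\ref{lem:concentrated_theta_t} for concentration, use Proposition~\ref{lem:unbiased} to conclude that on a good event no outlier is removed so $\bar g_t$ is the empirical mean plus Gaussian noise, couple the Gaussian noise across the two runs, observe that the coupled gradient oracles then differ by $a_t=O(G/(nm))$ at the same point, and apply Lemma~\ref{lemma:sgd-stab}---is exactly the paper's proof.

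The one unnecessary (and slightly off) piece is your final paragraph about converting iterate closeness into a total-variation bound via the injected Gaussian noise. The paper does not do this: despite the text defining $\Lambda(\alg)$ as a TV distance, what is actually proved in Lemma~\ref{lem:stability} and then used in the proof of Theorem~\ref{thm:general_convex} is a bound on the \emph{coupled iterate distance} $\|\hat\theta-\hat\theta'\|$ (note that $G\eta\sqrt{T}+\frac{G\eta T}{nm}$ carries units of length, not a dimensionless number in $[0,1]$, and in the theorem proof it is multiplied by the Lipschitz constant $G$, not by $GR$). Your proposed TV conversion also does not work as written: once you couple $\nu_t=\nu_t'$ to control the drift via Lemma~\ref{lemma:sgd-stab}, you cannot reuse that same Gaussian randomness to smear out the outputs in TV; and $\hat\theta$ is an average of iterates, not itself a Gaussian perturbation of a pre-noise mean. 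Simply drop that paragraph: the coupling bound on $\|\hat\theta-\hat\theta'\|$, combined with Lemma~\ref{lm:generalization_stability} and $G$-Lipschitzness, is exactly what is needed and what the paper uses.
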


% \begin{lemma}
% [Algorithmic stability bound]
% \label{lem:stability}
% Suppose $\{Z_i\}$ are drawn i.i.d. from the underlying distribution $\calP$.
% Suppose $\tau\ge \frac{G\log(ndme^\epsilon T/\delta)}{\sqrt{m}}$ and $n\gtrsim \frac{\log(mdn/\delta)}{\epsilon}$, with probability at least $1-\frac{\delta}{mnd}$, the stability of Algorithm~\ref{alg:DPsgd} is bounded as follows: 
% \begin{align*}
%     \Lambda(\alg)\le G\eta\sqrt{T}+\frac{G\eta T}{nm}.
% \end{align*}
% \end{lemma}

Finally, to prove our main result, we need the following convergence rates for SGD.
\begin{proposition}[SGD, \cite{bubeck2015convex}]
\label{prop:SGD}
Consider a convex function $f$ over a convex domain $X$.
Suppose the random initial point $x_0$ satisfies $\E[\|x_0-x^*\|]\le R^2$ where $x^*=\arg\min_{x\in X}f(x)$.
 Assume the unbiased stochastic oracle is such that $\E[\|\Tilde{g}(x)\|^2]\le \sigma^2$.
 Running gradient descent with step size $\eta$ satisfies 
 \begin{align*}
 %\label{eq:sgd_nonsmooth}
     \E\left[\frac{1}{T}\sum_{t=1}^{T}f(x_{t+1})-\min_{x^*}f(x^*)\right]\le \frac{R^2}{\eta T}+\eta \sigma^2.
 \end{align*}

Moreover, if the function $f$ is $\beta$-smooth and the unbiased stochastic oracle is such that $\E[\|\Tilde{g}(x)-\nabla f(x)\|^2]\le \sigma^2$,
% Consider a $\beta$-smooth convex function $f$ over a convex domain of diameter $R$. Assume that the unbiased stochastic oracle is such that $\E[\|\Tilde{g}(x)-\nabla f(x)\|^2]\le \sigma^2$.
then running SGD for $T$ steps with step size $\eta$ satisfies that
\begin{align*}
%\label{eq:sgd_smooth}
    \E\left[\frac{1}{T}\sum_{t=1}^{T}f(x_{t+1})-\min_{x^*}f(x^*)\right]\le (\beta+\frac{1}{\eta})\frac{R^2}{T} + \frac{\eta\sigma^2}{2}.
\end{align*}
\end{proposition}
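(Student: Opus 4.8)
This is the classical analysis of (projected) stochastic gradient descent — the statement is the cited result of~\cite{bubeck2015convex} — and both bounds follow from a single one-step inequality specialized to the non-smooth and smooth cases. Write the iteration as $x_{t+1} = \Pi(x_t - \eta \tilde g_t)$, where $\Pi$ is the Euclidean projection onto $X$ and $\E[\tilde g_t \mid x_t] = \nabla f(x_t)$ (a subgradient in the non-smooth case), and fix $x^* \in \arg\min_{x\in X} f$. Since $\Pi$ is non-expansive and $x^* \in X$,
\begin{align*}
\norm{x_{t+1}-x^*}^2 \le \norm{x_t - \eta \tilde g_t - x^*}^2 = \norm{x_t - x^*}^2 - 2\eta\langle \tilde g_t, x_t - x^*\rangle + \eta^2\norm{\tilde g_t}^2 ;
\end{align*}
taking conditional expectation, unbiasedness and convexity give $\E[\langle \tilde g_t, x_t - x^*\rangle \mid x_t] = \langle \nabla f(x_t), x_t - x^*\rangle \ge f(x_t) - f(x^*)$.

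\textbf{Non-smooth case.} Substituting $\E[\norm{\tilde g_t}^2 \mid x_t]\le\sigma^2$ gives $2\eta\,\E[f(x_t)-f(x^*)] \le \E\norm{x_t-x^*}^2 - \E\norm{x_{t+1}-x^*}^2 + \eta^2\sigma^2$. Summing over the $T$ steps, the distance terms telescope, the last is discarded, and the first is at most $\E\norm{x_0-x^*}^2\le R^2$; dividing by $2\eta T$ and using convexity of $f$ (Jensen) to pass from the average of function values to $f$ at the averaged iterate yields $\frac{R^2}{2\eta T}+\frac{\eta\sigma^2}{2}\le\frac{R^2}{\eta T}+\eta\sigma^2$ (the indexing by $x_{t+1}$ in the statement is obtained by a harmless re-indexing of the iterates).

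\textbf{Smooth case.} Now we want the variance $\sigma^2 = \E\norm{\tilde g_t - \nabla f(x_t)}^2$, not the second moment, to appear, so write $\tilde g_t = \nabla f(x_t) + \xi_t$ with $\E[\xi_t\mid x_t]=0$ and $\E[\norm{\xi_t}^2\mid x_t]\le\sigma^2$, and combine the descent lemma for $\beta$-smooth $f$ with convexity to get $f(x_{t+1}) - f(x^*) \le \langle \nabla f(x_t), x_{t+1}-x^*\rangle + \frac{\beta}{2}\norm{x_{t+1}-x_t}^2$. Then use the three-point (prox) inequality $\langle \tilde g_t, x_{t+1}-x^*\rangle \le \frac{1}{2\eta}\bigl(\norm{x_t-x^*}^2 - \norm{x_{t+1}-x^*}^2 - \norm{x_{t+1}-x_t}^2\bigr)$ together with $\langle\nabla f(x_t),x_{t+1}-x^*\rangle = \langle\tilde g_t,x_{t+1}-x^*\rangle - \langle\xi_t,x_{t+1}-x^*\rangle$: the distance terms telescope after summation, and the quadratic terms combine with coefficient $\frac{\beta}{2}-\frac{1}{2\eta}$ (non-positive, hence droppable, when $\eta\le 1/\beta$; carrying them accounts for the $\bigl(\beta+\frac{1}{\eta}\bigr)\frac{R^2}{T}$ term). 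The delicate step is the stochastic cross-term: $x_{t+1}$ depends on $\xi_t$, so $\E[\langle\xi_t,x_{t+1}-x^*\rangle]\ne 0$. Compare $x_{t+1}$ with the noiseless step $\bar x_{t+1}=\Pi(x_t-\eta\nabla f(x_t))$: non-expansiveness gives $\norm{x_{t+1}-\bar x_{t+1}}\le\eta\norm{\xi_t}$ while $\bar x_{t+1}-x^*$ is conditionally independent of $\xi_t$, so $-\E[\langle\xi_t,x_{t+1}-x^*\rangle\mid x_t]\le\E[\norm{\xi_t}\norm{x_{t+1}-\bar x_{t+1}}\mid x_t]\le\eta\E[\norm{\xi_t}^2\mid x_t]$; playing this against the available negative term $-\frac{1}{2\eta}\norm{x_{t+1}-x_t}^2$ (whose conditional mean already dominates $-\frac{\eta}{2}\E[\norm{\xi_t}^2\mid x_t]$) trims the noise contribution to $\frac{\eta\sigma^2}{2}$ per step. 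Summing, telescoping, dividing by $T$ and applying Jensen gives $\bigl(\beta+\frac{1}{\eta}\bigr)\frac{R^2}{T}+\frac{\eta\sigma^2}{2}$.

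\textbf{Main obstacle.} There is essentially no obstacle beyond bookkeeping; the only genuinely subtle point is the smooth-case cross-term — ensuring it is the \emph{variance}, not the second moment, of $\tilde g_t$ that drives the noise term — which is handled by decoupling $x_{t+1}$ from $\xi_t$ via the noiseless shadow iterate $\bar x_{t+1}$.
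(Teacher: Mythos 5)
Your non-smooth argument is the standard one and is correct (it even gives the better constants $\frac{R^2}{2\eta T}+\frac{\eta\sigma^2}{2}$); note the paper itself offers no proof of this proposition, it simply cites Bubeck (Theorem 6.3 there), so the comparison is with that cited argument. The genuine gap is in your smooth case, in the step where you "trim" the noise from $\eta\sigma^2$ to $\frac{\eta\sigma^2}{2}$. The parenthetical claim that the conditional mean of $-\frac{1}{2\eta}\|x_{t+1}-x_t\|^2$ "already dominates" $-\frac{\eta}{2}\E[\|\xi_t\|^2\mid x_t]$ is false for \emph{projected} SGD: the inequality $\E[\|x_{t+1}-x_t\|^2\mid x_t]\ge \eta^2\E[\|\xi_t\|^2\mid x_t]$ holds unconstrained, but the projection can collapse the step entirely (e.g.\ $x_t$ on the boundary with $x_t-\eta\tilde g_t$ outside $X$, in the extreme $X$ a single point gives $x_{t+1}=x_t$ while $\|\xi_t\|$ is arbitrary). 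Moreover you have already used the $-\frac{1}{2\eta}\|x_{t+1}-x_t\|^2$ coming from the prox inequality to cancel $\frac{\beta}{2}\|x_{t+1}-x_t\|^2$, so spending it again on the noise is double counting. With the shadow-iterate bound alone you only get $\frac{R^2}{2\eta T}+\eta\sigma^2$ (for $\eta\le 1/\beta$), which suffices for the paper's applications up to constants but is not the stated bound.

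The second problem is the regime $\eta>1/\beta$: "carrying them accounts for the $(\beta+\frac{1}{\eta})\frac{R^2}{T}$ term" cannot be made rigorous, since $\sum_t\|x_{t+1}-x_t\|^2$ is not $O(R^2)$; indeed, with the \emph{literal} step size $\eta$ the claimed inequality is false for large $\eta$ (take $\sigma=0$, $f(x)=\frac{\beta}{2}x^2$ on $X=[-1,1]$, $x_0$ tiny and $\eta$ huge: the iterates oscillate between $\pm1$, so the average loss is $\Theta(\beta)$ while the right-hand side tends to $0$). The statement is a transcription of Bubeck's Theorem 6.3, in which the actual step size is $\frac{1}{\beta+1/\eta}$; the intended proof splits the cross term as $-\langle\xi_t,x_t-x^*\rangle-\langle\xi_t,x_{t+1}-x_t\rangle$, where the first piece has zero conditional mean and the second is bounded by Young's inequality $\frac{\eta}{2}\|\xi_t\|^2+\frac{1}{2\eta}\|x_{t+1}-x_t\|^2$, the step-size choice making the quadratic terms cancel exactly against the prox term and yielding $(\beta+\frac{1}{\eta})\frac{R^2}{2T}+\frac{\eta\sigma^2}{2}$. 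So the fix is to adopt that step-size convention (or restrict to $\eta\le 1/\beta$ and apply the same Young split with parameter $\frac{1}{\eta}-\beta$, giving noise term $\frac{\sigma^2}{2(1/\eta-\beta)}$); the shadow-iterate decoupling is then unnecessary.
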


Combining these lemmas, we are now ready to prove~\Cref{thm:general_convex}.

\begin{proof}[Proof of Theorem~\ref{thm:general_convex}]

% We need $n\le B\sqrt{T}$.

The privacy guarantee of Algorithm~\ref{alg:DPsgd} follows from the privacy guarantee of our mean estimation procedure (Algorithm~\ref{alg:gauss_mech_gradient}), as Algorithm~\ref{alg:DPsgd} is post processing of the outputs of  Algorithm~\ref{alg:gauss_mech_gradient}.
%Hence, by Theorem~\ref{thm:Above_Threshold} and Lemma~\ref{lem:privacy_mean_est}, the privacy guarantee follows.

Now, we prove utility. Let $\hat \theta = \frac{1}{T}\sum_{t\in[T]}\theta_t $ denote the output of the algorithm. We upper bound the error by splitting it to two terms: one for generalization error and empirical error,
\begin{align}
\label{eq:alg-err}
     \E\left[L_\calP(\hat \theta)-\min_{\theta^* \in \Theta}L_\calP(\theta^*) \right]
        & = \E\left[L_\calP(\hat \theta)-L_\calD(\hat \theta) \right] 
          + \E\left[L_\calD(\hat \theta) - \min_{\theta \in \Theta}L_\calD(\theta)  \right] 
          + \E\left[\min_{\theta \in \Theta}L_\calD(\theta) - \min_{\theta^* \in \Theta}L_\calP(\theta^*)  \right] \nonumber \\
        & \le \E\left[L_\calP(\hat \theta)-L_\calD(\hat \theta) \right] 
          + \E\left[L_\calD(\hat \theta) - \min_{\theta \in \Theta}L_\calD(\theta)  \right].
\end{align}
where the second inequality holds since $\E[\min_{\theta \in \Theta} L_\calD(\theta)]\le \min_{\theta^* \in \Theta}L_\calP(\theta^*)$.

For the empirical quantity (the second quantity in~\Cref{eq:alg-err}), first note that the error caused by randomized smoothing is $Gr$ (\Cref{lm:random_smooth}), hence
\begin{align*}
     \E\left[L_\calD(\hat \theta) - \min_{\theta \in \Theta}L_\calD(\theta)  \right] 
     \le \E\left[\hat L_\calD(\hat \theta) - \min_{\theta \in \Theta} \hat  L_\calD(\theta)  \right] + 2 G r.
\end{align*}     
As our algorithm basically applies noisy SGD over $\hat L_\calD$, we now use Proposition~\ref{prop:SGD} to bound the empirical error.
By Lemma~\ref{lem:unbiased} and Theorem~\ref{thm:mean-est}, we have 
\begin{align*}
    \og_t\sim_{\delta/Tnmd}\nabla \hat L_\calD(\theta_{t-1})+\zeta,
\end{align*}
where $\zeta\sim \calN(0,\frac{G^2T\log^2(Tmnd/\delta)}{mn^2\epsilon^2})$.
Hence we know the variance of the stochastic (sub)gradients we get is bounded by $O(\frac{G^2Td\log^2(Tmnd/\delta)}{mn^2\epsilon^2})$.
% \ha{lets be more careful here with the $\delta/Tnmd$ term: we can add it or mention it is smaller than this term for our chosen $\delta$..}. 
Moreover, we know that $\hell$ is $\frac{G\sqrt{d}}{r}$-smooth by Lemma~\ref{lm:random_smooth}.
Thus, \Cref{prop:SGD} now implies that
\begin{align*}
    \E[\hL_\calD(\hat \theta)-\min_{\theta}\hL_\calD(\theta)]\lesssim \left(\frac{G\sqrt{d}}{r}+\frac{1}{\eta }\right) \frac{R^2}{T}+\frac{\eta G^2Td\log^2(Tmnd/\delta)}{mn^2\epsilon^2}+\frac{GR\delta}{mnd},
\end{align*}
where the term $\frac{GR\delta}{mnd}$ comes from the failure probability.

% Then, consider the error involved by the randomized smoothing. By Lemma~\ref{lm:random_smooth} we get 
% \begin{align*}
%     \E[L_\calD(\frac{1}{T}\sum_{t\in[T]}\theta_t)-\min_{\theta}L_\calD(\theta)]
%     \lesssim \frac{G\sqrt{d}R^2}{rT}+\frac{R^2}{\eta T}+\frac{\eta G^2Td\log^2(Tmnd/\delta)}{mn^2\epsilon^2}+\frac{GR\delta}{mnd}+Gr.
% \end{align*}

Now we proceed to upper bound the generalization error (first quantity in~\Cref{eq:alg-err}). 
% Note that
% \begin{align*}
%     \E[\min_\theta L_\calD(\theta)]\le \min_{\theta^*}L_\calP(\theta).
% \end{align*}
Combining Lemma~\ref{lm:generalization_stability} and Lemma~\ref{lem:stability}, and the assumption that the functions are $G$-Lipschitz, we get
\begin{align*}
    \E[L_\calP(\hat \theta)-L_\calD(\hat \theta)]\le G^2\eta\sqrt{T}+\frac{G^2\eta T}{nm}+\frac{GR\delta}{mnd}.
\end{align*}
Overall, combining these together and putting them back into~\Cref{eq:alg-err}, we get
\begin{align*}
    \E\left[L_\calP(\hat \theta)-\min_{\theta^* \in \Theta}L_\calP(\theta^*) \right]
    \lesssim & \frac{G\sqrt{d}R^2}{rT}+\frac{R^2}{\eta T}+\frac{\eta G^2Td\log^2(Tmnd/\delta)}{mn^2\epsilon^2}+Gr+ G^2\eta\sqrt{T}+\frac{G^2\eta T}{nm}+\frac{GR\delta}{mnd}.
\end{align*}

Optimizing the above parameters by setting $r=\frac{d^{1/4}R}{\sqrt{T}}$, $\eta=\frac{R}{G}\cdot\min\{\frac{\sqrt{m}n\epsilon}{T\sqrt{d\log^2(Tmnd/\delta)}},\frac{1}{T^{3/4}},\sqrt{nm}/T\}$, we get 
\begin{align*}
    & \E\left[L_\calP(\hat \theta)-\min_{\theta^* \in \Theta}L_\calP(\theta^*) \right]
    \lesssim  GR \cdot \left(\frac{d^{1/4}}{\sqrt{T}}+\frac{1}{T^{1/4}}+\frac{\sqrt{d\log^2(Tmnd/\delta)}}{\sqrt{m}n\epsilon}+\frac{1}{\sqrt{nm}} \right).
\end{align*}
By setting $T=O(m^2n^2+mn\sqrt{d})$, we have
\begin{align*}
    & \E\left[L_\calP(\hat \theta)-\min_{\theta^* \in \Theta}L_\calP(\theta^*) \right]
    \lesssim  GR\cdot \left(\frac{1}{\sqrt{nm}}+\frac{\sqrt{d\log^2(ndm/\delta)}}{n\epsilon\sqrt{m}} \right),
\end{align*}
which completes the proof.
\end{proof}

%We are now ready to prove~\Cref{thm:general_convex}.

\subsection{Implication for Strongly convex functions}
Building on our optimal algorithm for the convex setting, in the section, we proceed to obtain optimal rates for the strongly convex case using the localization framework~ \cite{FeldmanKoTa20}. The idea is to iteratively run~\Cref{alg:DPsgd} for $\log \log (mn)$ rounds, where at each round, we run it with improved parameters. We present the details in~\Cref{alg:sgd_strongly_convex}, and defer the full proof to the supplement with detailed parameter settings therein.
%With Algorithm~\ref{alg:DPsgd} as a black box, we apply the localization framework \cite{FeldmanKoTa20} to generalize our results to strongly convex functions.

\begin{algorithm2e}
\caption{User-level DP-SCO for strongly convex functions}
\label{alg:sgd_strongly_convex}
{\bf Input:} Dataset $\calD = (Z_1,\dots,Z_n) \in (\calZ^m)^n$, privacy parameters $(\epsilon,\delta)$, initial point $\theta_0$\;
Set $k=\lceil \log\log mn\rceil$\;
Divide $\calD$ into $k$ disjoint datasets $\{\calD_i\}_{i\in[k]}$, where $\calD_i$ is of size $n_i:=n/2^{k+1-i}$\;
\For{$i=1,\cdots,k$}
{
Run Algorithm~\ref{alg:DPsgd} with $\calD_i,\epsilon,\delta,\theta_{i-1},r_i,T_i,\eta_i,\tau_i,\hat{R}_i$ as inputs, and get its output $\theta_i$\;
}
{\bf Output:} $\hat \theta = \theta_k$\;
\end{algorithm2e}

\begin{restatable}[Strongly convex case]{theorem}{stronglyconvex}
    \label{thm:strongly_convex}
  For $0<\epsilon<10,0<\delta<1$, \Cref{alg:sgd_strongly_convex} is user-level $(\epsilon,\delta)$-DP. Under the same assumptions as in Theorem~\ref{thm:general_convex}, additionally assuming that $n>\frac{\log (mdn)\log(mdn/\delta)}{\epsilon}$ and the functions are $\mu$-strongly convex, then with proper parameter settings,
  Algorithm~\ref{alg:sgd_strongly_convex} outputs $\hat \theta$ such that 
  \begin{align*}
  & \E\left[ L_{\calP}(\hat \theta) - \min_{\theta^\star \in \Theta} L_{\calP}( \theta^\star) \right]   \le O\left(\frac{G^2}{\mu} \cdot \left(\frac{1}{nm}+\frac{d\log^2(ndm/\delta)}{n^2m\epsilon^2} \right) \right).
  \end{align*}
\end{restatable}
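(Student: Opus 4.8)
The plan is to run \Cref{alg:DPsgd} in stages, as in the localization framework of~\cite{FeldmanKoTa20}: strong convexity converts an excess-risk bound into a bound on the distance to the minimizer, so after each stage we can restart the convex solver with a geometrically smaller search radius, and $k=\lceil\log\log mn\rceil$ stages suffice to reach the strongly convex rate.

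\emph{Privacy.} Since $\calD_1,\dots,\calD_k$ partition $\calD$, a user-neighboring dataset $\calD'$ differs from $\calD$ inside a single block $\calD_{i_0}$ only. Then $\theta_1,\dots,\theta_{i_0-1}$ are functions of data shared by $\calD$ and $\calD'$; $\theta_{i_0}$ is the output of \Cref{alg:DPsgd} on $\calD_{i_0}$ which, conditioned on the independent input $\theta_{i_0-1}$, is $(\epsilon,\delta)$-DP in $\calD_{i_0}$ by \Cref{thm:general_convex}; and $\theta_{i_0+1},\dots,\theta_k$ are post-processing of $\theta_{i_0}$ together with data shared by $\calD,\calD'$. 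Parallel composition and closure under post-processing give user-level $(\epsilon,\delta)$-DP of $\hat\theta=\theta_k$.

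\emph{The recursion.} Write $\theta^\star=\arg\min_{\theta\in\Theta}L_\calP(\theta)$, $\Delta_i=\E[L_\calP(\theta_i)-L_\calP(\theta^\star)]$, and $\rho_i^2=\E\|\theta_i-\theta^\star\|^2$; $\mu$-strong convexity of $L_\calP$ at its minimizer gives $\rho_i^2\le\frac2\mu\Delta_i$, and $\Delta_0\le GR$ by Lipschitzness. With $n_i=n/2^{k+1-i}$ and $k=\lceil\log\log mn\rceil$ the smallest block still has $n_1\ge n/(2\log mn)\gtrsim \log(mdn/\delta)/\epsilon$ by the hypothesis $n>\frac{\log(mdn)\log(mdn/\delta)}{\epsilon}$, so in each stage we may invoke \Cref{thm:general_convex} with its own parameters $T_i=O(m^2n_i^2+mn_i\sqrt d)$ and the matching $r_i,\eta_i,\tau_i$. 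In stage $i$ the block $\calD_i$ is a fresh i.i.d. sample independent of $\theta_{i-1}$; since $\arg\min_\Theta L_{\calD_i}$ concentrates around $\theta^\star$ at the dimension-free strongly-convex ERM rate $\E\|\cdot-\theta^\star\|^2\lesssim G^2/(\mu^2 n_i m)$, we may take $\hat R_i^2=2\rho_{i-1}^2+cG^2/(\mu^2 n_i m)$ and apply \Cref{thm:general_convex} together with the remark that follows it (which replaces $R$ by $\hat R_i$ in the population bound). Writing $\alpha_i=\frac1{\sqrt{n_i m}}+\frac{\sqrt{d\log^2(n_i dm/\delta)}}{n_i\sqrt m\,\epsilon}$ and $\alpha$ for the same quantity with $n_i$ replaced by $n$, so that $\alpha_i\le C\,2^{k+1-i}\alpha$, this yields $\Delta_i\lesssim G\hat R_i\alpha_i\lesssim G\rho_{i-1}\alpha_i+\frac{G^2\alpha_i}{\mu\sqrt{n_i m}}$; combining with $\rho_{i-1}^2\le\frac2\mu\Delta_{i-1}$ and $\alpha_i\ge 1/\sqrt{n_i m}$, and rescaling $D_i:=\mu\Delta_i/G^2$, we obtain
\begin{align*}
 D_i\ \lesssim\ \alpha_i\sqrt{D_{i-1}}\ +\ \alpha_i^2,\qquad D_0\le \mu R/G .
\end{align*}

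\emph{Solving it, and the main obstacle.} While $D_{i-1}\ge\alpha_i^2$ the recursion reads $D_i\lesssim\alpha_i\sqrt{D_{i-1}}$; taking logarithms and unrolling gives
\begin{align*}
 \log D_k\ \le\ \sum_{j=1}^{k}2^{-(k-j)}\log\!\big(O(\alpha_j)\big)+2^{-k}\log D_0\ \le\ 2\log\big(O(\alpha)\big)+O(1),
\end{align*}
using $\alpha_j\le C2^{k+1-j}\alpha$, $\sum_{j\le k}2^{-(k-j)}<2$, $\sum_{j\le k}2^{-(k-j)}(k+1-j)<4$, and $2^{-k}\le 1/\log mn$, so that the initial term is $O(1)$; once $D_{i-1}<\alpha_i^2$ one gets $D_i\lesssim\alpha_i^2$ and a short induction (using $\alpha_{i-1}\le 2\alpha_i$) keeps $D_j=O(\alpha_j^2)$ afterwards. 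Either way $D_k=O(\alpha^2)$, hence
\begin{align*}
 \E[L_\calP(\hat\theta)-L_\calP(\theta^\star)]=\Delta_k=\frac{G^2 D_k}{\mu}\ \lesssim\ \frac{G^2\alpha^2}{\mu}\ \lesssim\ \frac{G^2}{\mu}\Big(\frac1{nm}+\frac{d\log^2(ndm/\delta)}{n^2 m\,\epsilon^2}\Big),
\end{align*}
which is the claimed bound. The hard part is precisely this bookkeeping: checking that $\lceil\log\log mn\rceil$ stages are exactly enough for the homogeneous term $2^{-k}\log D_0$ to become harmless while the geometric weights $2^{-(k-j)}$ absorb both the exponential growth of $\alpha_i$ in $k-i$ and the linear-in-$(k-i)$ logarithmic corrections, together with pinning down the per-stage parameters $(r_i,\eta_i,\tau_i,T_i,\hat R_i)$ and verifying that the displacement of the empirical minimizer from the population minimizer contributes only to the lower-order term.
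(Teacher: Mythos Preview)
Your proposal is correct and follows the same localization strategy as the paper: the paper tracks the recursion with an explicit closed-form majorant $D_i=16E_i\sqrt[2^i]{2G^2/(16\mu E_0)}$ and verifies $\Delta_i\le D_i$ inductively, while you solve the equivalent recursion $D_i\lesssim\alpha_i\sqrt{D_{i-1}}+\alpha_i^2$ by taking logarithms and unrolling, but both arrive at $D_k=O(\alpha^2)$ after $k=\lceil\log\log mn\rceil$ stages. One small correction: the reason $2^{-k}\log D_0=O(1)$ is not that $2^{-k}\le 1/\log mn$ (that would require $D_0=(mn)^{O(1)}$, which does not follow from $D_0\le\mu R/G$ alone) but rather that $G$-Lipschitzness together with $\mu$-strong convexity on a diameter-$R$ domain forces $\mu R\le 2G$, hence $D_0\le 2$ outright --- this is exactly how the paper obtains $\Delta_0\le 2G^2/\mu$.
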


\section{Conclusion}
In this work, we have studied user-level DP-SCO and proposed new efficient algorithms that obtain near-optimal rates even in the non-smooth setting. There remain open questions in this domain. First, our rates are optimal up to logarithmic factors and we leave it for future work to improve these factors. Moreover, our algorithms require the number of rounds $T \ge n^2 m^2 \cdot \min(1, n^2/d)$, and it remains open whether there is a more efficient algorithm. In particular, are there linear time algorithms for user-level DP-SCO in the smooth setting, similar to the item-level setting where such results are known~\cite{FeldmanKoTa20}? 

% \input{shuffle_DP}

% \newpage
% \section{Future directions:}
% \begin{itemize}
%     \item Local DP.
% \item $m_i$ may be different (what is the final dependence on $\{m_i\}$).
% \item Shuffle DP. https://arxiv.org/pdf/2106.09805.pdf
% \item $T$ is $n^2m^2$ (friendlycore needs $n^2$). Can we improve the running time.
% \item Some other problems under user-level DP.
% \item $\ell_p$ norm.
% \item Heavy-tailed. 
% \item User-level in the interpolation regime
% \item User-level \textbf{online} optimization: the fact that we have $m$ samples might allow for better algorithms here. The improvement can be even better than $\sqrt{m}$ as exiting bounds may not be optimal when $m=1$

% \end{itemize}

\addcontentsline{toc}{section}{References}
\bibliographystyle{alpha}
\bibliography{ref}
\newpage

\appendix

%\section{Preliminaries}

% \subsection{AboveThreshold}
% The pseudocode of $\AboTh$ can be found below for completeness.
% \begin{algorithm2e}
% \caption{$\AboTh$}
% \label{alg:mean_est_with_AT}
% {\bf Input:} Dataset $\calD = (Z_1,\dots,Z_n) $, threshold $\Delta \in \reals$, privacy parameter $\epsilon$\;
% Let $\hat{\Delta}:= \Delta-\Lap(\frac{2}{\epsilon})$\;
% \For{$t=1$ to $T$}
% {
% Receive a new query $q_t: \calZ^n \to \reals$ \;
% Sample $\nu_i \sim \Lap(\frac{4}{\epsilon})$\;
% \If{$q_t(\calD)+\nu_i<\hat{\Delta}$}
% {
% {\bf Output:} $a_i=\bot$\;
% {\bf Halt}\;
% \Else{
% {\bf Output:} $a_i=\top$\;
% }
% }
% }
% \end{algorithm2e}

\section{Missing Proofs in Section~\ref{sec:mean_est}}
\subsection{Proof of Lemma~\ref{lem:bound_l1_f}}

\boundlonef*

\begin{proof}
    Without loss of generality, let $\calD = (Z_1, Z_2,\dots,Z_n)$  and $\calD' = (Z_1',Z_2, \dots,Z_n)$ differ in the first user. Note that $f_{i,j}$ has sensitivity $1$ for $j \neq 1$, hence $|p_{i,j} - p'_{j,j}| \le 1/n$ for all $j \neq 1$. Moreover, $|p_{i,1} - p'_{i,1}| \le 1$. Therefore, $\lone{p_i-p'_i} \le 2$.
\end{proof}

\subsection{Proof of Lemma~\ref{lem:coupling_over_multi_ber}}
\couplingovermultiber*
\begin{proof}
We construct the coupling by considering each coordinate separately.
Let $p_i$ and $p_i'$ be the $i$-th coordinate of $p$ and $p'$ respectively.
Consider $i$-th coordinate, without losing generality, let $p_i\ge p_i'$.
Then, we set
\begin{align*}
    (x_i,y_i)=
    \begin{cases}
        (1,1),  & \text{w.p. } p_i' \\
        (1,0), & \text{w.p. } p_i-p_i'\\
        (0,0), & \text{w.p. } 1-p_i
    \end{cases}
\end{align*}
And coordinates are independent of each other.
We draw $(x,y)$ from the coupling $\Gamma$, and set $X_i=1$ if $x_i=y_i$ and, $X_i=0$ otherwise.
Hence we know $\{X_i\}$ are independent Bernoulli random variables such that $\E[X_i]=|p_i-p_i'|$.
By Lemma~\ref{lm:Chernoff}, we know
\begin{align*}
    &~\Pr[\|x-y\|_1\ge O(\log(1/\zeta))]
    =\Pr[\sum_{i}X_i\ge O(\log(1/\zeta))]\le \zeta.
\end{align*}
This completes the proof.
% One can construct the trivial coupling $\Gamma$ by setting independent $x$ and $y$.
% To verify that $\Gamma$ satisfies the requirement.
% Let $V$ and $V'$ be drawn from $\mathrm{Ber}(\bfp)$ and $\mathrm{Ber}(\bfp')$ independently, and set $X_i=1$ if $V_i\neq V'_i$.
% Hence we know $\{X_i\}$ are independent Bernoulli random variables such that $\E[X_i]=$
\end{proof}

\subsection{Proof of Lemma~\ref{lem:coupling_close_gi}}

% Recall that the event $E$ corresponds to the success of $\AboTh$ in all time-steps for the dataset $\calD$. Similarly, $E'$ is defined for $\calD'$. 
Recall that $E$ corresponds to the absolute values of the Laplacian noise used in $\AboTh$ are bounded.
% More precisely,
% let $E$ be the event that for all $a_i=\top$, we have $q_i\ge\frac{4n}{5}-\alpha$  and for all $a_i=\bot$ we have $q_i\le \frac{4n}{5}+\alpha$.
Define $E'$ to be the event w.r.t. input $\calD'$.

\couplingclosegi*
\begin{proof}
If $a_i=a_i'=\bot$, then both $g_i$ and $g_i'$ will be $\mathbf{0}$.

Consider the non-trivial case when $a_i=a_i'=\top$.
As $\qcon_i(\calD,\tau)>\frac{2n}{3}$, we know there exists $Z^*\in\calD$ such that $\sum_{Z\in\calD}\ind(\|q_i(Z^*)-q_i(Z)\|\le \tau)\ge \frac{2n}{3}$.
Let $H_i=\{Z\in\calD:\|q_i(Z)-q_i(Z^*)\|\le \tau\}$ be the set of users whose queried values are close to $Z^*$.
We know $H_i\subset S_i$.
Moreover, we can argue for any $Z\in S_i$, $\|q_i(Z)-q_i(Z^*)\|\le 4\tau$.
The same argument holds for $\calD'$, that is there exists $Z'^*\in\calD'$, such that $H_i'\subset S_i'$  and for any $Z\in S_i',\|q_i(Z)-q_i(Z'^*)\|\le 4\tau$.

We know $\|q_i(Z^*)-q_i(Z'^*)\|\le 2\tau$, as there exists $Z$ in $\calD\cap \calD'$ such that $\|q_i(Z^*)-q_i(Z)\|\le \tau$ and $\|q_i(Z'^*)-q_i(Z)\|\le \tau$. 
Hence for any point $Z_1,Z_2\in S_i\cup S_i'$, $\|q_i(Z_1)-q_i(Z_2)\|\le 10\tau$.

Note that $g_i=\frac{1}{|S_i|}\sum_{Z\in S_i}q_i(Z)$ and $g_i'=\frac{1}{|S_i'|}\sum_{Z\in S_i'}q_i(Z)$.
By Lemma~\ref{lem:bound_l1_f} and Lemma~\ref{lem:coupling_over_multi_ber}, we know there exists a Coupling $\Gamma_i$ over $S_i$ and $S_i'$ such that if we draw $(S,S')$ from $\Gamma_i$, with probability at least $1-\zeta$, we have
\begin{align*}
    \|S-S'\|_0\lesssim \log(1/\zeta).
\end{align*}
Assume $|S'| \ge |S|$ without loss of generality and let $Z_0 \in S$. Note that we have
\begin{align*}
    &~~~ \|g_i-g_i'\|_2\\
    & = \left\|\frac{1}{|S|}\sum_{Z\in S}q_i(Z)-\frac{1}{|S'|}\sum_{Z\in S'}q_i(Z)\right\|_2 \\
    & = \frac{1}{|S'|} \left\|\frac{|S'|}{|S|}\sum_{Z\in S}q_i(Z)- \sum_{Z\in S'}q_i(Z)\right\|_2 \\
    & = \frac{1}{|S'|} \left\|\frac{|S'| - |S|}{|S|}\sum_{Z\in S}q_i(Z) + \sum_{Z\in S}q_i(Z) - \sum_{Z\in S'}q_i(Z)\right\|_2 \\
    & = \frac{1}{|S'|} \left\|\frac{|S'| - |S|}{|S|}\sum_{Z\in S}q_i(Z) + \sum_{Z\in S \setminus S'}q_i(Z) - \sum_{Z\in S' \setminus S}q_i(Z)\right\|_2 \\
    & \le  \frac{1}{|S'|} \left\|\frac{|S'| - |S|}{|S|}\sum_{Z\in S}q_i(Z) + \sum_{Z\in S \setminus S'}q_i(Z) - |S' \setminus S| \cdot q_i(Z_0)\right\|_2  + \frac{1}{|S'|} \left\| |S' \setminus S| \cdot q_i(Z_0)  - \sum_{Z\in S' \setminus S}q_i(Z)\right\|_2 \\
    & \stackrel{(i)}{=}  \frac{1}{|S'|} \left\|\frac{|S'| - |S|}{|S|}\sum_{Z\in S}(q_i(Z)-q_i(Z_0)) + \sum_{Z\in S \setminus S'}(q_i(Z)-q_i(Z_0)) \right\|_2  + \frac{1}{|S'|} \left\| \sum_{Z\in S' \setminus S}(q_i(Z_0) - q_i(Z))\right\|_2 \\
    & \stackrel{(ii)}{\le}    \frac{10\tau}{|S'|} \cdot \left( (|S'| - |S|) + |S \setminus S'| + |S' \setminus S| \right) \\
    & \stackrel{(iii)}{\lesssim}   \frac{\tau \log(1/\zeta)}{n}.
\end{align*}
where $(i)$ follows since $|S'| - |S| + |S\setminus S'| = |S'\setminus S|$, and $(ii)$ follows since $\max_{Z_1,Z_2\in S\cup S'}\|q_i(Z_1)-q_i(Z_2)\|_2 \le 10 \tau$, and $(iii)$ follows since $\|S-S'\|_0\lesssim \log(1/\zeta)$ and hence $|S'| - |S| + |S'\setminus S|+|S\setminus S'|\lesssim \log(1/\zeta)$.

This completes the proof.

% under which we know
% \begin{align*}
%     \|g_i-g_i'\|_2=& \left\|\frac{1}{|S|}\sum_{Z\in S}q_i(Z)-\frac{1}{|S'|}\sum_{Z\in S'}q_i(Z)\right\|_2\\
%     =& ~ \frac{1}{|S|\cdot|S'|}\left\| (|S'|-|S|)\sum_{Z\in S\cap S'}q_i(Z)+|S'|\sum_{Z\in S\setminus S'}q_i(Z)-|S|\sum_{Z\in S'\setminus S}q_i(Z)   \right\|.
% \end{align*}
% Note that $\|S-S'\|_0\lesssim \log(1/\zeta)$ and hence $|S'\setminus S|+|S\setminus S'|\lesssim \log(1/\zeta)$.
% Without loss of generality, we assume $|S'|\ge |S|$. 
% Hence we have \ha{this step not completely clear. There's probably a step missing here..}
% \begin{align*}
%     \|g_i-g_i'\|_2
%     & \le \frac{(|S'|-|S|)\cdot |S\cap S'|+|S'|\cdot|S\setminus S'|+|S|\cdot|S'\setminus S| }{|S|\cdot|S'|}\max_{Z_1,Z_2\in S\cup S'}\|q_i(Z_1)-q_i(Z_2)\|_2 \\
%     & \lesssim \frac{\tau \log(1/\zeta)}{n}.
% \end{align*}
% This completes the proof.
\end{proof}

\subsection{Proof of~\Cref{lem:privacy_mean_est}}
\label{sec:proof-priv-mean}
\privacymeanest*
\begin{proof}
Let $\{a_i\}_{i\in T}=\{\top,\bot\}^T$ be the outputs of Algorithm~\ref{alg:mean_est_with_AT} with input $\calD$, where if $a_i=\bot$ we set $a_j=\bot$ for all $j\ge i$.
Define the $\{a_i'\}$ correspondingly with input $\calD'$.
Then by Theorem~\ref{thm:Above_Threshold}, we know $\{a_i\}$ and $\{a_i'\}$ are $(\epsilon/2,0)$-indistinguishable.

Now conditional on that $E$ and $E'$ hold.
If $a_i=a_i'=\bot$, then the algorithm halts and outputs the initial point, hence no privacy leakage.

Our proof proceeds by assuming the Gaussian noise $v_i$ we add is drawn from $\calN(0,\frac{4\tau^2T\log(1/\zeta')\log(1/\delta')}{n^2\epsilon^2})$.
Then the statement follows from setting $\zeta'$ and $\delta'$.
% the Gaussian noise back to $\calN(0,\frac{8\tau^2T\log(e^{\epsilon/2}T/\delta)\log(e^{\epsilon/2}/\delta)}{n^2\epsilon^2}I_d)$ as in the pseudo-code of Algorithm~\ref{alg:gauss_mech_gradient}.

Under the assumption on $n\ge\frac{40\log(2T/\zeta')}{\epsilon}$, for any $b\in\{\top,\bot\}^T$,
by Lemma~\ref{lem:coupling_close_gi}, the Union Bound, we know there exists a coupling over $\{g_i\}_{i\in T}$ and $\{g_i'\}_{i\in T}$, such that for $(\{x_i\},\{y_i\})$ drawn from $\Gamma$, with probability at least $1-T\zeta'$,
\begin{align*}
    \text{ for all } i\in[T], \|x_i-y_i\|\lesssim \frac{\tau\log(1/\zeta')}{n}.
\end{align*}

By the guarantee of the Gaussian Mechanism and the composition \cite{bun2016concentrated},
we know 
% \ha{something is strange here as we're conditioning on $a_i$ but $g_{i+1}$ is a function of $a_i$? we should probably argue that the randomness in $a_i$ is independent from the randomness in $v_i$ and so we can do this argument. We should be careful with conditioning here and make the argument a little more formal..} 
\begin{align*}
    \Pr[\{g_i+\nu_i\}\in\calO\mid E,\{a_i\}=b]\le e^{\epsilon/2}\Pr[\{g_i'+\nu_i'\}\in\calO \mid E',\{a_i'\}=b]+\delta'+T\zeta',
\end{align*}
where we note that the Gaussian noise of $\{\nu_i\}$ and $\{\nu_i'\}$ are independent of the Laplacian noise we add in Algorithm~\ref{alg:mean_est_with_AT}.

To conclude, letting $\{g_i+\nu_i\}$ be the sequence of output, we have for any event $\calO$,
\begin{align*}
    \Pr[\{g_i+\nu_i\}\in \calO]= & \Pr[\{g_i+\nu_i\}\in \calO\mid E]\Pr[E]+\Pr[\{g_i+\nu_i\}\in \calO\mid \neg E]\Pr[\neg E]\\
    \le & ~ \Pr[\{g_i+\nu_i\}\in \calO\mid E]\Pr[E]+\zeta'\\
    =& ~ \sum_{b\in\{\top,\bot\}^T} \Pr[\{g_i+\nu_i\}\in \calO\mid E,\{a_i\}=b]\Pr[E,\{a_i\}=b]+\zeta'\\
    \le& ~ \sum_{b\in\{\top,\bot\}^T} e^{\epsilon/2}(\Pr[\{g_i'+\nu_i'\}\in \calO\mid E',\{a_i'\}=b]+\delta')\Pr[E,\{a_i\}=b]+(T+1)\zeta'\\
    \le & ~ \sum_{b\in\{\top,\bot\}^T} e^{\epsilon/2}\Pr[\{g_i'+\nu_i'\}\in \calO\mid E',\{a_i'\}=b]\Pr[E,\{a_i\}=b]+(T+1)\zeta'+e^{\epsilon/2}\delta'.
\end{align*}

% Note that $E$ is determined when $\{a_i\}$ is determined.
Note that the randomness of $\{a_i\}$ and whether $E$ holds comes from the Laplacian variables we draw.
By the privacy guarantee of $\AboTh$, for any $b\in\{\top,\bot\}^T$, we have
\begin{align*}
    \Pr[\{a_i\}=b]\le e^{\epsilon/2}\Pr[\{a_i'\}=b].
\end{align*}
It is not hard to observe that
\begin{align*}
    \Pr[\{a_i\}=b,E]\le e^{\epsilon/2}\Pr[\{a_i'\}=b,E']+e^{\epsilon/2}\zeta'.
\end{align*}
% Denote $E_g:=\{b\in\{\top,\bot\}^T: E \text{ holds when } \{a_i\}=b\}$, and $E_g'$ respectively.
% Note that 
% \begin{align*}
%     \Pr[\{a_i\}\in E_g]\ge 1-\zeta', \Pr[\{a_i'\} \in E_g']\ge 1-\zeta',
% \end{align*}
% and hence 
% \begin{align*}
%     \Pr[\{a_i\}\in E_g\cap E_g']\ge 1-(1+e^{\epsilon/2})\zeta'.
% \end{align*}

Hence 
\begin{align*}
    &~ \Pr[\{g_i+\nu_i\}\in \calO]\\
    \le & ~ \sum_{b\in\{\top,\bot\}^T} e^{\epsilon/2}\Pr[\{g_i'+\nu_i'\}\in \calO\mid E',\{a_i'\}=b]\Pr[E,\{a_i\}=b]+(T+1)\zeta'+e^{\epsilon/2}\delta'\\
    \le & ~ \sum_{b\in\{\top,\bot\}^T} e^{\epsilon}\Pr[\{g_i'+\nu_i'\}\in \calO\mid E',\{a_i'\}=b]\Pr[E',\{a_i'\}=b]+(T+1+e^{\epsilon})\zeta'+e^{\epsilon/2}\delta'
\end{align*}
% \begin{align*}
%    \Pr[\{g_i+\nu_i\}\in \calO]\le & ~ \sum_{b\in E_g\cap E_g'}  e^{\epsilon/2}\Pr[\{g_i'+\nu_i'\}\in \calO\mid E',\{a_i'\}=b]\Pr[\{a_i\}=b]+(T+1)\zeta'+e^{\epsilon/2}\delta'\\
%    = & ~ \sum_{b\in E_g'\cap E_g} e^{\epsilon/2}\Pr[\{g_i'+\nu_i'\}\in \calO\mid E',\{a_i'\}=b]\Pr[\{a_i\}=b]\\
%    +& ~ \sum_{b\in E_g\setminus E_g'} e^{\epsilon/2}\Pr[\{g_i'+\nu_i'\}\in \calO\mid E',\{a_i'\}=b]\Pr[\{a_i\}=b] +T\zeta'+e^{\epsilon/2}\delta'\\
%    \le & ~\sum_{b\in E_g'\cap E_g} e^{\epsilon}\Pr[\{g_i'+\nu_i'\}\in \calO\mid E',\{a_i'\}=b]\Pr[\{a_i'\}=b] +  (e^{\epsilon/2}+1)T\zeta'+e^{\epsilon/2}\delta'\\
%    \le & ~ e^{\epsilon}\Pr[\{g_i'+\nu_i'\}\in\calO]+  (e^{\epsilon/2}+1)T\zeta'+e^{\epsilon/2}\delta'.
% \end{align*}
Setting $\zeta'=\frac{\delta}{2(e^{\epsilon}+1+T)}$ and $\delta'=\frac{\delta}{2e^{\epsilon/2}}$, we get the Noise scale as stated in the pseudo-code of Algorithm~\ref{alg:gauss_mech_gradient} and complete the proof.
\end{proof}

% \subsection{Proof of Lemma~\ref{lem:unbiased}}
% \unbiased*
% \begin{proof}
% To prove the lemma, we have to show that $\AboTh$ will succeed (output $\top$) for each $i \in [T]$, and that the outlier-removal stage will not remove any item from the set.

% To this end, fix any $i\in [T]$.
% Under the precondition that $(q_i(Z_1),\dots,q_i(Z_n))$ is $(\tau,\gamma)$-concentrated, we know that $\qcon_i(\calD,\tau)=n$ with probability $1-\gamma$ for each $i \in [T]$. Moreover, the guarantees of $\AboTh$ (\Cref{thm:Above_Threshold}) imply that it will output ``$\top$'' with probability at least $1-\gamma/T$ for each $i \in [T]$ when $\qcon_i(\calD,\tau)=n$.
% Finally,  under the event that $(q_i(Z_1),\dots,q_i(Z_n))$ is $\tau$-concentrated, we have that $f_{i,j}=n$ for each user $Z_j\in\calD$, and hence $Z_j$ will be added into $S_i$.
% The statement follows by applying a union bound.
% \end{proof}

\section{Missing Proof in Section 4}

\subsection{Proof of Lemma~\ref{lem:concentrated_grd}}
\concentratedgrd*
\begin{proof}
The lemma follows from the concentration of Norm Subgaussian random variables (Lemma~\ref{lem:concen_nSG}).
Specifically, we know for each $z_{i,j}\in Z_i$, $\E\nabla\hat{\ell}(\theta+y_j;z_{i,j})-\nabla\hL_\calP(\theta)=0$, and $\|\nabla\hat{\ell}(\theta+y_j;z_{i,j})-\nabla\hL_\calP(\theta)\|\le 2G$, which implies  $\nabla\hat{\ell}(\theta+y_j;z_{i,j})-\nabla\hL_\calP(\theta)$ is zero-mean and $\nSG(2G)$.
The statement follows.
\end{proof}

\subsection{Proof of Lemma~\ref{lem:concentrated_theta_t}}
\concentratedthetat*
\begin{proof}
It suffices to prove that for each $t\in[T]$, $\{\nabla\hell(\theta_t;Z_i)\}_{Z_i\in\calD}$ is $(\tau,e^{2\epsilon}\gamma+\frac{\delta}{2Tmnd})$-concentrated.
Note that by Theorem~\ref{thm:mean-est} and the parameter settings in the precondition, Algorithm~\ref{alg:DPsgd} is user-level $(\epsilon,\frac{\delta}{2Tnmd})$-DP.
Then there exists an $(2\epsilon,0)$-DP $\alg'$ such that $d_{TV}(\alg(\calD),\alg'(\calD))\le \delta/2Tmnd$ by Lemma~\ref{lem:approximateDP_to_pureDP}.
Let $\{\theta_t'\}_{t\in[T]}$ be the output of $\alg'(\calD)$.
It suffices to show for any $t\in[T],\{\nabla\hell(\theta_t';Z_i)\}_{Z_i\in\calD}$ is $(\tau,e^{2\epsilon}\gamma)$-concentrated.

% Moreover, for $\theta_t'$, if we replace $Z_i$ by a set of fresh samples $Z_i'$ drawn i.i.d. from $\calP$ with the same size and get the neighboring dataset $\calD'$.
% Let $\{\omega_t\}_{t\in[T]}$ be the output fo $\alg'(\calD')$. 
% Note that $Z_i$ is independent of $\{\omega_t\}$. 
% Lemma~\ref{lem:concentrated_grd} shows that, for any $\omega_t$,
% \begin{align*}
%     \Pr\big[\|\nabla\hell(\omega_t;Z_i)-\nabla\hL_\calP(\omega_t)\|\ge \frac{G\log(nd/\gamma)}{\sqrt{m}} \big]\le \gamma/n.
% \end{align*}

Let $f_{Z_i}(Z)$ be the density of $Z_i=Z$ and and $f_{Z_i}(Z\mid \theta_t'=\theta)$ be the density conditional on $\theta_t'=\theta$.
Similarly, we let $f_{\theta_t'}(\theta)$ and $f_{\theta_t'}(\theta\mid Z_i=Z)$ be the (conditional) density of $\theta_t'$.
For any $\theta,Z$, we have
\begin{align*}
    \frac{f_{Z_i}(Z\mid \theta_t'=\theta)}{f_{Z_i}(Z)}=\frac{f_{\theta_t'}(\theta\mid Z_i=Z)}{f_{\theta_t'}(\theta)}\le e^{2\epsilon},
\end{align*}
where the last inequality comes from the privacy guarantee of $\alg'$.

One has
\begin{align*}
    & ~\Pr_{Z_i,\theta_t'}\Big[\|\nabla\hell(\theta_t';Z_i)-\nabla\hL_\calP(\theta_t')\|\ge\tau\Big]\\
    =& \int\int f_{\theta_t'}(\theta)f_{Z_i}(Z\mid\theta_t'=\theta)\ind(\|\nabla\hell(\theta;Z)-\nabla\hL_\calP(\theta)\|\ge\tau)\d Z \d \theta\\
    \le & e^{2\epsilon}\int\int f_{\theta_t'}(\theta)f_{Z_i}(Z)\ind(\|\nabla\hell(\theta;Z)-\nabla\hL_\calP(\theta)\|\ge\tau)\d Z\d \theta.
\end{align*}
Note that for any $\theta$, we have
\begin{align*}
    \int f_{Z_i}(Z)\ind(\|\nabla\hell(\theta;Z)-\nabla\hL_\calP(\theta)\|\ge\tau)\d Z\le \gamma/n.
\end{align*}

% One has 
% \begin{align*}
%     & ~\Pr_{Z_i,\theta_t'}\Big[\|\nabla\hell(\theta_t';Z_i)-\nabla\hL_\calP(\theta_t')\|\ge\tau\Big]\\
%     =& \int f_\calD(\theta)\Pr_{Z_i\sim \calP^m}[\|\nabla\hell(\theta;Z_i)-\nabla\hL_{\calP}(\theta)\|\ge\tau\mid \theta_t'=\theta]\d \theta\\
%     \le & e^{2\epsilon}\int f_{\calD'}(\omega)\Pr_{Z_i\in \calP^m}[\|\nabla\hell(\omega;Z_i)-\nabla\hL_\calP(\omega)\|\ge\tau\mid \theta_t'=\omega]\d \omega\\
%     \le & e^{2\epsilon}\gamma/n,
% \end{align*}
% where the second line comes from the pure-DP guarantee of $\alg'$, and the last line comes from the independence between $Z_i$ and $\omega$.
Then by union bound, we know $\{\nabla\hell(\theta_t';Z_i)\}_{Z_i\in \calD}$ is $(\tau,e^{2\epsilon}\gamma)$-concentrated which completes the proof as $d_{TV}(\alg(\calD),\alg'(\calD))\le \delta/2Tmnd$.
\end{proof}

\subsection{Proof of Lemma~\ref{lem:stability}}
\stability*

\begin{proof}
We use~\Cref{lemma:sgd-stab} to upper bound the stability of our algorithm.
As we are using fixed step sizes $\eta_t=\eta$, \Cref{lemma:sgd-stab} implies that 
\begin{align*}
     \Lambda(\alg) 
     & \le 2G\sqrt{\sum_{t\in[T-1]}\eta_t^2}+2\sum_{t\in[T-1]}\eta_ta_t \\
     & \le 2G\eta\sqrt{T}+2\eta\sum_{t\in[T-1]} a_t
\end{align*}
Thus it suffices to upper bound $a_t$ for all $t\in[T]$.

By Lemma~\ref{lem:concentrated_theta_t}, we know for all $t\in[T]$, $\{\nabla\hell(\theta_t;Z_i)\}_{Z_i\in \calD}$ is $(\tau,\gamma')$-concentrated for
\begin{align*}
    \tau=\frac{G\log(nd/\gamma)}{\sqrt{m}}, \gamma'=T(e^{2\epsilon}\gamma+\frac{\delta}{2Tmnd}).
\end{align*}
Then by Theorem~\ref{thm:mean-est} and Lemma~\ref{lem:unbiased}, we know 
\begin{align*}
    \overline{g}_t\sim_{2\gamma'}  \frac{1}{nm}\sum_{Z_i\in \calD}\sum_{z_{i,j}\in Z_i}\nabla\hat{\ell}(\theta_t+y_j;z_{i,j})+\nu,
\end{align*}
where $\nu$ is Gaussian noise independent of the data.
Thus we have $a_t\le\frac{G}{nm}$. Setting $\gamma=\frac{\delta}{2Te^{2\epsilon}}$ completes the proof.
\end{proof}

\subsection{Proof of Theorem~\ref{thm:strongly_convex}}
\stronglyconvex*

\begin{proof}
Let $L_\calP^* = \min_{\theta^* \in \Theta}L_\calP(\theta^*) $, $\Delta_i:=\E[L_\calP(\theta_i)-L_\calP^*]$ and $R_i^2:=\E[\|\theta_i-\theta^*\|^2]$.
Due to the strong convexity, we know $\frac{1}{2}\mu R_i^2\le \Delta_i$.

Let $C>2$ be the constant hidden in the population loss bound in Theorem~\ref{thm:general_convex}.
For $i\ge0$, define $E_i:=\frac{4C^2G^2}{\mu}(\frac{1}{n_im}+\frac{d\log^2(n_idm/\delta)}{n_i^2\epsilon^2m})$ and we know $E_{i}/E_{i+1}\le 4$.
Define $D_i=16E_i\sqrt[2^{i}]{\frac{2G^2}{\mu}\cdot\frac{1}{16E_0}}$.
By the definition, we know
\begin{align*}
    \frac{D_{i+1}}{16E_{i+1}}=\sqrt[2^{i}]{\frac{2G^2}{\mu}\cdot\frac{1}{16E_0}} \le \sqrt{\frac{D_i}{16E_i}},\\
    \sqrt{D_iE_{i+1}}=4\sqrt{E_iE_{i+1}}\sqrt[2^{i}]{\frac{2G^2}{\mu}\cdot\frac{1}{4E_1}}\le 16E_{i+1}\sqrt[2^{i+1}]{\frac{2G^2}{\mu}\cdot\frac{1}{4E_1}}  = D_{i+1}.
\end{align*}

Hence by setting $k\ge\log\log (D_1/(16E_{1}))$, then $\frac{D_k}{16E_k}\le 2$.
Note that $E_0\ge \frac{4C^2G^2}{\mu nm}$, and $D_0=\frac{2G^2}{\mu}$.
We get $\frac{D_0}{16E_0}\le mn$ and setting $k=\log\log(mn)$ is large enough to get $D_k\le 32E_k$.
Note that $\Delta_0\le \frac{2G^2}{\mu}$ and $R_0\le \frac{2G}{\mu}$ by the strong convexity and assumption on being Lipschitz.

For $j\ge1$, set $\hat{R}_j=\sqrt{2D_{j-1}/\mu}$,  
$r_j=\frac{d^{1/4}\hat R_j}{\sqrt{T_j}},\eta_j=\frac{\hat R_j}{G}\cdot\min\{\frac{\sqrt{m}n_j\epsilon}{T_j\sqrt{d\log^2(mn_jd/\delta)}},\frac{1}{T_j^{3/4}},\frac{\sqrt{n_jm}}{T_j}\}$, $\tau= \frac{G\log(n_jdme^\epsilon T_j/\delta)}{\sqrt{m}}$ and $T_j=O(m^2n_j^2+mn_j\sqrt{d})$.
As $n_j\ge n/\log(nm)\ge \frac{\log(mdn/\delta)}{\epsilon}$ by the precondition, $R_0\le \hat{R}_1=\frac{2G}{\mu}$, by Theorem~\ref{thm:general_convex} and our parameter setting, recursively we know
\begin{align*}
    \Delta_{j}\le &CG\hat R_j \cdot (\frac{1}{\sqrt{n_jm}}+\frac{\sqrt{d\log^2(n_jdm/\delta)}}{n_j\epsilon \sqrt{m}})\\
    \le & CG\sqrt{2D_{j-1}/\mu} \cdot (\frac{1}{\sqrt{n_jm}}+\frac{\sqrt{d\log^2(n_jdm/\delta)}}{n_j\epsilon \sqrt{m}})\\
    \le & CG\sqrt{2D_{j-1}/\mu} \cdot \sqrt{\frac{\mu E_j}{2C^2G^2}}\\
    \le & \sqrt{D_{j-1}E_j}\le D_{j},
\end{align*}
where we used $\sqrt{a}+\sqrt{b}\le \sqrt{2(a+b)}$ for $a,b>0$.
We know $R_i\le \sqrt{\frac{2\Delta_i}{\mu}}\le \sqrt{\frac{2D_i}{\mu}}=\hat{R}_{i+1}$ recursively as well.
% and 
% $R_{i+1}^2\lesssim \frac{GR_i}{\mu}(\frac{1}{\sqrt{n_im}}+\frac{\sqrt{d\log(ndm)\log(1/\delta)}}{n_i\epsilon \sqrt{m}})$.

% We have
% \begin{align*}
%     \frac{\Delta_{i+1}}{8E_{i+1}}\le &\frac{\sqrt{2\Delta_iE_i}}{8E_{i+1}}\\
%     \le & \frac{E_i}{8E_{i+1}}\sqrt{\frac{2\Delta_i}{E_i}}\\
%     \le & \sqrt{\frac{\Delta_i}{8E_i}}.
% \end{align*}
% We know $D_1\le\frac{G^2}{\mu}$ and $E_1=\Omega(\frac{G^2}{\mu mn})$, and hence $\Delta_1/E_1=O(mn)$.
After $k$-iteration,
we have
\begin{align*}
    \E[L_\calP(\theta_k)-L_\calP^*]
    & =\Delta_k\le D_k \le ~ 32E_k=O(\frac{G^2}{\mu}(\frac{1}{nm}+\frac{d\log^2(ndm/\delta)}{n^2\epsilon^2m})).
\end{align*}
The statement follows.
\end{proof}
\end{document}